\titlespacing{\section}{0.9\parskip}{0.9\parskip}{0.9\parskip}
\definecolor{BOME}{RGB}{31, 119, 180}
\definecolor{BSG1}{RGB}{44, 160, 44}
\definecolor{BVFSM}{RGB}{255, 127, 14}
\definecolor{Penalty}{RGB}{214, 39, 40}
\title{BOME! Bilevel Optimization Made Easy: \\  A Simple First-Order Approach}
\author{%
  Mao Ye$^1$\thanks{Equal contribution. MY mainly contributes on developing theory and BL mainly contributes on conducting experiment. Both authors contribute equally on paper writing.}~~~~~Bo Liu$^1$$^*$~~~~~Stephen Wright$^2$~~~~~Peter Stone$^1$~~~~~Qiang Liu$^1$ \\
  $^1$The University of Texas at Austin ~~~~~ $^2$University of Wisconsin-Madison\\
  $^1$\texttt{\{my21,bliu,pstone,lqiang\}@cs.utexas.edu},~~~~~$^2$\texttt{swright@cs.wisc.edu}\\
}
\begin{document}
\long\def\commentp#1{{\bf **Peter: #1**}}

\global\long\def\th{\theta}%
\global\long\def\d{\delta^{*}}%
\global\long\def\hdq{\hat{\nabla}{q}}%
\global\long\def\lm{\lambda^{*}}%
\global\long\def\bd{\bar{d}}%
\global\long\def\hk{\hat{\mathcal{K}}}%
\global\long\def\K{\mathcal{K}}%
\global\long\def\ip{p}%

\global\long\def\ul{\eta}%
\global\long\def\uu{h }%

\global\long\def\bi{b_{1}}%
\global\long\def\bii{b_{2}}%
\global\long\def\biii{b_{3}}%
\global\long\def\biv{b_{4}}%
\global\long\def\bv{b_{5}}%
\global\long\def\bvi{b_{6}}%
\global\long\def\bviii{b_{7}}%

\newtheorem{theorem}{Theorem}
\newtheorem{lemma}{Lemma}
\newtheorem{proposition}{Proposition}
\newtheorem{assumption}{Assumption}
\newtheorem{definition}{Definition}

\newcommand{\sw}[1]{{\color{purple}{[\textbf{Steve:} #1}]}}
\newcommand{\mao}[1]{{\color{green}{[\textbf{Mao:} #1}]}}

\maketitle

\begin{abstract}
Bilevel optimization (BO) is useful for solving a variety of important machine learning problems including but not limited to hyperparameter optimization, meta-learning, continual learning, and reinforcement learning.
Conventional BO methods need to differentiate through the low-level optimization process with implicit differentiation, which requires expensive calculations related to the Hessian matrix. There has been a recent quest for first-order methods for BO,  but the methods proposed to date tend to be complicated and impractical for large-scale deep learning applications. In this work, we propose a simple first-order BO  algorithm that depends only on first-order gradient information, requires no implicit differentiation, and is practical and efficient for large-scale non-convex functions in deep learning. We provide non-asymptotic convergence analysis of the proposed method to stationary points for non-convex objectives and present empirical results that show its superior practical performance.
\end{abstract}

\vspace{-10pt}
\section{Introduction}
\vspace{-5pt}
We consider the bilevel optimization (BO) problem: 
\begin{equation}
    \min_{v,\theta} f\big(v, \theta\big)~~~~~s.t.~~~~~\theta \in \argmin_{\theta'} g\big(v, \theta'\big),
    \label{eq:bo}
\end{equation}
where the goal is to minimize an \emph{outer objective} $f$ whose variables include the solution of another minimization problem w.r.t an \emph{inner objective} $g$. 
The $\theta$ and $v$ are the \emph{inner} and \emph{outer} variables, respectively. We assume that $v \in \RR^m, \theta \in \RR^n$ and that
$g(v,\cdot)$ attains a minimum for each $v$.

BO is useful in a variety of machine learning tasks. 
A canonical example is hyperparameter optimization, 
in which case $f$ (resp. $g$) is the validation (resp. training) loss associated with a model parameter $\theta$ and 
a hyperparameter $v$, and we want to find the optimal hyperparameter $v$ to minimize the validation loss $f$ when $\theta$ is determined by minimizing the training loss; see e.g., \citet{pedregosa2016hyperparameter, franceschi2018bilevel}. 
Other applications include meta learning~\citep{franceschi2018bilevel}, continual learning~\citep{pham2020contextual}, reinforcement learning~\citep{yang2019provably}, and adversarial learning~\citep{jiang2021learning}. %
See \citet{liu2021investigating} for a recent survey.

BO is notoriously challenging due to its nested nature.
Despite the large literature,  most existing methods for BO are slow and unsatisfactory in various ways. 
For example, a major class of BO methods is based on direct gradient descent on the outer variable $v$ while viewing   the optimal inner variable $\theta^*(v) = \arg \min_\theta g(v,\theta)$  as a (uniquely defined) function of $v$.  
The key difficulty is to calculate the derivative $\dthetadv$ which may require expensive manipulation of the Hessian matrix of $g$ via the implicit differentiation theorem.
Another approach is to replace the low level optimization with the stationary condition $\dd_\theta g(v, \theta) =0$. 
This still requires Hessian information,
and more importantly, is unsuitable for nonconvex $g$ since it allows $\theta$ to be any stationary point of  $g(v, \cdot)$, not necessarily a minimizer.
To the best of our knowledge,  
the only existing fully first-order BO algorithms\footnote{By fully first-order, we mean methods that only require information of $f,g,\dd f,\dd g$, so this excludes methods that apply auto-differentiation or conjugate gradient that need multiple steps of matrix-vector computation.} are BSG-1~\citep{giovannelli2021bilevel} and
 BVFSM with its variants~\citep{liu2021value, liu2021valueseq, liu2021towards}; but BSG-1 relies on a non-vanishing approximation that does not yield convergence to the correct solution in general, and BVFSM is sensitive to hyper-parameters on large-scale practical problems and lacks a complete non-asymptotic analysis for the practically implemented algorithm. 

In this work, we seek a \emph{simple and fast fully first-order} BO method that can be used with non-convex functions including those appear in deep learning applications.
The idea is to reformulate \eqref{eq:bo} as a single-level constrained optimization problem 
using  the so-called value-function-based approach~\citep{dinh2010subdifferentials,dempe2020bilevel}. The constrained problem is then solved by stopping gradient on the single variable that contains the higher-order information and applying a simple first-order dynamic barrier gradient descent method based on a method of \citet{gong2021automatic}. Our contributions are: \textbf{1)} we introduce a novel and fast BO method by applying a modified dynamic barrier gradient descent on the value-function reformulation of BO; \textbf{2)} Theoretically, we establish the non-asymptotic convergence of our method to local stationary points (as measured by a special KKT loss) for non-convex $f$ and $g$. Importantly, to the best of our knowledge, this work is the first to establish non-asymptotic convergence rate for a fully first-order BO method.
This result is also much beyond that of \citet{gong2021automatic} and \citet{ji2021bilevel}. \textbf{3)} Empirically, the proposed method achieves better or comparable performance while being more efficient than state-of-the-art BO methods on a variety of benchmarks.

\vspace{-5pt}
\section{Background}
\vspace{-5pt}
\label{sec:background}
This section provides a brief background on traditional BO methods. Please see \citet{bard2013practical, dempe2020bilevel, dempe2002foundations} for overviews, and
\citet{liu2021investigating} for a survey on recent ML applications.

\textbf{Hypergradient Descent} 
Assume that the minimum of $g(v, \cdot)$ is unique for all $v$ 
so that we can  write $\theta^*(v) = \argmin_{\theta} g(v,\theta)$ as a function of $v$; 
this is known as the low-level singleton (LLS) assumption.   
The most straightforward approach to solving \eqref{eq:bo} is to conduct gradient descent on $f(\v, \sv)$ as a function of $v$. 
Note that 
$$
    \dd_v f(v,\th^*(v))
    = \ddv f(\v, \sv) + \textcolor{RedViolet}{\dthetadv}   \ddt f(\v, \sv).
$$
The difficulty is to compute $\textcolor{RedViolet}{\dthetadv}$. From implicit function theorem, it satisfies a linear equation:
\begin{equation}
    \label{eq:lineareq}
    \ddvt g(\v, \sv) + \ddtt g(\v, \sv) \textcolor{RedViolet}{\dthetadv} = 0.
\end{equation}
If $\ddtt g$ is invertible, we can solve for $\textcolor{RedViolet}{\dthetadv}$ and 
obtain a gradient update rule on $v$:  
\begin{equation*} 
\v_{k+1} \gets \v_k - \xi \left (
    \ddv f_k - \big(\ddvt g_k\big)^\top \big(\ddtt g_k \big)^{-1} \ddt f_k 
    \right ), 
\end{equation*}
where $k$ denotes iteration, $\dd_1 f_k = \dd_1 f(\v_k, \ss^*(\v_k))$ 
and similarly for the other terms. 
This approach is sometimes known as 
the \emph{hypergradient descent}. However, hypergradient descent is computationally expensive: Besides requiring evaluation of the inner optimum $\theta^*(v_k)$,  
the main computational bottleneck is to solve the linear equation in  \eqref{eq:lineareq}.   
Methods have been developed that  approximate \eqref{eq:lineareq} using conjugate gradient~\citep{pedregosa2016hyperparameter,rajeswaran2019meta,grazzi2020iteration}, Neumann series~\citep{liao2018reviving,lorraine2020optimizing}, and related variants \citep{ghadimi2018approximation}.  
Another popular approximation approach is to replace $\dthetadv$ with $\dd_v \th^{(T)}(\v)$, where  $\th^{(T)}(v)$ denotes the $T$-th iteration of  gradient  descent or other optimization steps on $g(v,\th)$ w.r.t. $\th$ starting from certain initialization.
The gradient $\dd_v \th^{(T)}(\v)$ can be calculated with 
auto-differentiation (AD) with either 
forward mode~\citep{franceschi2017forward}, backward mode~\citep{franceschi2018bilevel,franceschi2017forward,shaban2019truncated,li2021fully,arbel2021amortized} or their variants \citep{liu2020generic}. While these approaches claim to be first-order, they require many Hessian-vector or Jacobian-vector products at each iteration and are slow for large problems. 

Other examples of approximation methods include a 
neural surrogate method which approximates $\theta^*(v)$ and its gradient  $\dthetadv$ with neural networks~\citep{mackay2019self} and
Newton-Gaussian approximation of the Hessian matrix with covariance of gradient \citep{giovannelli2021bilevel}.
Both approaches  
introduce non-vanishing approximation error that is difficult to control.
The neural surrogate method also suffers from high training cost for the neural network.

\textbf{Stationary-Seeking Methods.}   
An alternative method is to  replace the argmin  constraint in \eqref{eq:bo} 
with the stationarity condition $\dd_\theta g(\v, \theta) =0,$ yielding a constrained optimization: 
\begin{equation} \label{eq:gbo}
    \min_{\v,\th}  f(\v, \th)~~~~~s.t.~~~~~
    \dd_\th g(\v, \th) = 0. 
\end{equation}
Algorithms for nonlinear equality constrained optimization can then be applied~\citep{mehra2021penalty}. 
The constraint in \eqref{eq:gbo} guarantees only that  $\theta$ is a stationary point of $g(v,\cdot)$, so it 
is  equivalent to \eqref{eq:bo} 
only when $g$ is convex w.r.t. $\theta$. 
Otherwise, the solution of \eqref{eq:gbo} can be a maximum or saddle point of $g$. This 
makes it problematic for deep learning, where non-convex functions are  pervasive.

\vspace{-5pt}
\section{Method}
\vspace{-5pt}
\label{sec:method}
We consider a \emph{value function approach} \citep[see e.g.,][]{outrata1990numerical,ye1995optimality, liu2021value}, 
which yields natural first-order algorithms for non-convex $g$ and requires no computation of Hessian matrices.
It is based on the observation that \eqref{eq:bo} is equivalent to the following constrained optimization (even for non-convex $g$): 
\begin{equation}
    \min_{v, \theta} \, f(v, \th)~~~~~{s.t.}~~~~~
    q(v, \th) := g(v,\th) - g^*(v) \leq 0, 
\label{eq:bo_constrain}
\end{equation}
where $
 g^*(v) := \min_{\th} g(v, \th) =  g(v, \th^*(v))$ is known as the value function.  
Compared with the {hypergradient approach}, this formulation 
\textbf{does not require calculation of the implicit derivative $\nabla_v \th^*(v)$}: Although $g^*(v)$ depends on $\vs$, its derivative $\nabla_v g^*(v)$ does not depend on  $\nabla_v \th^*(v)$, by Danskin's theorem: \bbb  
\nabla_v g^*(v) = \nabla_1 g(v, \vs) + 
 \dd_v \vs  \nabla_2 g(v, \vs) = \nabla_1 g(v, \vs),  \label{eq:dfd}
\eee  
where the second term in \eqref{eq:dfd} vanishes because we have  $\nabla_2 g(v, \vs) = 0$ by definition of the optimum $\vs$. 
Therefore, provided that we can evaluate $\vs$ at each iteration, solving \eqref{eq:bo_constrain}   yields an  algorithm for BO that requires no Hessian computation. 
In this work, we make use of the dynamic barrier gradient descent algorithm  of \citet{gong2021automatic} to solve \eqref{eq:bo_constrain}.
This is an elementary first-order algorithm for solving constrained optimization, but it applies only to a special case of the bilevel problem and must be extended to handle the general case we consider here.

\textbf{Dynamic Barrier Gradient Descent.} 
The idea is to iterative update the parameter $(v,\th)$ 
to reduce $f$  while controlling the decrease of the constraint ${q}$, ensuring that ${q}$ decreases whenever $q>0$. Specifically, denote $\xi$ as the step size, the update at each step is
\bbb 
\label{eq:updadd} 
(\v_{k+1}, \ss_{k+1}) 
\gets (\v_{k}, \ss_{k}) 
- \xi \delta_k,
\eee 
\bbb \label{equ:bar_opt} 
\text{where}~~~\delta_k = \argmin_{\delta} \norm{\dd f(v_k, \th_k) - \delta }^2 
~~\text{s.t.}~~ 
\langle \dd q(v_k, \th_k),  \delta \rangle \geq \phi_k. 
\eee  
Here %
$\dd f_k := \dd_{(\v,\th)} f(v_k, \th_k)$, 
$\dd q_k := \dd_{(\v,\th)} q(v_k, \th_k)$, 
and $\phi_k\geq 0$ is a non-negative control barrier 
and should be strictly positive $\phi_k >0$ in the non-stationary points of $q$: the lower bound on the inner product of $\dd q(v_k,\theta_k)$  and $\delta_k$ ensures that the update in \eqref{eq:updadd} can only decrease $q$ (when step size $\xi$ is sufficiently small) until it reaches stationary. 
In addition, by enforcing $\delta_k$ to be close to $\dd f(v_k, \theta_k)$ in \eqref{equ:bar_opt}, 
we decrease the objective $f$ as much as possible so long as it does not conflict with descent of $q$. 
    
    Two straightforward choices of $\phi_k$ that satisfies the condition above are $\phi_{k} = \eta q(v_k, \th_k)$ and 
    $\phi_k = \eta 
    \norm{\dd q(v_k, \th_k)}^2$ with 
    $\eta > 0$. 
    We find that both choices of $\phi_k$ work well empirically and use $\phi_k = \eta 
    \norm{\dd q(v_k, \th_k)}^2$ as the default (see Section~\ref{sec:observations}). 

The optimization in \eqref{equ:bar_opt} yields a simple closed form solution: 
\begin{align*}
    \delta_{k}=\nabla f(v_k,\th_k)+\lambda_{k}\nabla q(v_k,\th_k), ~~
    \text{with }~~ \lambda_{k}=\max\left(\frac{\phi_{k}-\left\langle \nabla f(v_k,\th_k),\nabla q(v_k,\th_k)\right\rangle }{||\nabla q(v_k,\th_k)||^{2}},~~0\right),
\end{align*}
and $\lambda_k=0$  
in the case of $||\nabla q(v_k,\th_k)||=0$.

\textbf{Practical Approximation.}
The main bottleneck of the method above is to calculate the $q(v_k,\th_k)$ and $\nabla q(v_{k}, \theta_k)$
which requires evaluation of $\th^{*}(v_{k})$. In practice, we approximate
$\th^{*}(v_{k})$ by $\th_{k}^{(T)}$, where $\th_{k}^{(T)}$ is obtained by running $T$ steps of gradient descent of $g(v_k, \cdot)$ w.r.t.
$\th$ starting from $\th_{k}$. That is, we set $\th_{k}^{(0)}=\th_{k}$
and let 
\begin{align}\label{equ:thetaTT}
    \hspace{-0.3cm}
    \th^{(t+1)}_k = \th^{(t)}_k - \alpha \nabla_\th g(v_k, \th^{(t)}_k), ~~~ t = 0,\ldots, T-1, 
\end{align}
for some step size parameter $\alpha>0$. 
We obtain an estimate of $q(v,\th)$ at iteration $k$ by replacing $\th^{*}(v_{k})$ with $\th_{k}^{(T)}$:  %
$
\hat{q}(v,\th)=g(v,\th)-g(v,\textcolor{black}{\th^{(T)}_k}).
$

We substitute $\hat{q}(v_k,\th_k)$  into (\ref{equ:bar_opt}) to obtain the update direction $\delta_k$. 
The full procedure is summarized in Algorithm~\ref{alg:bome}.
Note that the $\textcolor{black}{\th^{(T)}_k}$ is viewed as a constant when defining $\hat{q}(v,\th)$ and hence no differentiation of $\th^{(T)}_k$ is performed when calculating the gradient  $\dd \hat{q}$. This differs from {truncated back-propagation methods} \citep[e.g.,][]{shaban2019truncated} which differentiate through $\theta^{(T)}_k$ as a function of $v$. 
{
Alternatively, it can be viewed as a plug-in estimator. We know that
\begin{align*}
\nabla_{v_{k}}q(v_{k},\th_{k}) & =\nabla_{v_{k}}g(v_{k},\th_{k})-\nabla_{v_{k}}g(v_{k},\th^{*}(v_{k}))\\
 & =\nabla_{v_{k}}g(v_{k},\th_{k})-\left[\nabla_{1}g(v_{k},\th^{*}(v_{k}))+\nabla_{v_{k}}\th^{*}(v_{k})\nabla_{2}g(v_{k},\th^{*}(v_{k}))\right]\\
 & =\nabla_{v_{k}}g(v_{k},\th_{k})-\nabla_{1}g(v_{k},\th^{*}(v_{k})),
\end{align*}
where $\nabla_{1}$ denotes taking the derivative w.r.t. the first variable.
Since $\th^{*}(v_{k})$ is unknown, we estimate $\nabla_{1}g(v_{k},\th^{*}(v_{k}))$ by plugging-in $\th_{k}^{(T)}$ to approximate $\th^{*}(v_{k})$:
\[
\nabla_{v_{k}}\hat{q}(v_{k},\th_{k})=\nabla_{v_{k}}g(v_{k},\th_{k})-\nabla_{1}g(v_{k},\th^{*}(v_{k})).
\]
}
Each step of Algorithm~\ref{alg:bome} can be viewed as taking one step (starting from $v_k,\th_k$) toward solving an approximate constrained optimization problem: 
\bbb \label{eq:ghatopt}
\min_{\v, \th} f(\v, \th)
~~~~~s.t.~~~~~ g(\v, \th) \leq g( v, \th^{(T)}_k), 
\eee 
which 
can be viewed as a relaxation of 
the exact constrained optimization formulation
\eqref{eq:bo_constrain}, 
because $\{(\v,\th)\colon g(\v,\th)\leq g^*(v)\}$ is a subset of $\{(\v,\th)\colon g(\v,\th) \leq g(\v, \theta^{(T)}_k)\}$. 

\begin{algorithm*}[t]
    \vspace{-0.04cm}
    \caption{Bilevel Optimization Made Easy (BOME!)}
    \begin{algorithmic}
    \STATE \textbf{Goal}: Solve $\min_{v,\theta} f(v,\theta)$ ~~$s.t.$~~ $\theta \in \argmin g(v, \cdot)$. 
    \STATE \textbf{Input}: Initialization $(v_0, \th_0)$; 
     inner step $T$; outer and inner stepsize $\xi$, $\alpha$ (set $\alpha = \xi$  by default).  
        \vspace{.05em} 
        \FOR{ iteration $k$}  
            \vspace{.1em} 
            \STATE 1. Get $\textcolor{MidnightBlue}{\theta_k^{(T)}}$ by $T$ steps of gradient descent on $g(v_k, \cdot)$ starting from $\th_k$ (See Eq.~\eqref{equ:thetaTT}).
            \STATE 
            2. Set $\hat{q}(v,  \theta)=g(v,\th)-g(v,\textcolor{MidnightBlue}{\th_{k}^{(T)}})$. %
            \vspace{.2em} 
            \STATE 3. Update $(v, \theta): (v_{k+1},\theta_{k+1}) \gets (v_{k},\theta_k) - \xi 
            ( \dd f(v_{k},\theta_k)  + \lambda_k \dd \hat q(v_{k},\theta_k))$ 
            \bb 
             ~~~~~~\text{where}~~~~~
             \lambda_k = %
            \max\left (  \frac{\phi_k - \langle\dd f(v_k, \theta_k),~~ \dd\hat q(v_k,\theta_k)  \rangle  }{\norm{\dd\hat q(v_k,\theta_k) }^2},~~0  \right ),
            \vspace{-.5em} 
            \ee 
            \vspace{-.2em}
        ~~~~and $\phi_k = \eta ||\dd \hat q(v_{k},\theta_k) ||^2$ (default),  or
        $\phi_k = \eta \hat{q}(v_{k},\theta_k) $ with  $\eta>0$. %
                    \vspace{.5\baselineskip}
        \STATE \textbf{Remark}: 
         ~1)  
        We treat $\textcolor{MidnightBlue}{\theta_k^{(T)}}$ as constant when taking derivative of $\hat q$; 
         ~2) In practice, step 3 can have separate stepsize $(\xi_v, \xi_\th)$ and use standard optimizers like Adam~\citep{kingma2014adam}; ~3) We use $\eta=0.5$ and $T=10$ by default. 
        \ENDFOR
    \end{algorithmic}
    \label{alg:bome}
\end{algorithm*}

\vspace{-5pt}
\section{Analysis}
\vspace{-5pt}
We first elaborate the KKT condition of \eqref{eq:bo_constrain} (Section~\ref{sec:kkt}), then quantify the convergence of the method by how fast it meets the KKT condition. 
We consider both the case when $g$ satisfies the 
 Polyak-{\L}ojasiewicz (PL) inequality w.r.t. $\theta$,  
 hence having a unique global optimum (Section~\ref{sec:pl_theory}), 
 and when $g$ have multiple local minimum (Section~\ref{sec:kl_theory}). 
\vspace{-5pt}
\subsection{KKT Conditions} \label{sec:kkt} 
\vspace{-5pt}
Consider a general constrained optimization of form $\min f(v,\ss)$ s.t. $q(v,\ss)\leq0$.  
Under proper regularity conditions known as constraint quantifications \citep{nw2006numerical}, 
the first-order KKT condition gives a necessary condition for a feasible point $(v^*, \ss^*)$ with $q(v^*,\ss^*)\leq 0$ to be a local optimum of \eqref{eq:bo_constrain}: There exists 
a Lagrangian multiplier  $\lambda^* \in  [0, +\infty)$, such that
\bbb \label{eq:kkt0}  
\begin{split} 
& \dd f(v^*, \ss^*) + \lambda^* \dd q(v^*, \ss^*) =0,  \\
\end{split}
\eee 
and $\lambda^*$  satisfies the complementary slackness condition $\lambda^* q(v^*, \ss^*) =0$. 
A common regularity condition to ensure \eqref{eq:kkt0} is the \emph{constant rank constraint quantification (CRCQ)} condition \citep{janin1984directional}. 
\begin{definition}
A point $(v^*, \theta^*)$  is said to satisfy CRCQ with
a function $h$ if  the rank of the Jacobian matrix $\dd  h(v,\theta)$ is constant in a neighborhood of $(v^*,\theta^*)$. 
\end{definition}

Unfortunately,
\textbf{the KKT condition in \eqref{eq:kkt0} does not hold for the bilevel optimization in \eqref{eq:bo_constrain}.} 
The CRCQ condition does not typically hold for this problem. 
This is because the minimum of $q$ is zero, and hence if 
$(v^*,\theta^*)$ is feasible for \eqref{eq:bo_constrain}, 
then $(v^*,\theta^*)$ must attain the minimum of $q$, yielding $q(v^*,\theta^*)=0$ and $\dd q(v^*, \theta^*)=0$ if $q$ is smooth; 
but we could not have $\dd q(v,\theta) =0$ uniformly in a neighborhood of $(v^*,\theta^*)$ (hence CRCQ fails) unless $q$ is a constant  around $(v^*,\theta^*)$. In addition, 
 if KKT \eqref{eq:kkt0} holds,  
 we would have  $\dd f(v^*, \ss^*) =-\lambda^*\dd q(v^*, \theta^*)= 0$ which happens  only in the rare case when $(v^*,\theta^*)$ is a stationary point of both $f,g$.

Instead, one can establish a KKT condition of BO 
through the form in \eqref{eq:gbo},
{because there is nothing special that prevents $(v^*,\theta^*)$ from satisfying CRCQ with $\dd_\theta q = \dd_\theta g$ (even though we just showed that it is difficult to have CRCQ with $q$)}. 
Assume $f$ and $\dd_\theta q$   are continuously differentiable, and $(v^*, \th^*)$ is a point satisfying $\dd_\theta q(v^*, \theta^*) =0$ and CRCQ with $\dd_\theta q$.  
Then by the typical first order KKT condition of \eqref{eq:gbo}, there exists a 
Lagrange 
multiplier $\omega^* \in \RR^n$ 
such that  
\bbb\label{eq:kkt2}
\nabla f(v^*,\th^*)+\nabla (\dd_\theta q(v^*,\th^*))\omega^*=0.
\eee 
This condition can be viewed as the limit of a sequence of \eqref{eq:kkt0} 
in the following way: 
assume we relax the constraint in \eqref{eq:bo_constrain} to $q(v,\theta)\leq c_k$ 
where $c_k$ is a sequence of positive numbers that converge to zero, then we can establish \eqref{eq:kkt0} for each $c_k>0$ and pass the limit to zero to yield \eqref{eq:kkt2}.

\begin{proposition} \label{pro:kkt}
Assume that $f$, $q$, $\nabla q$ are continuously differentiable and $\norm{\nabla{f}}, f$ is bounded.
For a {feasible} 
point $(v^*, \theta^*)$ of \eqref{eq:bo_constrain} that satisfies CRCQ with {$\dd_\theta q$}, 
if $(v^*, \theta^*)$
is the limit of a sequence $\{(v_k, \theta_k)\}_{k=1}^\infty$ satisfying
 $q(v_{k},\th_{k})\neq0$ $\forall k$,  
 and there exists a sequence  $\{\lambda_k\} \subset [0,\infty)$ such that 
\bb 
\nabla f(v_{k},\th_{k})+\lambda_k \nabla q(v_{k},\th_{k}) \to 0,  &&
q(v_k, \th_k) \to 0, 
\ee 
  as $k\to +\infty$, 
then $(v^*,\th^*)$ satisfies \eqref{eq:kkt2}. 
\end{proposition}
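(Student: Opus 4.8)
The statement is an instance of the classical principle that an \emph{approximate} KKT sequence, together with a constant-rank constraint qualification, converges to an exact KKT point; the twist here is that the approximate condition is written for the scalar constraint $q\le 0$, whereas the limiting condition \eqref{eq:kkt2} is written for the vector constraint $\partial_\theta q=0$. The plan is therefore to (i) first record that the limit point has $\partial_\theta q(v^*,\theta^*)=0$, so that \eqref{eq:kkt2} is even meaningful, and (ii) reduce the whole claim to showing that $-\nabla f(v^*,\theta^*)$ lies in the range of $J^*:=\nabla(\partial_\theta q(v^*,\theta^*))$, since any such membership produces the required multiplier $\omega^*$. For (i), note $q(v,\theta)=g(v,\theta)-g^*(v)\ge 0$ everywhere, so feasibility $q(v^*,\theta^*)\le 0$ forces $q(v^*,\theta^*)=0$; hence $\theta^*$ minimizes $g(v^*,\cdot)$ and $\partial_\theta q(v^*,\theta^*)=\partial_\theta g(v^*,\theta^*)=0$.

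The bridge between the scalar multiplier $\lambda_k$ and the vector multiplier $\omega^*$ is an exact first-order identity for $\nabla q$. Using $\partial_\theta g(v,\theta^*(v))=0$ and applying the fundamental theorem of calculus along the segment from $\theta^*(v)$ to $\theta$ (with $v$ held fixed) to each coordinate of $\nabla q=(\,\partial_v g(v,\theta)-\nabla g^*(v),\ \partial_\theta g(v,\theta)\,)$ gives $\nabla q(v,\theta)=\bar J(v,\theta)\,(\theta-\theta^*(v))$, where $\bar J(v,\theta)=\int_0^1 \nabla(\partial_\theta q)(v,\theta^*(v)+s(\theta-\theta^*(v)))\,ds$ is an averaged Jacobian and I used Danskin's identity $\nabla g^*(v)=\partial_1 g(v,\theta^*(v))$ from \eqref{eq:dfd}. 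Because $(v_k,\theta_k)\to(v^*,\theta^*)$ and the minimizer branch is continuous ($\theta^*(v_k)\to\theta^*$), the segment collapses to $(v^*,\theta^*)$ and $\bar J_k:=\bar J(v_k,\theta_k)\to J^*$. Setting $\omega_k:=\lambda_k(\theta_k-\theta^*(v_k))$, the hypothesis $\nabla f(v_k,\theta_k)+\lambda_k\nabla q(v_k,\theta_k)\to 0$ becomes $\nabla f(v_k,\theta_k)+\bar J_k\,\omega_k\to 0$, i.e. $\bar J_k\omega_k\to -\nabla f(v^*,\theta^*)$.

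It then remains to extract a convergent subsequence $\omega_{k_j}\to\omega^*$, since passing to the limit in $\nabla f(v_k,\theta_k)+\bar J_k\omega_k\to0$ immediately yields $\nabla f(v^*,\theta^*)+J^*\omega^*=0$, which is \eqref{eq:kkt2}. This boundedness of the multiplier sequence is the only real obstacle, and it is exactly where CRCQ enters. I would argue by contradiction in normalized variables: if $\|\omega_k\|\to\infty$, dividing by $\|\omega_k\|$ kills the bounded term $\nabla f$, so a subsequence of $u_k:=\omega_k/\|\omega_k\|$ converges to some $u$ with $\|u\|=1$ and $J^*u=0$, a nontrivial kernel element. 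The constant-rank hypothesis makes $\dim\ker\nabla(\partial_\theta q)$ locally constant and the kernel vary continuously near $(v^*,\theta^*)$, so one can subtract from $\omega_k$ a diverging component lying (approximately) in these kernels without changing $\bar J_k\omega_k$ in the limit, producing a strictly smaller multiplier and contradicting a minimal-norm choice of $\{\omega_k\}$. I expect the delicate point to be precisely this step, with the extra wrinkle that $\bar J_k$ is an \emph{averaged} Jacobian rather than $\nabla(\partial_\theta q)(v_k,\theta_k)$ itself; one must check that constant rank of $\nabla(\partial_\theta q)$ in a neighborhood, together with $\bar J_k\to J^*$, still lets the kernel subtraction go through, e.g. by working with the minimum-norm solution of $\bar J_k\omega=\bar J_k\omega_k$ and bounding its smallest nonzero singular values by continuity.
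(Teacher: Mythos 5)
Your proposal attempts a self-contained proof of precisely the step that the paper does not prove itself: the paper's proof merely verifies the hypotheses of Proposition~6.3 of \citet{gong2021automatic} (the $\lambda_k$ are finite, $q$ is lower bounded since $q\ge 0$, and $\nabla q(v_k,\theta_k)\to\nabla q(v^*,\theta^*)=0$ because feasibility plus $q\ge 0$ makes $(v^*,\theta^*)$ a global minimizer of $q$) and then invokes that result as a black box. Your step~(i) reproduces this verification, but the rest of your argument has a genuine gap at its very first move. The bridge identity $\nabla q(v,\theta)=\bar J(v,\theta)\,(\theta-\theta^*(v))$ is fine, but the claim that ``the minimizer branch is continuous, $\theta^*(v_k)\to\theta^*$'' is unsupported and false in general: the proposition deliberately does not assume the low-level singleton condition (handling non-unique $\argmin_\theta g(v,\cdot)$ is the point of the value-function formulation), so $g(v^*,\cdot)$ may have several minimizers and a selection $\theta^*(v_k)$ need not converge to the particular $\theta^*$ in the statement. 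Concretely, for $g(v,\theta)=(\theta^2-1)^2+v^2\theta$ one checks that $g^*$ is $C^2$ (so $q,\nabla q$ are continuously differentiable), that $(v^*,\theta^*)=(0,+1)$ is feasible and satisfies CRCQ with $\partial_\theta q$, yet the global minimizer for $v\neq 0$ lies near $-1$; the segments from $\theta^*(v_k)$ to $\theta_k$ then converge to the whole interval $[-1,1]$, and the averaged Jacobian $\bar J_k$ converges to $\int_0^1 \nabla(\partial_\theta q)\bigl(0,-1+2s\bigr)\,ds$, which in this example is the zero matrix rather than $J^*=(0,8)^{\top}$. Your limiting identity then degenerates and says nothing about \eqref{eq:kkt2}.

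The second gap is the multiplier-boundedness step, which you correctly identify as the crux and as the place where CRCQ must enter, but which is only gestured at. The contradiction you sketch appeals to ``a minimal-norm choice of $\{\omega_k\}$,'' yet $\omega_k=\lambda_k(\theta_k-\theta^*(v_k))$ is determined by the data, not chosen; if you instead pass to the minimum-norm solution of $\bar J_k\omega=\bar J_k\omega_k$, you need the smallest nonzero singular value of $\bar J_k$ to stay bounded away from zero, and CRCQ does not provide this: it fixes the rank of $\nabla(\partial_\theta q)$ at actual points near $(v^*,\theta^*)$, not of the averaged matrices $\bar J_k$, whose rank can strictly exceed that of $J^*$ with the excess singular values tending to zero (rank is only lower semicontinuous under limits). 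That is exactly the regime in which minimum-norm solutions blow up, so the ``delicate point'' you flag is not a technicality but an unresolved hole. Since this extraction of $\omega^*$ under CRCQ is the entire analytic content of the statement --- the part the paper delegates to \citet{gong2021automatic} --- the proposal as written does not constitute a proof, though its reduction scheme (FTC identity, normalization, kernel analysis) is a reasonable skeleton for one if both gaps were repaired.
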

This motivates us to use the following function 
as a measure of stationarity of the solution returned by the algorithm: 
\[
\K(v,\th)=\underset{\text{local\ improvement}}{\underbrace{{\textstyle \min_{\lambda\ge0}}||\nabla f(v,\th)+\lambda\nabla q(v,\th)||^{2}}}+\underset{\text{feasibility}}{\underbrace{q(v,\th)}}.
\]
The hope is to have an algorithm that generates a sequence $\{(v_k, \theta_k)\}_{k=0}^\infty$ that satisfies $\K(v_k, \theta_k) \to 0$ as $k\to+\infty$.

Intuitively, the first term in $\K(v,\th)$ measures how much $\dd f$ conflicts with $\dd q$ (how much we can decrease $f$ without increasing $q$), as it is equal to the squared $\ell_2$ norm of the solution to the problem
$
\min_{\delta}||\nabla f-\delta||^{2}\ s.t.\ \left\langle \nabla q,\delta\right\rangle \ge0.
$
The second term in $\K$ measures how much the $\argmin g$ constraint is satisfied. 
\vspace{-5pt}
\subsection{Convergence with unimodal $g$} \label{sec:pl_theory}
\vspace{-5pt}
We first present the convergence rate 
when assuming $g(v,\cdot)$ has unique minimizer and satisfies the   Polyak-{\L}ojasiewicz (PL) inequality for all $v$, which %
guarantees a linear convergence rate of the gradient descent on the low level problem. 
\vspace{-3pt}
\begin{assumption}[PL-inequality] \label{asm:PL-inequality}
 Given any $v$, assume $g(v,\cdot)$ has a unique minimizer denoted as $\th^*(v)$. Also assume there exists $\kappa>0$ such that for any $(v,\theta)$, $\left\Vert \nabla_{\th}g(v,\th)\right\Vert ^{2}\ge\kappa(g(v,\th)-g(v,\th^*(v)))$.
\end{assumption}
\vspace{-3pt}
The PL inequality %
gives a characterization on how
a small gradient norm implies global optimality. 
It is implied from, but weaker than strongly convexity. 
The PL-inequality is more appealing than convexity because some modern over-parameterized deep neural networks have been shown to satisfy the PL-inequality along the trajectory of gradient descent. 
See, for example, \citet{frei2021proxy,song2021subquadratic,liu2022loss} for more discussion.
\vspace{-3pt}
\begin{assumption}[Smoothness] \label{asm:Smoothness}
$f$ and $g$ are differentiable, and $\dd f$ and $\dd g$ are $L$-Lipschitz w.r.t. the joint inputs $(v,\theta)$ for some $L\in(0,+\infty)$. 
\end{assumption} 
\vspace{-3pt}
\begin{assumption}[Boundedness] \label{asm:Boundedness}
 There exists a constant $M<\infty$ such that 
 $\norm{\nabla g(v,\th)}$, $\norm{\nabla f(v,\th)}$, $|f(v,\th)|$ and $|g(v,\th)|$ are all upper bounded by $M$ 
 for any $(v,\th)$. 
\end{assumption} 
\vspace{-3pt}
Assumptions \ref{asm:Smoothness} and \ref{asm:Boundedness} are  both standard in optimization.
\vspace{-3pt}
\begin{theorem} \label{thm:Convergence k}
Consider Algorithm~\ref{alg:bome} with $\xi, \alpha\le1/L$, $\phi_k = \eta \norm{\nabla \hat{q}(v_k,\th_k)}^2$,  and $\ul>0$. 
Suppose that Assumptions~\ref{asm:PL-inequality}, \ref{asm:Smoothness}, and \ref{asm:Boundedness}  hold. Then there exists a constant $c$ depending on $\alpha,\kappa,\ul ,L$ such that when 
$T \geq c$, we have for any $K \geq 0$, 
\begin{align*}
 \min_{k\leq K}\, \K(v_{k},\th_{k})\!=\!O\!\left ( \sqrt{\xi }  + \sqrt{\frac{q_0}{\xi K}}  + \frac{1}{\xi K}  + \exp(- b T) \right )
\end{align*}
where $q_0 = q(v_{0},\th_{0})$, and $b>0$ is a constant depending on $\kappa$, $L$, and {$\alpha$}.
\end{theorem}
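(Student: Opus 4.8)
The plan is to track two scalar quantities along the iterates---the \emph{exact} infeasibility $q_k := q(v_k,\theta_k)=g(v_k,\theta_k)-g^*(v_k)$ and the realized step $\delta_k=\nabla f(v_k,\theta_k)+\lambda_k\nabla\hat q(v_k,\theta_k)$---and to show that the two summands of $\mathcal{K}$ are controlled by $q_k$ and $\|\delta_k\|^2$ respectively, up to errors that vanish geometrically in $T$. Throughout I write $\nabla q_k,\nabla\hat q_k,\nabla f_k$ for the gradients at $(v_k,\theta_k)$, and I use that $q_k\ge0$ always (since $g^*$ is the inner minimum).

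First I would quantify the inner-loop error. Under Assumptions~\ref{asm:PL-inequality}--\ref{asm:Smoothness}, $T$ steps of gradient descent with $\alpha\le 1/L$ contract the inner gap geometrically, $g(v_k,\theta_k^{(T)})-g^*(v_k)\le(1-c_0)^{T} q_k$ for some $c_0\in(0,1)$ depending on $\alpha,\kappa,L$, so the value error obeys $0\le q_k-\hat q_k\le(1-c_0)^{T}q_k$. Converting this value gap to an iterate gap through the quadratic-growth property implied by PL, and then applying $L$-Lipschitzness of $\nabla g$, gives $\|\nabla q_k-\nabla\hat q_k\|\le L\|\theta^*(v_k)-\theta_k^{(T)}\|\le \frac{\sqrt2 L}{\sqrt\kappa}(1-c_0)^{T/2}\sqrt{q_k}$, where I use $\nabla_\theta\hat q_k=\nabla_\theta g(v_k,\theta_k)$ so only the $v$-block of $\nabla\hat q_k$ is in error. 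The crucial observation is that PL also yields $\|\nabla\hat q_k\|^2\ge\|\nabla_\theta g(v_k,\theta_k)\|^2\ge\kappa q_k$, hence $\sqrt{q_k}\le\|\nabla\hat q_k\|/\sqrt\kappa$ and the gradient error is \emph{relative}: $\|\nabla q_k-\nabla\hat q_k\|\le\rho\,\|\nabla\hat q_k\|$ with $\rho=O((1-c_0)^{T/2})$. This is exactly what lets me absorb the a priori unbounded multiplier $\lambda_k$: although $\lambda_k$ can blow up when $\|\nabla\hat q_k\|\to0$, the product $\lambda_k\|\nabla\hat q_k\|$ is uniformly bounded by Assumption~\ref{asm:Boundedness}, so $\lambda_k\|\nabla q_k-\nabla\hat q_k\|\le\rho\,\lambda_k\|\nabla\hat q_k\|=O(\exp(-bT))$. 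I expect this relative-error control to be the main obstacle, and the key idea, of the proof.

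Second I would derive two descent inequalities. For feasibility, PL forces $\theta^*$ to be Lipschitz (combine $\nabla_\theta g(v,\theta^*(v))=0$, $L$-smoothness, and quadratic growth), so $g^*$ and hence $q$ are smooth with some constant $L_q=L_q(\kappa,L)$; the descent lemma then gives $q_{k+1}\le q_k-\xi\langle\nabla q_k,\delta_k\rangle+\frac{L_q\xi^2}{2}\|\delta_k\|^2$, and since $\langle\nabla q_k,\delta_k\rangle\ge\langle\nabla\hat q_k,\delta_k\rangle-\rho\|\nabla\hat q_k\|\,\|\delta_k\|\ge\phi_k-\rho\|\nabla\hat q_k\|\,\|\delta_k\|$ with $\phi_k=\eta\|\nabla\hat q_k\|^2$, telescoping (using $q_K\ge0$, $\|\delta_k\|\le D$) bounds $\frac1K\sum_k\|\nabla\hat q_k\|^2$, and through $\|\nabla\hat q_k\|^2\ge\kappa q_k$ also $\frac1K\sum_k q_k$. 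For optimality, the closed form of the barrier yields the identity $\langle\nabla f_k,\delta_k\rangle=\|\delta_k\|^2-\lambda_k\phi_k$ (using $\langle\nabla\hat q_k,\delta_k\rangle=\phi_k$ when $\lambda_k>0$, and triviality when $\lambda_k=0$); plugging into the descent lemma for $f$ with $\xi\le1/L$ gives $f_{k+1}\le f_k-\frac{\xi}{2}\|\delta_k\|^2+\xi\lambda_k\phi_k$, and telescoping with $|f|\le M$ bounds $\frac1K\sum_k\|\delta_k\|^2$ in terms of $\frac1K\sum_k\lambda_k\phi_k$.

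Third I would close the coupling between the two bounds, which is linked through $\sum_k\lambda_k\phi_k$. Using boundedness of $\lambda_k\|\nabla\hat q_k\|$ I bound $\lambda_k\phi_k=\eta(\lambda_k\|\nabla\hat q_k\|)\|\nabla\hat q_k\|\le C\|\nabla\hat q_k\|$, so by Cauchy--Schwarz $\frac1K\sum_k\lambda_k\phi_k\le C'\sqrt{\frac1K\sum_k\|\nabla\hat q_k\|^2}$. Substituting the feasibility bound and absorbing the resulting cross-term with Young's inequality yields $\frac1K\sum_k\|\nabla\hat q_k\|^2=O\!\big(\frac{q_0}{\xi K}+\xi+\exp(-bT)\big)$ and $\frac1K\sum_k\|\delta_k\|^2=O\!\big(\frac{1}{\xi K}+\sqrt\xi+\sqrt{\frac{q_0}{\xi K}}+\exp(-bT)\big)$, where any stray $\frac{1}{\sqrt K}$ term is dominated by $\max(\sqrt\xi,\frac{1}{\xi K})$. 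Finally I convert back to $\mathcal{K}$: taking the algorithm's $\lambda_k\ge0$ as a feasible point in the inner minimization, $\min_{\lambda\ge0}\|\nabla f_k+\lambda\nabla q_k\|^2\le\big(\|\delta_k\|+\lambda_k\|\nabla q_k-\nabla\hat q_k\|\big)^2\le 2\|\delta_k\|^2+O(\exp(-2bT))$, so $\mathcal{K}(v_k,\theta_k)\le 2\|\delta_k\|^2+q_k+O(\exp(-bT))$; bounding $\min_{k\le K}\mathcal{K}$ by the average and inserting the two displayed bounds gives the stated rate, with $b=-\frac12\log(1-c_0)$ and the requirement $T\ge c(\alpha,\kappa,\eta,L)$ arising from forcing $\rho$ small enough that the coupling and Young steps close.
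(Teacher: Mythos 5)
Your proposal is correct and follows essentially the same route as the paper's proof: the same relative gradient-error control $\|\nabla q_k-\nabla\hat q_k\|=O(e^{-bT})\|\nabla \hat q_k\|$ via PL plus quadratic growth, the same use of boundedness of $\lambda_k\|\nabla\hat q_k\|$ to neutralize the unbounded multiplier, the same two descent inequalities (for $q$ via the constraint $\langle\nabla\hat q_k,\delta_k\rangle\ge\phi_k$, for $f$ via complementary slackness), the same Cauchy--Schwarz coupling of $\sum_k\lambda_k\phi_k$ to $\sum_k\|\nabla\hat q_k\|^2$, and the same final reduction $\mathcal{K}(v_k,\th_k)\le 2\|\delta_k\|^2+q_k+O(e^{-bT})$ obtained by plugging the algorithm's $\lambda_k$ into the minimization defining $\mathcal{K}$. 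The only notable difference is internal: you bound $\sum_k\|\nabla\hat q_k\|^2$ and $\sum_k q_k$ by telescoping the $q$-descent inequality and invoking $\kappa q_k\le\|\nabla\hat q_k\|^2$, whereas the paper first establishes a pointwise geometric decay $q_k\le e^{-b_4 k}q_0+\Delta$ (Lemma~\ref{lem:decay q}) and then sums; both yield the same $q_0/(\xi K)$ dependence, so this is a mild simplification rather than a different argument.
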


\begin{remark}
Note that one of the dominant terms depends on the initial value $q_0 = q(v_{0},\th_{0})$. Therefore, we can obtain a better rate if we start from a $\th_0$ with small $q_0$ (hence near the optimum of $g(v_0,\cdot)$). In particular, when $q(v_{0},\th_{0})=O(1)$, choosing %
$\xi=O(K^{-1/2})$ gives $ \min_{k\leq K}\K(v_{k},\th_{k}) = O(K^{-1/4} + \exp(-bT))$ rate. 
On the other hand, 
if we start from a better initialization such that {$q(v_{0},\th_{0})=O((\xi K)^{-1})$}, then 
choosing 
$\xi=O(K^{-2/3})$  %
gives $\min_{k\leq K}\K(v_{k},\th_{k}) = O(K^{-1/3}+\exp(-bT))$. %
\end{remark}
\vspace{-5pt}
\subsection{Convergence with multimodal $g$}
\vspace{-5pt}
\label{sec:kl_theory}
The PL-inequality eliminates the possibility of having stationary points that are not global optimum. 
To study  cases in which $g$ has multiple local optima, we introduce the notion of attraction points 
following gradient descent. 
\vspace{-3pt}
\begin{definition} [Attraction points]
Given any $(v,\th)$, we say that  $\th^{\diamond}(v,\th)$ is  the attraction point of $(v,\th)$ with step size $\alpha>0$ 
if the sequence $
\{\theta^{(t)}\}_{t=0}^\infty$ generated by gradient descent $\th^{(t)}=\th^{(t-1)}-\alpha\nabla_{\th}g(v,\th^{(t-1)})$
starting from $\th^{(0)}=\th$ converges to $\th^{\diamond}(v,\th)$.
\end{definition}
\vspace{-3pt}
Assume the step size ${\alpha}\le1/L$ where $L$ is the smoothness constant defined in Assumption \ref{asm:Smoothness}, one can show the existence {and uniqueness} of attraction point of any $(v,\th)$ using Proposition 1.1 of \citet{traonmilin2020basins}. 
Intuitively, the attraction of $(v,\th)$ is where the gradient descent
algorithm can not make improvement. 
In fact, when ${\alpha}\le1/L$, 
one can show that $g(v,\theta) \leq g(v,\theta^{\diamond}(v,\theta))$ is equivalent to the stationary condition  $\dd_\theta g(v,\theta)=0$. 

The set of $(v,\theta)$ that have the same attraction point forms an attraction basin. 
Our analysis needs to assume the PL-inequality within the individual attraction basins. 
\vspace{-3pt}
\begin{assumption}[Local PL-inequality within attraction basins] \label{asm:KL-inequality}
Assume that for any $(v,\th)$, $\theta^\diamond(v,\theta)$ exists. Also assume that there exists $\kappa>0$ such that for any $(v,\th)$ $\left\Vert \nabla_{\th}g(v,\th)\right\Vert ^{2}\ge\kappa(g(v,\th)-g(v,\theta^{\diamond}(v,\theta))$.
\end{assumption}
\vspace{-3pt}

We can also define local variants of $q$ and $\mathcal K$ as follows:
\[
q^\diamond(v,\theta)  = g(v,\th) - g(v, \th^{\diamond}(v,\th)),
~~~
\K^{\diamond}(v,\th)=  \min_{\lambda\ge0}\norm{\nabla f(v,\th)+\lambda\nabla q^{\diamond}(v,\th)}^{2} + {q^{\diamond}(v,\th)}.
\]
Compared with Section~\ref{sec:pl_theory}, a key technical challenge is that
$\theta^\diamond(v,\theta)$ and hence $q^\diamond(v,\theta)$ can be discontinuous w.r.t. $\th$ when it is on the boundary of different attraction basins; %
$\K^{\diamond}$ is not well defined on these points. 
However, these boundary points are not stable stationary points, and it is possible to use arguments based on the stable manifold theorem to show that an algorithm with random initialization will almost surely not visit them \citep{shub2013global,lee2016gradient}. 
\vspace{-3pt}
\begin{theorem} \label{thm:nonconvex_converge} 
Consider Algorithm~\ref{alg:bome} with $\xi, \alpha\le1/L$, $\phi_k = \eta \norm{\nabla \hat{q}(v_k,\th_k)}^2$,  and $\ul>0$. 
Suppose that Assumptions~\ref{asm:Smoothness}, \ref{asm:Boundedness}, and \ref{asm:KL-inequality} hold and that $q^\diamond$ is differentiable on $(v_k,\th_k)$ at every iteration $k\geq 0$.
Then there exists a constant $c$ depending on $\alpha,\kappa,\ul ,L$, such that when $T \geq c$, we have
\[
\min_{k\le K}\, \K^{\diamond}(v_{k},\th_{k})=O\left (
\sqrt{\xi}  %
+ \sqrt{\frac{1}{\xi K}}   
 + \exp(-bT)  \right ), 
\]
where $b$ is a positive constant depending on $\kappa$, $L$, and $\alpha$.
\end{theorem}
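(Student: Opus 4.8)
The plan is to mirror the proof of Theorem~\ref{thm:Convergence k} almost verbatim, replacing the global minimizer $\th^*(v)$ and global value $g^*(v)$ everywhere by the basin-local attraction point $\th^{\diamond}(v,\th)$ and local value $g(v,\th^{\diamond}(v,\th))$. The role of Assumption~\ref{asm:KL-inequality} is precisely to make these local quantities behave as the global ones did under Assumption~\ref{asm:PL-inequality}: the inner loop \eqref{equ:thetaTT} is initialized at $\th_k$, so its iterates $\th_k^{(t)}$ stay in the attraction basin of $(v_k,\th_k)$ and, by the local PL inequality, converge linearly to $\th^{\diamond}(v_k,\th_k)$. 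First I would record this linear convergence: with $\alpha\le 1/L$ the descent lemma and Assumption~\ref{asm:KL-inequality} give $g(v_k,\th_k^{(t+1)})-g(v_k,\th^{\diamond}(v_k,\th_k))\le(1-\alpha\kappa/2)\big(g(v_k,\th_k^{(t)})-g(v_k,\th^{\diamond}(v_k,\th_k))\big)$, so after $T$ steps the value gap is at most $(1-\alpha\kappa/2)^{T}q^{\diamond}(v_k,\th_k)\le 2M(1-\alpha\kappa/2)^{T}$; setting $b=\tfrac12|\log(1-\alpha\kappa/2)|$ produces the $\exp(-bT)$ term. The local PL inequality also gives quadratic growth, so $\|\th_k^{(T)}-\th^{\diamond}(v_k,\th_k)\|=O(\exp(-bT))$.

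Next I would establish the two per-iteration estimates that feed the recursion; write $f_k=f(v_k,\th_k)$, $q^{\diamond}_k=q^{\diamond}(v_k,\th_k)$, and $\delta_k=\nabla f_k+\lambda_k\nabla\hat q_k$. (i) Since $\th_k^{(T)}$ is frozen, direct differentiation gives $\nabla_{\th}\hat q(v_k,\th_k)=\nabla_{\th}g(v_k,\th_k)$, while Danskin's theorem (using $\nabla_{\th}g(v_k,\th^{\diamond})=0$) gives the identical $\th$-block for $\nabla q^{\diamond}$; the $v$-blocks differ only by $\nabla_1 g(v_k,\th^{\diamond})-\nabla_1 g(v_k,\th_k^{(T)})$, which by $L$-smoothness and the quadratic-growth bound is $O(\exp(-bT))$. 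Hence $\|\nabla\hat q_k-\nabla q^{\diamond}_k\|$ and $|\hat q_k-q^{\diamond}_k|$ are both $O(\exp(-bT))$. (ii) Because the $\th$-block of $\nabla\hat q$ equals $\nabla_{\th}g$, Assumption~\ref{asm:KL-inequality} yields $\|\nabla\hat q_k\|^2\ge\|\nabla_{\th}g(v_k,\th_k)\|^2\ge\kappa\,q^{\diamond}_k$, directly linking the barrier $\phi_k=\eta\|\nabla\hat q_k\|^2$ to the feasibility gap.

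With these in hand I would run two coupled telescoping arguments. Applying the descent lemma of Assumption~\ref{asm:Smoothness} to $f$ along $(v_{k+1},\th_{k+1})=(v_k,\th_k)-\xi\delta_k$, and using the closed-form identity $\langle\nabla f_k,\delta_k\rangle=\|\delta_k\|^2-\lambda_k\phi_k$ with $\xi\le 1/L$, gives $f_{k+1}\le f_k-\tfrac{\xi}{2}\|\delta_k\|^2+\xi\lambda_k\phi_k$; summing and using $|f|\le M$ controls $\tfrac1K\sum_k\|\delta_k\|^2$. For feasibility I would bound the change of $q^{\diamond}$ over a step and use $\langle\nabla\hat q_k,\delta_k\rangle\ge\phi_k\ge\eta\kappa\,q^{\diamond}_k$ from (ii) to get the contraction $q^{\diamond}_{k+1}\le(1-\xi\eta\kappa)q^{\diamond}_k+O(\xi\exp(-bT))+O(\xi^2)$, whose telescoped form bounds $\tfrac1K\sum_k q^{\diamond}_k$ and, read the other way, identifies $\xi\phi_k$ (up to errors) with the per-step feasibility decrease, so $\sum_k\lambda_k\phi_k$ is tied to $q^{\diamond}_0$ rather than growing with $K$. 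Relating $\|\delta_k\|$ back to the local-improvement summand $\min_{\lambda\ge0}\|\nabla f_k+\lambda\nabla q^{\diamond}_k\|^2$ (absorbing the fixed barrier offset $\eta\nabla\hat q_k$ via the same feasibility link and the $O(\exp(-bT))$ gradient mismatch), combining with the feasibility bound, taking $\min_{k\le K}$, and applying Cauchy--Schwarz is where the $\sqrt{\xi}$ and $\sqrt{1/(\xi K)}$ rates emerge. Finally, substituting $q^{\diamond}_0\le 2M=O(1)$ from Assumption~\ref{asm:Boundedness} collapses the initialization-dependent term to $O(1/(\xi K))$, dominated by $O(\sqrt{1/(\xi K)})$, which explains why the separate $q_0$ and $1/(\xi K)$ terms of Theorem~\ref{thm:Convergence k} are absent here.

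The hard part will be justifying the feasibility descent despite the non-smoothness of $q^{\diamond}$: the map $\th^{\diamond}(v,\th)$ is discontinuous across attraction-basin boundaries, so $q^{\diamond}$ need not be smooth along the segment joining $(v_k,\th_k)$ and $(v_{k+1},\th_{k+1})$, and the reference point drifts across iterations (the frozen $\th_k^{(T)}$ approximates $\th^{\diamond}(v_k,\th_k)$, not $\th^{\diamond}(v_{k+1},\th_{k+1})$). I would handle this by taking $\xi$ small enough that a single step cannot cross a basin boundary, so $\th^{\diamond}$ varies continuously over the step and the descent inequality holds with a uniform constant, and by invoking the stable-manifold / random-initialization argument noted before the theorem (together with the assumed differentiability of $q^{\diamond}$ at each iterate) to keep the iterates in basin interiors almost surely. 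Quantifying this cross-iteration drift of the reference point and folding it into the $O(\exp(-bT))$ and $O(\xi)$ error budgets is the main obstacle; the rest is a direct transcription of the unimodal analysis.
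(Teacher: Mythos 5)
Your high-level template (linear inner-loop convergence, the $O(\exp(-bT))$ gradient mismatch, descent on $f$, feasibility control, final combination) matches the paper's, and your estimates (i)--(ii) are correct. But the centerpiece of your feasibility argument --- the per-step contraction $q^{\diamond}_{k+1}\le(1-\xi\eta\kappa)q^{\diamond}_{k}+O(\xi\exp(-bT))+O(\xi^{2})$ --- is exactly what fails in the multimodal setting, and your proposed repairs do not rescue it. With a fixed stepsize $\xi\le 1/L$ (the regime of the theorem), a single update can move the iterate into a different attraction basin: iterates may come arbitrarily close to basin boundaries, so no uniform choice of $\xi$ rules out crossings, and the assumed differentiability of $q^{\diamond}$ \emph{at the iterates} does not prevent consecutive iterates from lying in different basins. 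The stable-manifold remark in the paper only motivates that differentiability assumption; the theorem involves no random initialization to invoke. When a crossing occurs, $q^{\diamond}$ can jump up by $O(1)$, destroying any contraction. Indeed, the remark following Theorem~\ref{thm:nonconvex_converge} says precisely that geometric decay of $q^{\diamond}$ --- and hence the initialization-dependent rate your contraction would deliver --- is unavailable here; your claim that the $q_{0}$-term merely ``collapses'' because $q^{\diamond}_{0}\le 2M$ misattributes the reason.

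The missing idea is to give up per-step control of $q^{\diamond}$ and control it only in summed form. The paper decomposes exactly
\begin{align*}
q^{\diamond}(v_{k+1},\th_{k+1})-q^{\diamond}(v_{k},\th_{k})
&=[g(v_{k+1},\th_{k+1})-g(v_{k},\th_{k})]
-[g(v_{k+1},\th^{\diamond}(v_{k+1},\th_{k}))-g(v_{k},\th^{\diamond}(v_{k},\th_{k}))]\\
&\quad+\chi_{k},
\qquad \text{where}~~
\chi_{k}:=g(v_{k+1},\th^{\diamond}(v_{k+1},\th_{k}))-g(v_{k+1},\th^{\diamond}(v_{k+1},\th_{k+1}))
\end{align*}
is the basin-drift term. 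The first two brackets are handled by smoothness of $g$ together with a dedicated lemma, $\|\th^{\diamond}(v_{2},\th)-\th^{\diamond}(v_{1},\th)\|\le\frac{4L}{\kappa}\|v_{1}-v_{2}\|$ (Lipschitzness of the attraction map in $v$ with $\th$ frozen), yielding a descent inequality whose negative term is $-\xi\eta\|\nabla q^{\diamond}(v_{k},\th_{k})\|^{2}$ --- the gradient norm, not $q^{\diamond}_{k}$ itself, so no contraction is ever claimed. Telescoping gives $\tfrac{\xi\eta}{4}\sum_{k}\|\nabla q^{\diamond}(v_{k},\th_{k})\|^{2}\le q^{\diamond}_{0}+K\Delta+\sum_{k}\chi_{k}$, and the crucial observation is that $\sum_{k}\chi_{k}$ itself telescopes: writing $\chi_{k}$ as $[g(v_{k+1},\th^{\diamond}(v_{k+1},\th_{k}))-g(v_{k},\th^{\diamond}(v_{k},\th_{k}))]+[g(v_{k},\th^{\diamond}(v_{k},\th_{k}))-g(v_{k+1},\th^{\diamond}(v_{k+1},\th_{k+1}))]$, the first piece is $O(\xi)$ per step and the second cancels across iterations, so $\sum_{k}\chi_{k}=O(1)+O(\xi K)$ even though an individual $\chi_{k}$ may be $O(1)$. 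The feasibility sum $\sum_{k}q^{\diamond}_{k}$ is then recovered from $\sum_{k}\|\nabla q^{\diamond}(v_{k},\th_{k})\|^{2}$ via the local PL inequality, and the rest proceeds as in the unimodal case. Without this telescoping of the drift, your argument cannot be closed.
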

\vspace{-3pt}
Unlike Theorem~\ref{thm:Convergence k},  the rate does not improve when $q_0^\diamond := q^{\diamond}(v_0, \th_0)$ is small because the attraction basin may change in different iterations, eliminating the benefit of starting from a good initialization. Choosing %
$\xi=O(K^{-1/2})$ gives $O(K^{-1/4}+\exp(-bT))$ rate of $\min_{k\leq K}\K^{\diamond}(v_{k},\th_{k})$.

\vspace{-5pt}
\section{Related Works} 
\vspace{-5pt}
The value-function formulation \eqref{eq:bo_constrain}  
is a classical approach in bilevel optimization \citep{outrata1990numerical,ye1995optimality,dempe2020bilevel}. 
However, 
despite its attractive properties, 
it has been mostly used as a theoretical tool, 
and much less exploited for practical algorithms compared with the more widely known hypergradient approach (Section~\ref{sec:background}), especially for challenging nonconvex functions $f$ and $g$ such as those encountered in deep learning.  
One exception is \citet{liu2021value}, 
which proposes a BO method by solving the value-function formulation using an interior-point method combined with a smoothed approximation.
This  was improved later in 
a pessimistic trajectory truncation approach \citep{liu2021towards} and a sequential minimization approach \citep{liu2021valueseq} (BVFSM). 
Similar to our approach, these methods do not require computation of Hessians, thanks to the use of value function.
However, as we observe in experiments (Section~\ref{sec:observations}), BVFSM tends to be dominated by our method both in accuracy and speed,  
and is sensitive to some hyperparameters that are difficult to tune (such as the coefficients of the log-barrier function in interior point method). 
Theoretically,
 \citet{liu2021value, liu2021valueseq, liu2021towards}  provide only asymptotic analysis on the convergence of the smoothed and penalized surrogate loss to the target loss. 
 They do not give an analysis for the algorithm that was actually implemented.
 
Our algorithm is build up on the dynamic control barrier method of \citet{gong2021automatic}, an elementary approach for constrained optimization. 
\citet{gong2021automatic} also applied their approach to solve a 
lexicographical optimization of form $\min_{\theta} f(\theta)$ s.t. $\theta\in \argmin_{\theta'} g(\theta')$, which is a bilevel optimization without an outer variable (known as \emph{simple bilevel optimization}  \citep{dempe2021simple}).  Our method is an extension of their method to general bilevel optimization. {Such extension is not straightforward, especially when the lower level problem is non-convex, requiring introducing the stop-gradient operation in a mathematically correct way.} We also provide non-asymptotic analysis for our method, that goes beyond the continuous time analysis in~\citet{gong2021automatic}. A key sophistication in the theoretical analysis is that we need to control the approximation error of $\theta^*(v_k)$  with $\theta_k^{(T)}$ at each step, which requires an analysis significantly different from that of \citet{gong2021automatic}. {Indeed, non-asymptotic results have not yet been obtained for many BO algorithms. Even for the classic hypergradient-based approach (such results are established only recently in \citet{ji2021bilevel}). We believe that we are the first to establish a non-asymptotic rate for a purely first-order BO algorithm under general assumptions, e.g. the lower level problem can be both convex or non-convex.}
 
Another recent body of 
theoretical works on BO focus on how to optimize when only stochastic approximation of the objectives is provided   \citep{ghadimi2018approximation,hong2020two,ji2021bilevel,yang2021provably,guo2021randomized,chen2021single,khanduri2021near}; there are also recent works  on the lower bounds and minimax optimal algorithms \citep{ji2021lower,ji2021bilevel}. These algorithms and analysis are based on  hypergradient descent and hence require Hessian-vector products in  implementation.

\vspace{-5pt}
\section{Experiment}
\vspace{-5pt}
\label{sec:experiment}
We conduct experiments (1) to study the correctness, basic properties, and robustness to hyperparameters of BOME, and (2) to test its performance and computational efficiency on challenging ML applications, compared with state-of-the-art bilevel algorithms. In the following, we first list the baseline methods and how we set the hyperparameters. Then we introduce the experiment problems in Section~\ref{sec:problems}, which includes 3 toy problems and 3 ML applications, and provide the experiment results. Finally we summarize observations and findings in Section~\ref{sec:observations}.

\textbf{Baselines} A comprehensive set of state-of-the-art BO methods are chosen as baseline methods. This includes the \emph{fully first-order} methods: BSG-1~\citep{giovannelli2021bilevel} and  BVFSM~\citep{liu2021valueseq},
; %
a \emph{stationary-seeking} method: Penalty ~\citep{mehra2021penalty},
\emph{explicit/implicit} methods: ITD~\citep{ji2021bilevel}, AID-CG (using conjugate gradient), AID-FP (using fixed point method)~\citep{grazzi2020iteration}, reverse (using reverse auto-differentiation)~\citep{franceschi2017forward} stocBiO~\citep{ji2021bilevel}, and VRBO~\citep{yang2021provably}.

\textbf{Hyperparameters}
Unless otherwise specified, BOME strictly follows Algorithm~\ref{alg:bome} with 
$\phi_k =\eta \norm{\nabla \hat{q}(v_k, \th_k)}^2$, $\eta=0.5$, and $T = 10$. The inner stepsize $\alpha$ is set to be the same as outer stepsize $\xi$. The stepsizes of all methods are set by a grid search from the set $\{0.01, 0.05, 0.1, 0.5, 1, 5, 10, 50, 100, 500, 1000\}$. All toy problems adopt vanilla gradient descent (GD) and applications on hyperparameter optimization adapts GD with a momentum of $0.9$. Details are provided in Appendix~\ref{sec:apx-exp}.

\vspace{-5pt}
\subsection{Experiment Problems and Results}
\vspace{-5pt}
\label{sec:problems}

\textbf{Toy Coreset Problem} 
To validate the \emph{convergence} property of BOME, we consider:
\begin{equation*}
    \begin{split}
    \textstyle{\min}_{v, \theta} \norm{\theta  - x_0}^2 
    ~~~s.t.~~~ \theta \in \textstyle{\argmin}_\theta \norm{\theta - X\sigma(v)}^2,
    \end{split}
    \label{eq:toy-coreset}
\end{equation*}
where  $\sigma(v) = \exp(v)/\sum_{i=1}^4 \exp(v_i)$
is the softmax function, $v \in \mathbb{R}^4, \theta\in \mathbb{R}^2$, and $X= [x_1, x_2, x_3, x_4] \in \mathbb{R}^{2\times 4}$. 
The goal is to find the closest point  to a target point $x_0$ 
within the convex hull of $\{x_1,\ldots, x_4\}$.
See Fig.~\ref{fig:toy_convergence_and_minimax} (upper row) for the illustration and results.
\begin{figure*}[t!]
    \centering
    \vspace{-20pt}
    \label{fig:toy_convergence_and_minimax}
    \includegraphics[width=\textwidth]{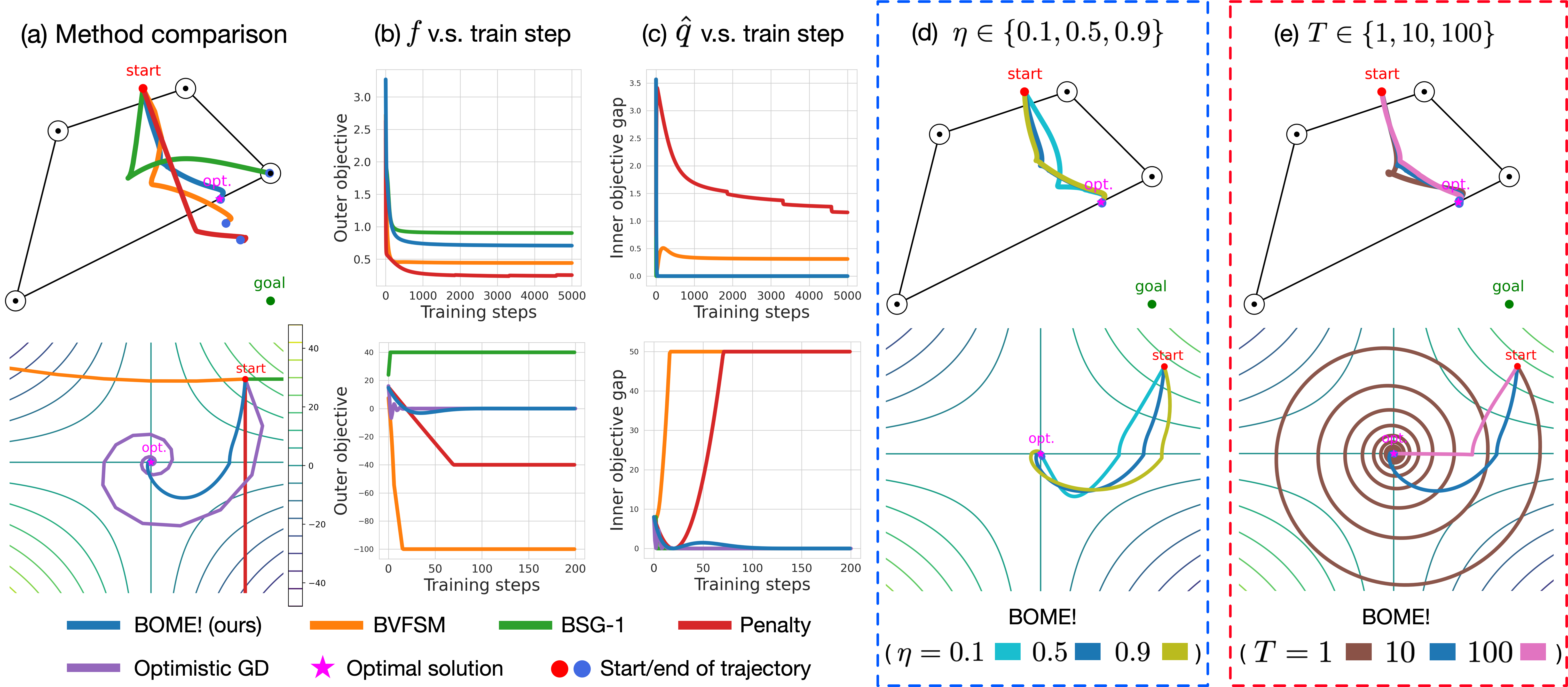}
    \vspace{-10pt}
    \caption{Results on the toy coreset  problem  and mini-max  problem. (a)-(c): the trajectories of $(v_k,\theta_k)$, and $f(v_k,v_k)$ and $\hat q_k(v_k,v_k)$ of BOME (our method), BSG-1~\citep{giovannelli2021bilevel}, BVFSM~\citep{liu2021valueseq}, Penalty ~\citep{mehra2021penalty} and Optimistic GD~\citep{daskalakis2017training} (only for minimax problem).  
    (d)-(e) trajectories of BOME with different choices of inner gradient step $T$ and the control coefficient $\eta$.
    }
    \vspace{-0.6cm}
\end{figure*}

\textbf{Toy Mini-Max Game} 
Mini-max game is a special and challenging case of BO where $f$ and $g$ contradicts with each other completely (e.g., $f=-g$). We consider
\begin{equation}
    \textstyle{\min}_{v,\theta \in \mathbb{R}}~v\th ~~~~s.t.~~~~ \th \in \textstyle{\argmax}_{\th' \in \mathbb{R}}~v\th'.
    \label{eq:toy-minimax}
\end{equation}
The optimal solution is $v^* = \th^* = 0$.  
Note that the naive gradient descent ascent algorithm diverges to infinity on this problem, 
and a standard alternative is to use optimistic gradient descent~\citep{daskalakis2017training}. 
Figure~\ref{fig:toy_convergence_and_minimax} (lower row) shows that BOME works on this problem while other first-order BO methods fail. 

\textbf{Degenerate Low Level Problem}
Many existing BO algorithms require the low level singleton (LLS) 
assumption, which BOME does not require. 
To test this, we consider an example from \citet{liu2020generic}: 
\begin{equation*}
    \begin{split}
     \textstyle{\min}_{v\in \RR, \theta\in\RR^2}
     \norm{\theta - [v;1]}_2^2 
    ~~~~ s.t.~~~~ \th \in \textstyle{\argmin}_{(\theta_1',\theta_2')\in \RR^2}  (\theta_1' - v)^2,    
    \end{split}
    \label{eq:toy-lls}
\end{equation*}
where $\th = (\th_1, \th_2) $ and the solution is $v^*=1, \th^*=(1,1)$. 
See Fig.~\ref{fig:toy_lls_full} in Appendix A.3 for the result.

\textbf{Data Hyper-cleaning}  
We are given a noisy training set $\mathcal D_{\text{train}}:= \{x_i, y_i\}_{i=1}^m$ and a clean validation set  $\mathcal D_{\text{val}}$. 
 The goal is to optimally weight the training data points so that the model trained on the weighted training set yields good performance on the validation set: 
 \begin{align*} 
    \textstyle{\min}_{v, \theta} 
    \ell^{\text{val}}(\theta), ~~~ 
    s.t.~~~\theta = \textstyle{\argmin}_{\theta'}
    \left\{ \ell^{\text{train}}(\theta', v)
    + c \norm{\theta'}^2 \right\},
\end{align*} 
where $\ell^{\text{val}}$ is the validation loss on $\mathcal D^{\text{val}}$, and $\ell^{\text{train}}$ is a weighted training loss: $\ell^{\text{train}} = \sum_{i=1}^m \sigma(v_i)  \ell(x_i, y_i, \theta)$ with $\sigma(v)= \text{Clip}(v, [0,1])$ and 
$v \in \RR^m$. 

We set $c = 0.001$. For the dataset, we use MNIST~\citep{deng2012mnist} (FashionMNIST~\citep{xiao2017fashion}).
We corrupt 50\% of the training points by assigning them randomly sampled labels. 
See Fig.~\ref{fig:ho} (upper panel) for the results. (Results for FashionMNIST are reported in Appendix~\ref{sec:apx-exp-hyperclean}.)

\begin{figure*}[t]
    \centering
    \vspace{-0.8cm}
    \includegraphics[width=\textwidth]{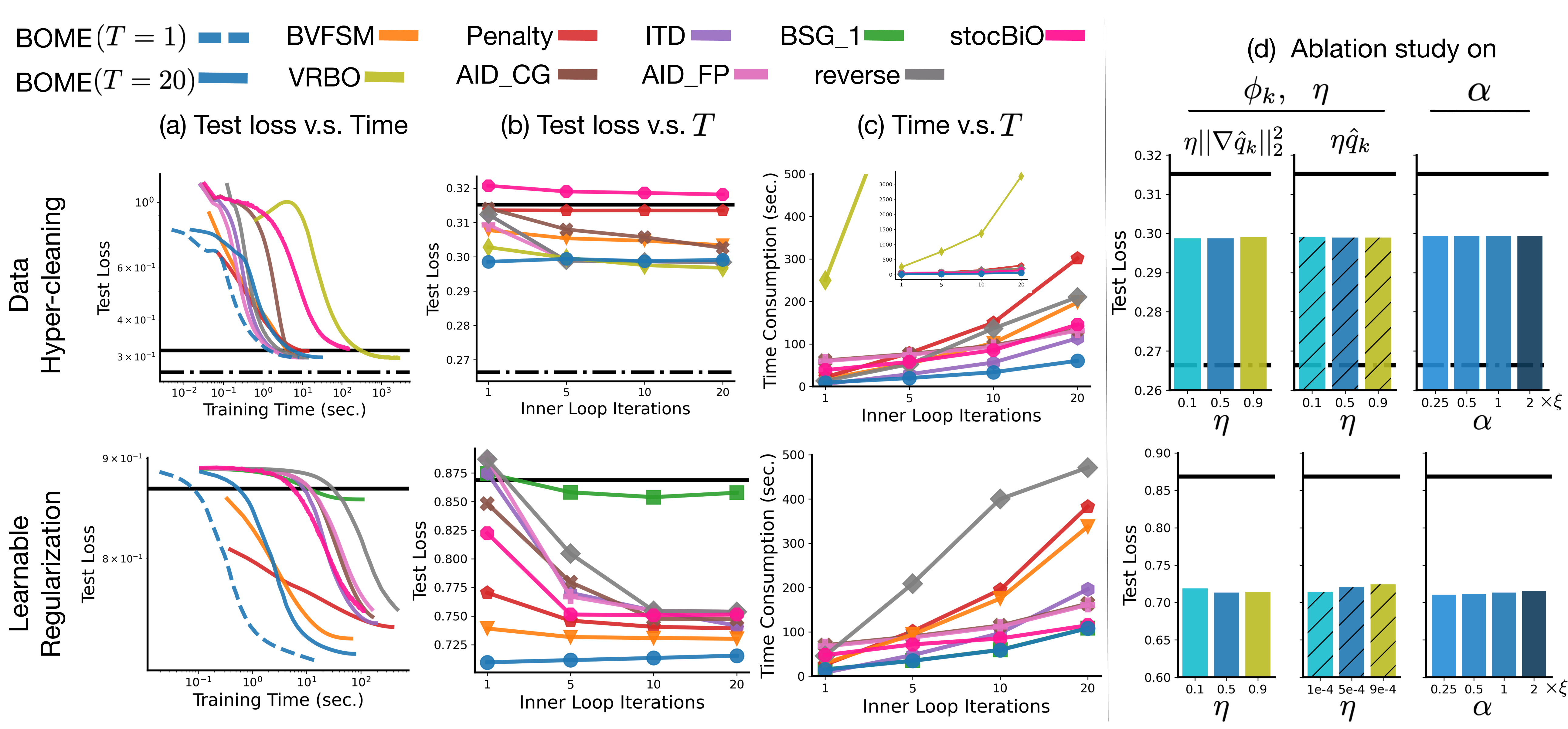}
    \vspace{-18pt}
    \caption{Result for hyperparameter optimization. \textbf{Top:} data hyper-cleaning on MNIST dataset. The solid black line is the model performance trained purely on the validation set and the dashed black line is the model performance trained on the validation set and on the part of training set that have correct labels. \textbf{Bottom:} learnable regularization on 20 Newsgroup dataset. The solid black line indicates the model performance without any regularization. All results are averaged on 5 random trials. See Appendix~\ref{sec:apx-exp-hyperclean} and~\ref{sec:apx-exp-regularization} for results on FashionMNIST and more details.}
    \vspace{-1pt}
    \label{fig:ho}
\end{figure*}

\textbf{Learnable Regularization}
We apply bilevel optimization to learn the optimal regularization coefficient on the twenty newsgroup dataset:\footnote{Dataset from \url{https://scikit-learn.org/0.19/datasets/twenty_newsgroups.html}.}
\begin{align*} 
\vspace{-0.3cm}
    \textstyle{\min}_{v,\theta}\,  
   \ell^{\text{val}}(\theta)
    ~~~\text{s.t.}~~~\theta \in \textstyle{\argmin}_{\theta'} \left\{ \ell^{\text{train}} (\theta') + \norm{W_v \theta'}^2_2 \right\},
\vspace{-0.3cm}
\end{align*}
where $W_v$ is a matrix depending on $v$, e.g., $W_v = \diag(\exp(v))$. %
See Fig.~\ref{fig:ho} (lower panel) for results. 

\textbf{Continual Learning (CL)} 
CL 
studies how to learn on a sequence of tasks in an online fashion without catastrophic forgetting of previously learned tasks. 
We follow the setting of contextual transformation network (CTN) from \citet{pham2020contextual}, which trains a deep neural network consisting of
a quickly updated  backbone network (parameterized by $\theta$) and 
a slowly updated controller network  (parameterized by $v$). %
When training the $\tau$-th task, we update $(v, \theta)$ by %
$$
    \textstyle{\min}_{v, \theta} ~\ell_{1:\tau}^\text{val}\big(v, \th\big) ~~~~~s.t.~~~~~\th \in \textstyle{\argmin}_{\th'}  ~\ell_{1:\tau}^\text{train}\big(v, \th'\big),
$$
where $\ell_{1:\tau}^\text{train}$ and $\ell_{1:\tau}^\text{val}$ are
 the training and validation loss available up to task $\tau$.  %
 The goal is to update the controller such that the long term loss $\ell_{1:\tau}^\text{val}$ is minimized assuming $\theta$ 
is adapted to the available training loss when new tasks come.
\newcommand{\tend}{t}%
Assume the CL process terminates at time $\tend$ . 
Denote by $a^s_\tau$ 
the test accuracy of task $s$ after training on task $\tau$. 
We measure the performance of CL by  1) the final mean accuracy on all seen tasks ($\text{ACC} = \frac{1}{\tend}\sum_{\tau\leq \tend}  a^\tau_{\tend}$), 2) how much the model forgets as measured by negative backward transfer $\text{NBT} = \frac{1}{\tend}\sum_{\tau \leq \tend} (a^\tau_\tau - a^\tau_\tend)$, 
  and 3) how fast the model learns on new tasks as measured by forward transfer 
  $\text{FT} = \frac{1}{\tend}\sum_{\tau\leq \tend} a^\tau_\tend$. Note that $\text{FT} = \text{ACC} + \text{NBT}$.

We follow the setting of \citet{pham2020contextual} closely, 
except replacing their bilevel optimizer (which is essentially ITD~\citep{ji2021bilevel}) with BOME. See Appendix~\ref{sec:apx-exp-cl} for experiment details. 
The results are shown in Table~\ref{tab:cl},
where 
in addition to the bilevel algorithms,
we also compare with a set of state-of-the-art CL algorithms, including MER~\citep{riemer2018learning}, ER~\citep{chaudhry2019tiny}, and GEM~\citep{lopez2017gradient}. 
Table~\ref{tab:cl} also 
includes an `Offline" basline -- learning $t$ tasks simultaneously using a single model (which is the upper bound on performance).

\begin{table*}[t]
\centering
\vspace{-0.1cm}
\resizebox{\textwidth}{!}{
\begin{tabular}{lcccccc}
\toprule 
\multirow{3}{*}{Method} & \multicolumn{3}{c}{PMNIST} & \multicolumn{3}{c}{Split CIFAR}\\
\cmidrule(lr){2-4}  \cmidrule(lr){5-7}
 & ACC~$(\uparrow)$ & NBT~$(\downarrow)$ & FT~$(\uparrow)$ & ACC~$(\uparrow)$ & NBT~$(\downarrow)$ & FT~$(\uparrow)$\tabularnewline
\midrule
Offline & $84.95\pm0.95$ & - & - & $74.11\pm0.66$ & - & -\tabularnewline
[0.05cm]
MER & $76.59\pm0.74$ & $5.73\pm0.59$ & $82.32\pm0.34$ & $60.32\pm0.86$ & $8.91\pm0.86$ & $69.23\pm0.40$\tabularnewline
[0.05cm]
CTN~(+ITD) & $78.40\pm0.28$ & $5.62\pm0.39$ & $84.02\pm0.29$ & $67.7\pm60.96$ & $4.88\pm0.77$ & $72.58\pm0.62$\tabularnewline
[0.05cm]
CTN~(+BVFSM) & $77.78\pm0.32$ & $7.25\pm0.28$ & $\pmb{85.03}\pm0.28$ & $67.04\pm0.76$ & $6.97\pm0.62$ & $\pmb{74.01}\pm0.57$ \tabularnewline
[0.05cm]
CTN~(+BOME) & $\pmb{80.70}\pm0.26$ & $\pmb{4.09}\pm0.27$ & $\pmb{84.79}\pm0.25$ & $\pmb{68.16}\pm0.60$ & $\pmb{4.72}\pm0.75$ & $72.88\pm0.48$\tabularnewline
\bottomrule 
\end{tabular}
}
\vspace{-5pt}
\caption{Results of continual learning as bilevel optimization. We compute the mean and standard error of each method's results over 5 independent runs.
Best results are \textbf{bolded}. The full result with comparison against other methods are provided in Table~\ref{tab:cl-full} in the Appendix.
}
\vspace{-0.65cm}
\label{tab:cl}
\end{table*}

\vspace{-8pt}
\subsection{Observations}
\vspace{-5pt}
\label{sec:observations}
\textbf{BOME yields faster learning and better solutions at convergence}
Figure~\ref{fig:toy_convergence_and_minimax}-\ref{fig:toy_lls_full} show that BOME converges to the optimum of the corresponding bilevel problems and work well on the mini-max optimization and the degenerate low level problem; see also Fig.\ref{fig:toy_convergence} in Appendix~\ref{sec:apx-exp-coreset}. 
In comparison, the other methods like BSG-1, BVFSM, and Penalty 
fail to converge to the true optimum even with a grid search over their hyperparameters.
Moreover, in all three toy examples, BOME guarantees that $\hat{q}$, which is a proxy for the optimality of the inner problem, decreases to $0$. 
From Fig.~\ref{fig:ho}, it is observed that BOME achieves comparable or better performance than the state-of-the-art bilevel methods for hyperparameter optimization. Moreover, BOME exhibits better computational efficiency (Fig.~\ref{fig:ho}), especially on the twenty newsgroup dataset where the dimension of $\theta$ is large. 
In Table~\ref{tab:cl}, we find that directly plugging in BOME to the CL problem yields a substantial performance boost. 
\vspace{-2pt}

\textbf{Robustness to parameter choices} 
Besides the standard step size $\xi$ in typical optimizers, 
BOME only has three parameters:  control coefficient $\eta$, inner loop iteration $T$, and inner step size $\alpha$. 
We use the default setting of $\eta = 0.5$, $T = 10$ and $\alpha = \xi$ 
across the experiments.
From Fig.~\ref{fig:toy_convergence_and_minimax} (d,e) and Fig.~\ref{fig:ho} (b,d), BOME is robust to the choice of  $\eta$, $T$ and $\alpha$ as varying them results in almost identical performance. Specifically, $T=1$ works well in many cases (see Figure \ref{fig:toy_convergence_and_minimax} (e) and \ref{fig:ho} (b)). The fact that BOME works well with a small $T$ empirically makes it computationally attractive in practice.
\vspace{-4pt}

\textbf{Choice of control barrier $\phi_k$}
The control barrier is set as $\phi_k = \eta\norm{\nabla \hat{q}(v_k,\theta_k)}^2$  by default. Another option is to use $\phi_k = \eta\hat{q}(v_k,\theta_k)$. We test both options on the data hyper-cleaning and learnable regularization experiments in Fig.~\ref{fig:ho} (d), and observe no significant difference (we choose $\eta$ properly so that both choices of $\phi_k$ is on the same order). Hence we use $\phi_k = \eta\norm{\nabla \hat{q}(v_k,\theta_k)}^2$ as the default. 
\vspace{-2pt}

\textbf{Comparison against BVFSM} The most relevant baseline to BOME is BVFSM, which similarly adopts the value-function reformulation of the bilevel problems. However, BOME consistently outperform BVFSM in both converged results and computational efficiency, across all experiments. 
More importantly, BOME has fewer hyperparameters and is robust to them,
while we found BVFSM is sensitive to hyperparameters. This makes BOME a better fit for large practical bilevel problems.

\vspace{-5pt}
\section{Conclusion and Future Work}
\vspace{-5pt}
BOME, a simple fully first-order bilevel method, is proposed in this work with non-asymptotic convergence guarantee. 
While the current theory requires the inner loop iterations to scale in a logarithmic order w.r.t to the outer loop iterations, we do not observe this empirically. A further study to understand the mechanism is an interesting future direction.

\bibliography{z_ref}
\bibliographystyle{plainnat}

\clearpage
\section*{Societal Impacts}
This paper proposes a simple first order algorithm for bi-level optimization. Many specific instantiation of bi-level optimization such as adversarial learning and data attacking might be harmful to machine learning system in real world, as a general optimization algorithm for bi-level optimization, our method can be a tool in such process. We also develop a great amount of theoretical works and to our best knowledge, we do not observe any significant negative societal impact of our theoretical result. 

\appendix
\onecolumn

\section{Experiment Details}
\label{sec:apx-exp}
We provide details about each experiment in this section. Regarding the implementation of baseline methods:
\begin{itemize}
    \item BVFSM's implementation is adapted from \url{https://github.com/vis-opt-group/BVFSM}.
    \item Penalty's implementation is adapted from \url{https://github.com/jihunhamm/bilevel-penalty}.
    \item VRBO's implementation is adapted from \url{https://github.com/JunjieYang97/MRVRBO}.
    \item AID-CG and AID-FP implementations are adapted from \url{https://github.com/prolearner/hypertorch}.
    \item ITD implementation is adapted from \url{https://github.com/JunjieYang97/stocBiO}.
\end{itemize}

\subsection{Toy Coreset Problem}
\label{sec:apx-exp-coreset}

The problem is:
\begin{equation*} 
    \begin{split}
    \min_{v, \theta} \norm{\theta  - x_0}^2 
    ~~~~s.t.~~~~ \theta \in \argmin_\theta \norm{\theta - X\sigma(v)}^2,
    \end{split}
    \label{eq:apx-toy-coreset}
\end{equation*}
where  $\sigma(v) = \exp(v)/\sum_{i=1}^4 \exp(v_i)$
is the softmax function, $v \in \mathbb{R}^4, \theta\in \mathbb{R}^2$, and $X= [x_1, x_2, x_3, x_4] \in \mathbb{R}^{2\times 4}$. 
where  $\sigma(v) = \exp(v)/\sum_{i=1}^4 \exp(v_i)$
is the softmax function. Here the outer objective $f$ pushes $\theta$ to towards $x_0$ while the inner objective $g$ ensures $\theta$ remains in the convex hull formed by 4 points in the 2D plane (e.g. $X = [x_1, x_2, x_3, x_4] \in \mathbb{R}^{2\times 4}$). We choose $x_0 = (3, -2)$ and the four points $x_1 = (1,3)$, $x_2 = (3, 1)$, $x_3 = (-2, 2)$ and $x_4 = (-3, 2)$. We set $v_0 = (0, 0, 0, 0)$ and $\th_0 \in \{(0, 3), (-3, 1), (3.5, 1)\}$.  For all methods, we fix both the inner stepsize $\alpha$ and the outer stepsize $\xi$ to be $0.05$ and set $T=10$. For BVFSM and Penalty, we grid search the best hyperparameters from $\{0.001, 0.01, 0.1\}$. For BOME, we choose $\phi = \eta \norm{\nabla \hat{q}}^2$ and ablate over $\eta \in \{0.1, 0.5, 0.9\}$ and $T \in \{1, 10, 100\}$. The visualization of the optimization trajectories over the 3 initial points are plotted in Fig.~\ref{fig:toy_convergence}. 

\begin{figure*}[h!]
    \centering
    \includegraphics[width=\textwidth]{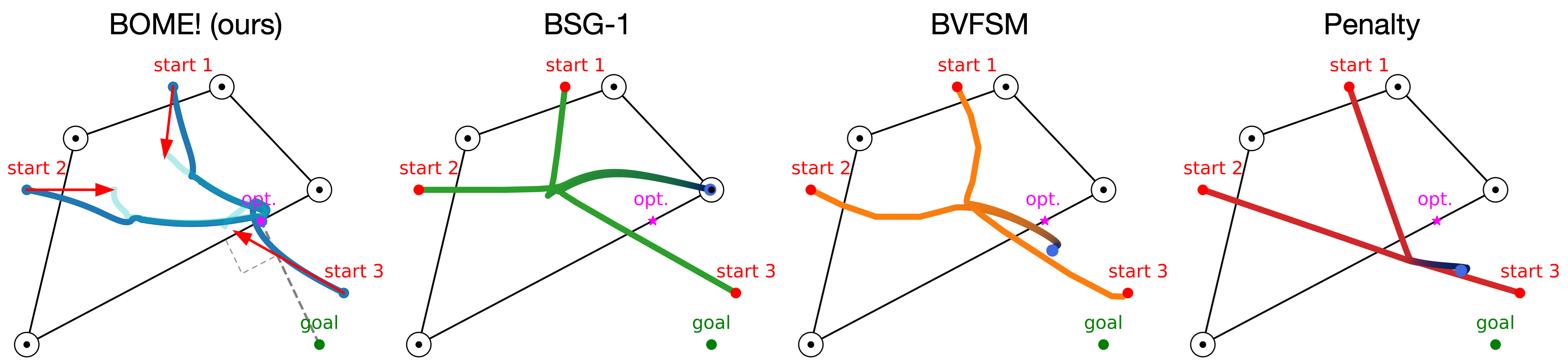}
    \vspace{-12pt}
    \caption{Trajectories of $(v_k, \theta_k)$ on the toy coreset problem \eqref{eq:toy-coreset} obtained from BOME (\textcolor{BOME}{blue}) and three recent first-order bilevel methods: BSG-1 ~\citep{giovannelli2021bilevel} (\textcolor{BSG1}{green}), BVFSM~\citep{liu2021valueseq} (\textcolor{BVFSM}{orange}), and Penalty ~\citep{mehra2021penalty} (\textcolor{Penalty}{red}). 
    The goal of the problem is to find the closet point (marked by \textcolor{magenta}{opt.})
    to the  \textcolor{BSG1}{goal} $x_0$ within the convex envelop of the four vertexes. %
    All methods start from 3 initial points
    (\textcolor{red}{start 1-3}), and the converged points are shown in \textcolor{BlueViolet}{darkblue}.
    For BOME, we also plot the trajectory of $\{\hat{\theta}_k^T\}$ in \textcolor{cyan}{cyan}.}
    \label{fig:toy_convergence}
\end{figure*}
As shown, BOME successfully converges to the optimal solution regardless of the initial $\theta_0$, while BSG-1, BVFSM and Penalty methods converge to non-optimal points. We emphasize that for BVFSM and Penalty, the convergence point \emph{depends on} the choice of hyperparameters.

\subsection{Toy Mini-max Game} 
\label{sec:apx-exp-minimax}
The toy mini-max game we consider is:
\begin{equation}
    \min_{v \in \mathbb{R}}~v\th^*(v) ~~~~s.t.~~~~ \th^*(v) = \argmax_{\th \in \mathbb{R}}~v\th.
    \label{eq:toy-minimax}
\end{equation}
For BOME and BSG-1, BVFSM, and Penalty methods, we again set both the inner stepsize $\alpha$ and $\beta$ to be $0.05$, as no significant difference is observed by varying the stepsizes. For all methods, we set the inner iteration $T=10$. For BVFSM and Penalty, we grid search the best hyperparameters from $\{0.001, 0.01, 0.1\}$.

\subsection{Without LLS assumption} 
\label{sec:apx-exp-lls}
The toy example to validate whether BOME requires the low-level singleton assumption is borrowed from \citet{liu2020generic}:
\begin{equation*}
    \begin{split}
     \min_{v\in \RR, \theta\in\RR^2}
     \norm{\theta - [v;1]}_2^2 
    ~~~~ s.t.~~~~ \th \in \argmin_{(\theta_1',\theta_2')\in \RR^2}  (\theta_1' - v)^2,    
    \end{split}
    \label{eq:toy-lls}
\end{equation*}
where $\th = (\th_1, \th_2) $ and the optimal solution is $v^*=1, \th^*=(1,1)$. Note that the inner objective has infinite many optimal solution $\theta^*(v)$ since it is degenerated. We set both the inner and outer stepsizes to $0.5$ and $T=10$ for all methods. For BVFSM and Penalty, we grid search the best hyperparameters from $\{0.001, 0.01, 0.1\}$. In Fig.~\ref{fig:toy_lls_full}, we provide the distance of $f(v_k, \theta_k), g(v_k, \theta_k), \theta_k, \v_k$ to their corresponding optimal over training time in seconds. Note that BOME ensures that $\hat{q}(v_k, \theta_k) = g(v_k, \th_k) - g(v^*, \th^*)$ decreases to 0.

\begin{figure*}[h!]
    \centering
    \includegraphics[width=\columnwidth]{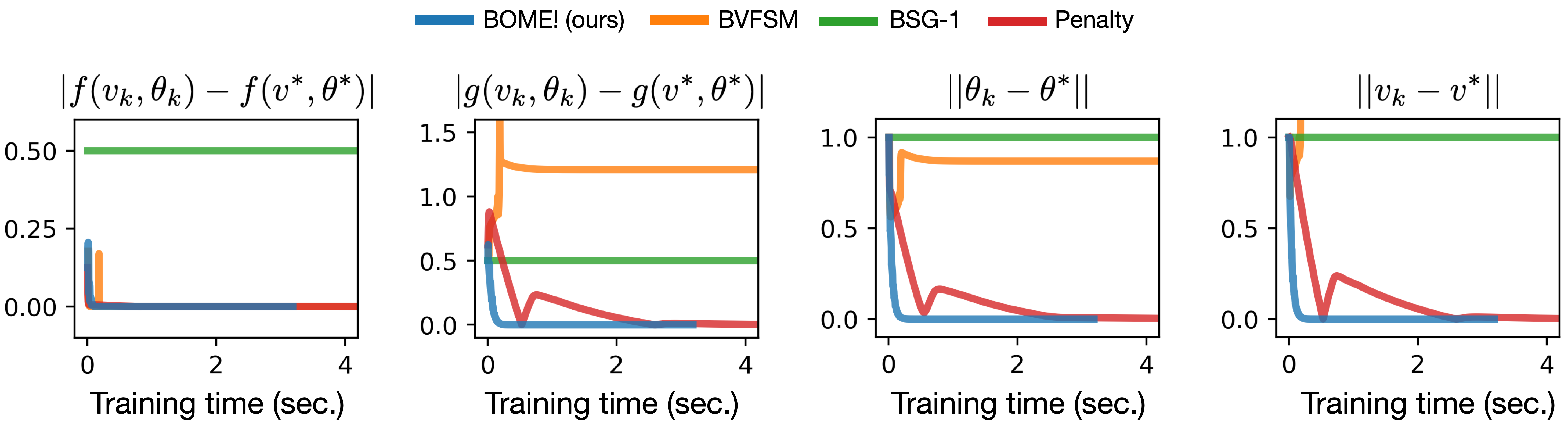}
    \vspace{-15pt}
    \caption{Results on Problem~\eqref{eq:toy-lls} which violates the low-level singleton (LLS). We compare BOME against BSG-1, BVFSM, and Penalty. 
    $(v^*,\theta^*)$ denotes the true optimum and The four plots show how fast  $f(v_k,\theta_k)$, $g(v_k,\theta_k)$, $\theta_k$ and $v_k$ to the corresponding optimal values w.r.t. the training time in seconds.
    }
    \label{fig:toy_lls_full}
\end{figure*}

\subsection{Data Hyper-cleaning}  
\label{sec:apx-exp-hyperclean}
The bilevel problem for data hyper-cleaning is
 \[
    \min_{v, \theta} 
    \ell^{\text{val}}(\theta), ~~~ 
    \text{s.t.}~~~\theta = \argmin_\theta 
    \ell^{\text{train}}(\theta, v)
    + c \norm{\theta}^2,
\]
where $\ell^{\text{val}}$ is the validation loss on $\mathcal D^{\text{val}}$, and $\ell^{\text{train}}$ is a weighted training loss: $\ell^{\text{train}} = \sum_{i=1}^m \sigma(v_i)  \ell(x_i, y_i, \theta)$ with $\sigma(v)= \text{Clip}(v, [0,1])$ and $v \in \mathbb{R}^m$. The training data is of size $m=50000$ and hence $w \in [0,1]^{50000}$. The validation data is of size $m=5000$. The model $\theta = (W, b)$ is a linear model with weight $W$ and bias $b$. Where $W \in \mathbb{R}^{10\times 784}$ and $b \in \mathbb{R}^{10}$.
For this problem, we set inner stepsize $\alpha = 0.01$ for both MNIST and FashionMNIST dataset for all methods as larger or smaller $\alpha$ result in worse performance. As we observe $v$'s gradient norm is much smaller than $\theta$'s in practice, we conduct a grid search over $\xi_v$ from $\{10.0, 50.0, 100.0, 500.0, 1000.0\}$ and also search whether to apply momentum for gradient descent, for all methods. The momentum is searched from $\{0.0, 0.9\}$. For BVFSM and Penalty methods, we also search for their best hyperparameters from $\{0.001, 0.01, 0.1, 1\}$. The model's initial parameter $\theta_0$ is initialized from a pretrained model learned only on the corrupted data. We split the dataset into $4$ parts: train set, validation set 1, validation set 2, and the test set. For each method, the model is learned on the train set, and the hyperparameter $v$ is tuned using validation set 1. The best hyperparameter of any algorithm (e.g. stepsize, barrier coefficient, etc.) are then chosen based on the best validation performance on validation set 2. Then we report the final performance of the model on the test set. To conduct the ablation on $\alpha$ for BOME, we search for $\alpha \in \{0.25\xi, 0.5 \xi, \xi, 2\xi\}$, where $\xi=0.01$ is the best stepsize we found for BOME. Results on MNIST and FashionMNIST dataset are provided in Fig.~\ref{fig:ho_full}. In the first column of Fig.~\ref{fig:ho_full}, we compare BOME with ($T=1$ and $T=20$) with baseline methods whose $T$ is chosen based on best performance on the validation set 2.

\textbf{Remark:} In Fig.~\ref{fig:ho} (top row), we do not include the performance of BSG-1 as we fail to find a set of hyperparameters for BSG-1 to make it work on these data hyper-cleaning problems. VRBO's performance at convergence is tuned by hyparameter search. However, we observe that VRBO learns slowly in practice, as it requires multiple steps of Hessian vector products at each step. We notice that this is slightly inconsistent with the findings in the original paper~\citep{yang2021provably}. We adapt the code from ~\url{https://github.com/JunjieYang97/MRVRBO} and find the original implementation is also slow. It is possible that a good set of hyperparameters can result in better performance.

\begin{figure*}[t]
    \centering
    \includegraphics[width=\textwidth]{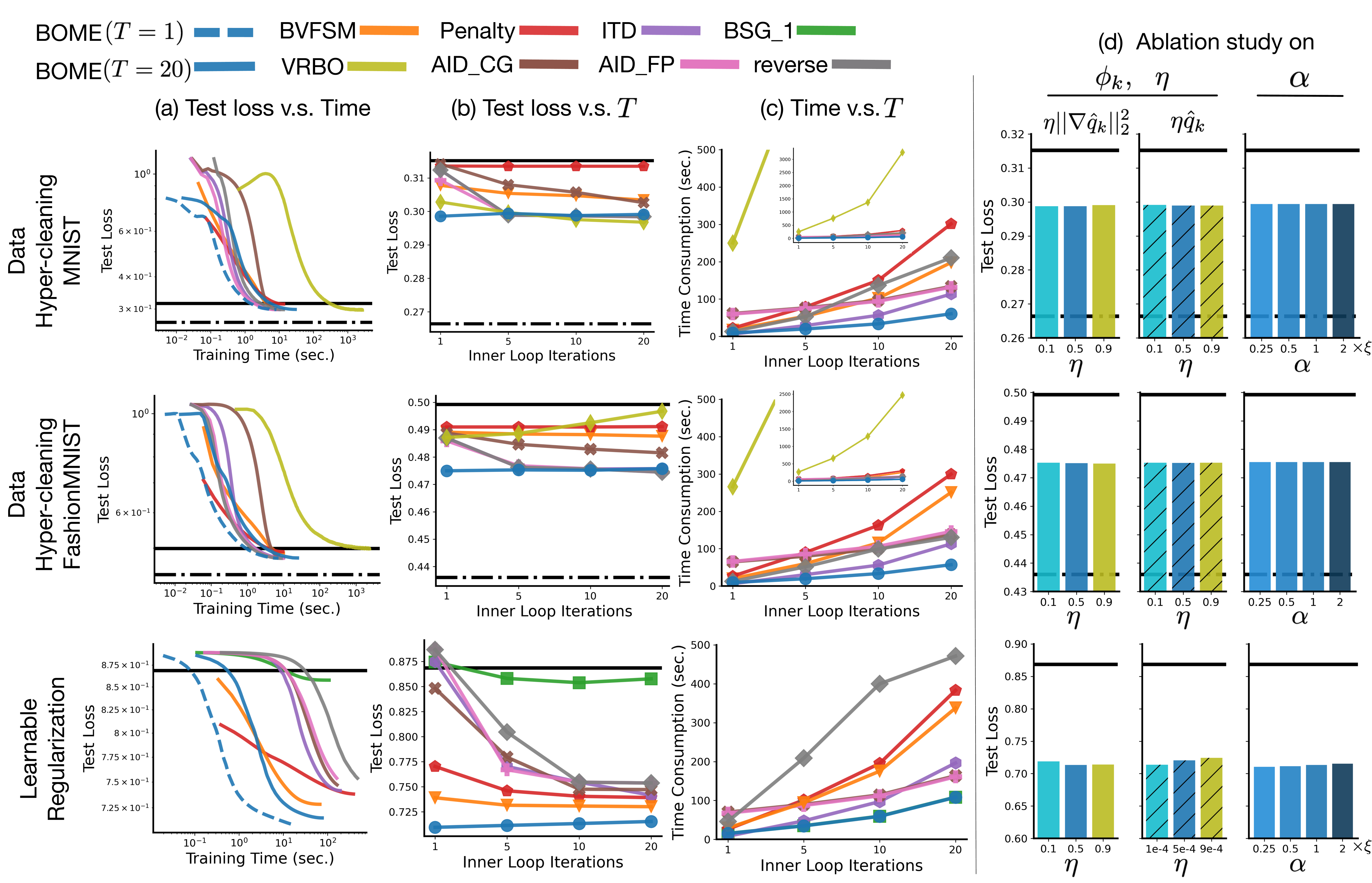}
    \vspace{-15pt}
    \caption{Bilevel optimization for hyperparameter optimization. \textbf{Top:} data hyper-cleaning on MNIST dataset. The solid black line is the model performance trained purely on the validation set and the dashed black line is the model performance trained on the validation set and on the part of training set that have correct labels.
    \textbf{Middle:} data hyper-cleaning on FashionMNIST dataset. \textbf{Bottom:} Learnable regularization on 20 Newsgroup dataset. The solid black line indicates the model performance without any regularization. The results for each method is averaged over 5 independent runs.}
    \label{fig:ho_full}
\end{figure*}

\subsection{Learnable regularization}
\label{sec:apx-exp-regularization}
The bilevel optimization formulation of the learnable rgularization problem is:
\begin{align*} 
    \min_{v,\theta}  
   \ell^{\text{val}}(\theta)
    ~~~~s.t.~~~~\theta \in \argmin_{\theta'}  \ell^{\text{train}} (\theta') + \norm{W_v \theta'}^2_2.
\end{align*}
We use a linear model who's parameter $\theta$ is a matrix 
(e.g. $\theta \in \mathbb{R}^{20\times 130107}$). Hence $v \in \mathbb{R}^{130107}$.
For this experiment, the inner stepsize $\alpha$ of all methods are searched from $\{1, 10, 100, 1000\}$. The outer stepsize $\xi$ is searched from $\{0.5, 1, 5, 10, 50, 100, 500, 1000\}$. For BVFSM and Penalty methods, we also search for their best hyperparameters from $\{0.001, 0.01, 0.1, 1\}$. Similar to the experiment on Data Hyper-cleaning, we split the dataset into $4$ parts: train set, validation set 1, validation set 2, and the test set. The initial model parameter $\theta_0$ is initialized from a pretrained model without any regularization (e.g. $v = 0$) to speed up the learning. To conduct the ablation on $\alpha$ for BOME, we search for $\alpha \in \{0.25\xi, 0.5 \xi, \xi, 2\xi\}$, where $\xi=100$ is the best stepsize we found for BOME.  In the bottom left of Fig.~\ref{fig:ho_full}, we compare BOME with ($T=1$ and $T=20$) with baseline methods whose $T$ is chosen based on best performance on the validation set 2.

\textbf{Remark:} In Fig.~\ref{fig:ho} (bottom row), we do not include the performance of VRBO as we fail to find a set of hyperparameters for VRBO that works well on the learnable regularization experiment.

\subsection{Continual Learning} 
\label{sec:apx-exp-cl}
Continual learning (CL) experiment follows closely to the setup in contextual transformation network (CTN) from \citet{pham2020contextual}, which trains a deep neural network consisting of
a quickly updated  backbone network (parameterized by $\theta$)
a slowly updated controller network  (parameterized by $\theta$).
When training the $\tau$-th task, the update on $(v, \theta)$ is solved from a bilevel optimization: 
\begin{equation*}
    \begin{split}
        \min_{v, \theta} \ell_{1:\tau}^\text{val}\big(v, \th\big) ~~~~\text{s.t.}~~~~~\th \in \argmin_{\th'}  \ell_{1:\tau}^\text{train}\big(v, \th'\big).
    \end{split}
\end{equation*} 

More specifically,
\begin{equation}
    \ell_{1:\tau}^\text{val}\big(v, \th\big) =  L^\text{ctrl}\big(\{\th^*(v), v\}; \mathcal{M}^\text{sm}_{<t+1}\big),~~~~\text{and}~~~~
        \ell_{1:\tau}^\text{train}\big(v, \th'\big) = L^\text{tr}\big(\{\th, v,\}, D_t\cup \mathcal{M}^\text{em}_{<t}\big).
\end{equation}
Here, $\mathcal{M}^\text{sm}_t$ and $\mathcal{M}^\text{em}_t$ denote the semantic and episodic memory of task $t$ (e.g. they can be think of validation and training data) and $D_t$ is the training data of task $t$. Hence, the inner objective learns a backbone $\theta^*(v)$ that performs well on the training data which consists of the current task data $D_t$ as well as previous episodic memories $\mathcal{M}^\text{em}_{<t}$. Then, the outer objective encourages good generalization on the held out validation data, which consists of the semantic memory $\mathcal{M}^\text{sm}_{<t+1}$.
All hyperparameters of BOME are set to the same as those of CTN. We choose $\phi_k = \eta \norm{\nabla \hat{q}_k(v_k, \theta_k}$ where $\eta = 2.0$, here $\eta$ is chosen from $\{0.1, 0.5, 1.0, 2.0\}$.

\begin{table*}[h]
\centering
\resizebox{\textwidth}{!}{
\begin{tabular}{lcccccc}
\toprule 
\multirow{3}{*}{Method} & \multicolumn{3}{c}{PMNIST} & \multicolumn{3}{c}{Split CIFAR}\\
\cmidrule(lr){2-4}  \cmidrule(lr){5-7}
 & ACC~$(\uparrow)$ & NBT~$(\downarrow)$ & FT~$(\uparrow)$ & ACC~$(\uparrow)$ & NBT~$(\downarrow)$ & FT~$(\uparrow)$\tabularnewline
\midrule
Offline & $84.95\pm0.95$ & - & - & $74.11\pm0.66$ & - & -\tabularnewline
[0.05cm]
MER & $76.59\pm0.74$ & $6.88\pm0.59$ & $82.32\pm0.34$ & $60.32\pm0.86$ & $11.80\pm0.86$ & $69.23\pm0.40$\tabularnewline
[0.05cm]
GEM & $72.74\pm0.91$ & $9.45\pm0.95$ & $80.53\pm0.28$ & $61.33\pm1.16$ & $9.21\pm0.94$ & $69.37\pm0.72$\tabularnewline
[0.05cm]
ER-Ring & $72.11\pm0.46$ & $9.53\pm0.23$ & $80.06\pm0.37$ & $61.96\pm1.22$ & $8.48\pm1.54$ & $69.14\pm0.87$\tabularnewline
\midrule
CTN~(+ITD) & $78.40\pm0.28$ & $6.25\pm0.41$ & $84.02\pm0.29$ & $67.7\pm60.96$ & $5.76\pm0.69$ & $72.58\pm0.62$\tabularnewline
[0.05cm]
CTN~(+BVFSM) & $77.78\pm0.32$ & $7.86\pm0.32$ & $85.03\pm0.28$ & $67.04\pm0.76$ & $7.61\pm0.47$ & $\pmb{74.01}\pm0.57$ \tabularnewline
[0.05cm]
CTN~(+Penalty) & $67.74\pm0.42$ & $10.10\pm0.48$ & $77.28\pm0.61$ & $47.41\pm2.93$ & $9.34\pm2.74$ & $55.76\pm1.64$\tabularnewline
[0.05cm]
CTN~(+BOME) & $\pmb{80.70}\pm0.26$ & $\pmb{4.73}\pm0.23$ & $\pmb{84.79}\pm0.25$ & $\pmb{68.16}\pm0.60$ & $\pmb{5.32}\pm0.76$ & $72.88\pm0.48$\tabularnewline
\bottomrule 
\end{tabular}
}
\caption{Results of continual learning as bilevel optimization. We compute the mean and standard error of each method's results over 5 independent runs.
Best results are \textbf{bolded}.
}
\label{tab:cl-full}
\end{table*}
\newpage

\section{Proof of the Result in Section \ref{sec:kkt}}
We proof Proposition \ref{pro:kkt} using Proposition 6.3 (presented below using our notation) in \citet{gong2021automatic} by checking all the assumptions required by Proposition 6.3 in \citet{gong2021automatic} are satisfied. Specifically, it remains to show that for any $k$, $\lambda_k<\infty$, $\lim_{k\to\infty}\nabla q(v_{k},\th_{k})=0$ and $q$ is lower bounded (this is trivial as $q\ge 0$ by its definition), which we prove below.

Firstly, simple calculation shows that for any $k$,
\[
\lambda_{k}=\left[\frac{-\left\langle \nabla f(v_{k},\th_{k}),\nabla q(v_{k},\th_{k})\right\rangle }{||\nabla q(v_{k},\th_{k})||^{2}}\right]_+\le\frac{\sup_{v,\th}||\nabla f(v,\th)||}{||\nabla q(v_{k},\th_{k})||}<\infty.
\]
Here the last inequality is by $||\nabla q(v_k,\th_k)||>0$.
Secondly, note that as we assume $\nabla q$ is continuous, this implies that 
\[
\lim_{k\to\infty}\nabla q(v_{k},\th_{k})=\nabla q(v^{*},\th^{*}).
\]
As $q(v^{*},\th^{*})=0$, we have $\nabla q(v^{*},\th^{*})=0$.
Using Proposition 6.3 in \citet{gong2021automatic} gives the desired result.

\begin{pro}[Proposition 6.3 in \citet{gong2021automatic}]
Assume $f,q,\nabla q$ are continuously differentiable. Let $\{[v_{k},\th_{k},\lambda_{k}]:k=1,2,...\}$
be a sequence which satisfies $\lim_{k\to\infty}||\nabla q(v_{k},\th_{k})||=0$
and $\lim_{k\to\infty}||\nabla f(v_{k},\th_{k})+\lambda_{k}\nabla q(v_{k},\th_{k})||=0$.
Assume that $[v^{*},\th^{*}]$ is a limit point of $[v_{k},\th_{k}]$
as $k\to\infty$ and $[v^{*},\th^{*}]$ satisfies CRCQ with $\nabla_{\th}q$,
then there exists a vector-valued Lagrange multiplier $\omega^{*} \in \RR^m$
(the same length as $\th$) such that 
\[
\nabla f(v^{*},\th^{*})+\nabla(\nabla_{\th}q(v^{*},\th^{*}))\omega^{*}=0.
\]

\end{pro}

\section{Proof of the Result in Section \ref{sec:pl_theory}} \label{apx:pl_theory}
We define $L_{q}:=2L(L/\kappa+1)$ and using Assumption
\ref{asm:Smoothness} and \ref{asm:PL-inequality}, we are able to show that
$q(v,\th)$ is $L_{q}$-smooth (see Lemma \ref{lem:q smoothness} for details).
For simplicity, we also assume that $\xi\le$1 throughout the proof. We use $b$ with some subscript to denote some general $O(1)$ constant and refer reader to section \ref{misc:constant} for their detailed value.

Note that $\hat{q}$ defined in Section \ref{sec:method} changes in different iterations (as it depends on $\th_k^{(T)}$) and so does $\nabla \hat{q}$. To avoid the confusion, we introduce several new notations. Firstly, given $v$ and $\th$, $\th^{(T)}$ denotes the results of $T$ steps of gradient of $g(v,\cdot)$ w.r.t. $\th$ starting from $\th$ with step size $\alpha$ (similar to the definition in (\ref{equ:thetaTT})). Note that $\th^{(T)}$ depends on $v$, $\th$ and $\alpha$. Our notation does not reflects this dependency on $v,\alpha$ as we find it introduces no ambiguity while much simplifies the notation. Also note that when taking gradient on $\hat{q}$, the $\th_k^{(T)}$ at iteration $k$ is treated as a constant and the gradient does not pass through it. To be clear, we define $\hdq(v,\th)=\nabla g(v,\th)-\left[\nabla_{1}^{\top}g(v,\th^{(T)}),\textbf{0}^{\top}\right]^{\top}$, where $\textbf{0}$ denotes a zero vector with the same dimension as $\th$. Using this definition, $\hdq(v_k,\th_k) = \nabla \hat{q}(v_k,\th_k)$ at iteration $k$. We also let $\lm(v,\th)$ be the solution of the dual problem of 
\begin{equation} \label{eq:proof_primal}
    \min_{\delta}||\hdq(v,\th)-\nabla f(v,\th)||^{2}\ s.t.\ \left\langle \hdq(v,\th),\nabla f(v,\th)\right\rangle \ge\ul||\hdq(v,\th)||^{2}.
\end{equation}
That is 
\begin{equation} \label{eq:local solution}
\lm(v,\th)=\begin{cases}
\frac{[\ul||\hdq(v,\th)||^{2}-\left\langle \hdq(v,\th),\nabla f(v,\th)\right\rangle ]_{+}}{||\hdq(v,\th)||^{2}} & \text{when}\ ||\hdq(v,\th)||>0\\
0 & \text{when}\ ||\hdq(v,\th)||=0
\end{cases}
\end{equation}
We might use $\lm$ for $\lm(v,\th)$ when it introduces no confusion. Also, denote $\d(v,\th) = \lm(v,\th) \hdq(v,\th) + \nabla f(v,\th)$ and thus $\delta_k = \d(v_k,\th_k)$.

We start with several technical Lemmas showing some basic function
properties.

\subsection{Technical Lemmas}

\begin{lemma} \label{lem:Lower bound}
Under Assumption \ref{asm:PL-inequality}, for any $v,\th$, $g(v,\th)-g(v,\th^*(v))\ge\frac{\kappa}{4}||\th-\th^{*}(v)||^{2}$.
\end{lemma}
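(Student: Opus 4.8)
This lemma is the classical statement that the Polyak--{\L}ojasiewicz inequality implies a \emph{quadratic growth} (error bound) lower bound, and the cleanest route is the standard ``square-root'' gradient-flow argument. Throughout I would fix $v$ and abbreviate $g^*(v)=g(v,\theta^*(v))$; since Assumption~\ref{asm:PL-inequality} guarantees a \emph{unique} minimizer $\theta^*(v)$, the growth is measured against this single point rather than a projection onto a solution set.

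The plan is to track the scalar $\varphi(\theta):=\sqrt{g(v,\theta)-g^*(v)}\ge 0$ along the gradient flow of $g(v,\cdot)$. First I would introduce the flow $\dot\theta_t=-\nabla_\theta g(v,\theta_t)$ with $\theta_0=\theta$, whose existence and uniqueness follow from the $L$-Lipschitzness of $\nabla g$ in Assumption~\ref{asm:Smoothness}. Differentiating $\varphi$ along the flow and substituting $\dot\theta_t=-\nabla_\theta g$ gives
\[
-\frac{d}{dt}\varphi(\theta_t)=\frac{\|\nabla_\theta g(v,\theta_t)\|^2}{2\sqrt{g(v,\theta_t)-g^*(v)}},
\]
and the PL bound $\|\nabla_\theta g\|^2\ge\kappa(g-g^*)$, i.e.\ $\|\nabla_\theta g\|\ge\sqrt{\kappa}\sqrt{g-g^*}$, lets me lower-bound the right-hand side by $\tfrac{\sqrt{\kappa}}{2}\|\nabla_\theta g(v,\theta_t)\|=\tfrac{\sqrt{\kappa}}{2}\|\dot\theta_t\|$. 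Integrating from $t=0$ to $\infty$ and using $\int_0^\infty\|\dot\theta_t\|\,dt\ge\|\theta_0-\theta_\infty\|$ (the integral triangle inequality) yields
\[
\varphi(\theta_0)-\varphi(\theta_\infty)\ \ge\ \frac{\sqrt{\kappa}}{2}\,\|\theta_0-\theta_\infty\|,
\]
so that, once $\theta_\infty=\theta^*(v)$ and $\varphi(\theta_\infty)=0$ are established, squaring produces exactly $g(v,\theta)-g^*(v)\ge\frac{\kappa}{4}\|\theta-\theta^*(v)\|^2$.

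The only real work is justifying the two limiting facts. From $\frac{d}{dt}(g-g^*)=-\|\nabla_\theta g\|^2\le-\kappa(g-g^*)$ I get $g(v,\theta_t)-g^*(v)\le(g(v,\theta_0)-g^*(v))e^{-\kappa t}\to 0$, so $\varphi(\theta_t)\to 0$ and the boundary term vanishes. The same integration shows the path length $\int_0^\infty\|\dot\theta_t\|\,dt$ is finite (bounded by $\tfrac{2}{\sqrt{\kappa}}\varphi(\theta_0)$), hence the flow converges to some $\theta_\infty$; since $g(v,\theta_\infty)=g^*(v)$, uniqueness of the minimizer in Assumption~\ref{asm:PL-inequality} forces $\theta_\infty=\theta^*(v)$. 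I expect this convergence-of-the-flow step to be the main obstacle, as it is precisely where both the finite-path-length estimate and the uniqueness assumption are needed; everything else is a direct chain-rule computation. A fully discrete variant (iterating gradient descent with step $1/L$ and telescoping a ``square-root'' bound) avoids differential equations but requires the same two ingredients, so I would default to the flow argument for brevity.
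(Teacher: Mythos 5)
Your proposal is correct and matches the paper's intended proof: the paper handles this lemma simply by deferring to Theorem 2 of \citet{karimi2016linear}, and the gradient-flow ``square-root'' argument you outline (tracking $\sqrt{g(v,\theta_t)-g^*(v)}$ along $\dot\theta_t=-\nabla_\theta g(v,\theta_t)$, bounding path length, and identifying the limit with the unique minimizer) is exactly the proof of that cited theorem, with their PL constant $\mu=\kappa/2$ yielding the stated $\kappa/4$. No substantive difference in approach; you have merely written out in full what the paper cites.
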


\begin{lemma} \label{lem:bound hat dq}
Under Assumption \ref{asm:PL-inequality} and \ref{asm:Smoothness}, we have $||\nabla q(v,\th)-\hdq(v,\th)||\le L||\th^{(T)}-\th^{*}(v)||$ for any $v,\th$.
Also, when $||\hat{\nabla}q(v,\th)||=0$,
$q(v,\th)=0$.
\end{lemma}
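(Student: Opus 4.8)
The plan is to use the explicit formula for $\nabla q$ so that its discrepancy with $\hdq$ collapses onto a single block, and then invoke smoothness. First I would write down the true gradient of $q$. Since $q(v,\th)=g(v,\th)-g(v,\th^*(v))$ and, by Danskin's theorem together with the optimality condition $\nabla_2 g(v,\th^*(v))=0$ as in \eqref{eq:dfd}, the value function contributes only $\nabla_1 g(v,\th^*(v))$ to the $v$-block, I get $\nabla q(v,\th)=\nabla g(v,\th)-[\nabla_1^\top g(v,\th^*(v)),\textbf{0}^\top]^\top$. This is exactly the defining formula of $\hdq$ but evaluated at $\th^*(v)$ instead of $\th^{(T)}$.

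Subtracting the two expressions, the full-gradient term $\nabla g(v,\th)$ cancels and the $\theta$-blocks (both equal to $\nabla_2 g(v,\th)$) cancel, leaving
\[
\nabla q(v,\th)-\hdq(v,\th)=[\nabla_1^\top g(v,\th^{(T)})-\nabla_1^\top g(v,\th^*(v)),\textbf{0}^\top]^\top.
\]
Taking norms, the zero block contributes nothing, so $\|\nabla q-\hdq\|=\|\nabla_1 g(v,\th^{(T)})-\nabla_1 g(v,\th^*(v))\|$. Because $\nabla_1 g$ is a subvector of the full gradient, this is at most $\|\nabla g(v,\th^{(T)})-\nabla g(v,\th^*(v))\|$, and the $L$-Lipschitzness of $\nabla g$ (Assumption \ref{asm:Smoothness}), applied with the common argument $v$ held fixed, bounds the latter by $L\|\th^{(T)}-\th^*(v)\|$. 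This establishes the first claim.

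For the second claim, I would observe that the $\theta$-block of $\hdq(v,\th)$ equals $\nabla_2 g(v,\th)=\nabla_\th g(v,\th)$, since the subtracted vector is zero in that block. Hence $\|\hdq(v,\th)\|=0$ forces $\nabla_\th g(v,\th)=0$, and substituting into the PL inequality (Assumption \ref{asm:PL-inequality}) gives $0=\|\nabla_\th g(v,\th)\|^2\ge\kappa(g(v,\th)-g(v,\th^*(v)))=\kappa\,q(v,\th)$; since $\kappa>0$ and $q\ge0$ by definition of the minimizer $\th^*(v)$, we conclude $q(v,\th)=0$.

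This argument is essentially a direct computation. The only point demanding care is correctly identifying $\nabla q$ via Danskin's theorem and recognizing that the gap between $\nabla q$ and $\hdq$ lives entirely in the $v$-block, so that the implicit-derivative term $\nabla_v\th^*(v)$ never enters; once this is in place no real obstacle remains.
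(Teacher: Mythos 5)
Your proof is correct and follows essentially the same route as the paper's: both compute $\nabla q$ via Danskin's theorem (using $\nabla_2 g(v,\th^*(v))=0$), observe that the discrepancy with $\hdq$ lies entirely in the $v$-block, bound it by $L$-Lipschitzness of $\nabla g$, and derive the second claim from the PL inequality after noting the $\theta$-block of $\hdq$ is $\nabla_\th g(v,\th)$. Your explicit remark that a subvector's norm is dominated by the full gradient's norm is a minor clarification the paper leaves implicit, but the argument is identical in substance.
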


\begin{lemma} \label{lem:th implicit lipschitz}
Under Assumption \ref{asm:Smoothness}, \ref{asm:PL-inequality},
we have $||\th^{*}(v_{2})-\th^{*}(v_{1})||\le\frac{2L}{\kappa}||v_{1}-v_{2}||.$
\end{lemma}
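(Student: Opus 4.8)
The plan is to exploit the first-order optimality of $\theta^*(v_1)$ and $\theta^*(v_2)$ together with the PL inequality, evaluating the gradient of $g(v_1,\cdot)$ at the ``wrong'' minimizer $\theta^*(v_2)$ and bounding its norm from two sides. Write $\theta_1^* := \theta^*(v_1)$ and $\theta_2^* := \theta^*(v_2)$. The target inequality controls $\|\theta_2^* - \theta_1^*\|$, so I want to manufacture a quantity that is simultaneously a lower bound in terms of $\|\theta_2^* - \theta_1^*\|$ and an upper bound in terms of $\|v_1 - v_2\|$; the natural candidate is $\|\nabla_\theta g(v_1, \theta_2^*)\|$.

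First I would establish the lower bound. Applying the PL inequality (Assumption~\ref{asm:PL-inequality}) at $v = v_1$, $\theta = \theta_2^*$ gives $\|\nabla_\theta g(v_1, \theta_2^*)\|^2 \ge \kappa\,(g(v_1,\theta_2^*) - g(v_1,\theta_1^*))$. I then invoke the quadratic-growth bound of Lemma~\ref{lem:Lower bound} at $v_1$, namely $g(v_1,\theta_2^*) - g(v_1,\theta_1^*) \ge \tfrac{\kappa}{4}\|\theta_2^* - \theta_1^*\|^2$. Chaining these two estimates yields $\|\nabla_\theta g(v_1, \theta_2^*)\|^2 \ge \tfrac{\kappa^2}{4}\|\theta_2^* - \theta_1^*\|^2$, i.e. $\|\nabla_\theta g(v_1, \theta_2^*)\| \ge \tfrac{\kappa}{2}\|\theta_2^* - \theta_1^*\|$.

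Next I would establish the upper bound. Since $\theta_2^*$ minimizes $g(v_2,\cdot)$, first-order optimality gives $\nabla_\theta g(v_2,\theta_2^*) = 0$, so $\|\nabla_\theta g(v_1,\theta_2^*)\| = \|\nabla_\theta g(v_1,\theta_2^*) - \nabla_\theta g(v_2,\theta_2^*)\|$. Because $\nabla g$ is $L$-Lipschitz in the joint input $(v,\theta)$ by Assumption~\ref{asm:Smoothness}, and here only the $v$-argument changes while $\theta = \theta_2^*$ is held fixed, this difference is at most $L\|v_1 - v_2\|$. Combining with the lower bound gives $\tfrac{\kappa}{2}\|\theta_2^* - \theta_1^*\| \le L\|v_1 - v_2\|$, which rearranges to the claim $\|\theta^*(v_2) - \theta^*(v_1)\| \le \tfrac{2L}{\kappa}\|v_1 - v_2\|$.

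I do not expect a serious obstacle here; the argument is a short two-sided estimate. The only points requiring care are (i) recognizing that evaluating $\nabla_\theta g(v_1,\cdot)$ at the \emph{other} minimizer $\theta_2^*$ is what couples the two problems, (ii) that the PL inequality plus Lemma~\ref{lem:Lower bound} together convert a gradient-norm lower bound into a distance lower bound, and (iii) that the single constant $L$ from Assumption~\ref{asm:Smoothness} legitimately bounds the change of $\nabla_\theta g$ in $v$ alone, since it is stated for the joint variables.
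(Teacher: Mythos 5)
Your proof is correct and follows essentially the same route as the paper: both arguments bound $\|\nabla_\theta g(v_1,\theta^*(v_2))\|$ from below by $\tfrac{\kappa}{2}\|\theta^*(v_2)-\theta^*(v_1)\|$ via the PL inequality combined with the quadratic-growth bound of Lemma~\ref{lem:Lower bound}, and from above by $L\|v_1-v_2\|$ using $\nabla_\theta g(v_2,\theta^*(v_2))=0$ together with the joint Lipschitz constant from Assumption~\ref{asm:Smoothness}. Nothing is missing; the paper merely writes the upper-bound step as $\|\nabla_2[\,g(v_1,\cdot)-g(v_2,\cdot)\,]\|\le\|\nabla_{[1,2]}[\,g(v_1,\cdot)-g(v_2,\cdot)\,]\|\le L\|v_1-v_2\|$, which is exactly your point (iii).
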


\begin{lemma} \label{lem:q smoothness}
Under Assumption \ref{asm:Smoothness}, we have $||\nabla_{\th}q(v,\th_{1})-\nabla_{\th}q(v,\th_{2})||\le L||\th_{1}-\th_{2}||$,
for any $v$. Further assume Assumption \ref{asm:PL-inequality}, we have 
\[
\left\Vert \nabla q(v_{1},\th_{1})-\nabla q(v_{2},\th_{2})\right\Vert \le L_{q}||[v_{1},\th_{1}]-[v_{2},\th_{2}]||,
\]
where $L_{q}:=2L(L/\kappa+1)$.
\end{lemma}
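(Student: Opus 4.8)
The plan is to reduce both claims to the joint $L$-Lipschitzness of $\nabla g$ (Assumption~\ref{asm:Smoothness}) by writing out the gradient of $q$ explicitly. The key observation is that, since $g^*(v)=g(v,\th^*(v))$ does not depend on $\th$, Danskin's theorem (cf.\ \eqref{eq:dfd}) gives $\nabla_v g^*(v)=\nabla_1 g(v,\th^*(v))$, and hence the clean decomposition
\[
\nabla q(v,\th) = \nabla g(v,\th) - \bigl[\nabla_1^\top g(v,\th^*(v)),\, \mathbf{0}^\top\bigr]^\top,
\]
where the correction term has a vanishing $\th$-block. This mirrors the definition of $\hdq$ with $\th^{(T)}$ replaced by $\th^*(v)$, so the same bookkeeping will be reusable.

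\textbf{First claim ($\th$-Lipschitzness).} Here I would simply read off the $\th$-block of the decomposition: since the correction term carries no $\th$ dependence, $\nabla_\th q(v,\th)=\nabla_\th g(v,\th)$. Fixing $v$ and applying Assumption~\ref{asm:Smoothness} with identical first coordinate, together with the fact that the norm of a sub-block is dominated by the norm of the full vector, yields $\|\nabla_\th q(v,\th_1)-\nabla_\th q(v,\th_2)\|\le\|\nabla g(v,\th_1)-\nabla g(v,\th_2)\|\le L\|\th_1-\th_2\|$. Note this step does not use Assumption~\ref{asm:PL-inequality}.

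\textbf{Second claim (joint Lipschitzness).} I would apply the triangle inequality to the decomposition, splitting $\nabla q(v_1,\th_1)-\nabla q(v_2,\th_2)$ into the difference of full gradients $\nabla g(v_1,\th_1)-\nabla g(v_2,\th_2)$ and the difference of correction terms $\nabla_1 g(v_1,\th^*(v_1))-\nabla_1 g(v_2,\th^*(v_2))$. The first is bounded by $L\|[v_1,\th_1]-[v_2,\th_2]\|$ directly from Assumption~\ref{asm:Smoothness}. For the second, since $\nabla_1 g$ is a sub-block of $\nabla g$ it is also $L$-Lipschitz in the joint variable, hence at most $L(\|v_1-v_2\|+\|\th^*(v_1)-\th^*(v_2)\|)$; I then invoke Lemma~\ref{lem:th implicit lipschitz} to replace $\|\th^*(v_1)-\th^*(v_2)\|$ by $\tfrac{2L}{\kappa}\|v_1-v_2\|$. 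Collecting terms and using $\|v_1-v_2\|\le\|[v_1,\th_1]-[v_2,\th_2]\|$ gives the constant $L+L(1+2L/\kappa)=2L(L/\kappa+1)=L_q$.

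There is no serious obstacle; the argument is entirely bookkeeping once the decomposition is in place. The only point requiring care is the appeal to Lemma~\ref{lem:th implicit lipschitz}, which is precisely where Assumption~\ref{asm:PL-inequality} enters, since the PL inequality is needed to control how fast the inner optimizer $\th^*(v)$ moves with $v$. This explains why the first, $\th$-only claim holds under smoothness alone, whereas the joint claim additionally requires the PL inequality.
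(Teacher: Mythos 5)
Your proof is correct and takes essentially the same route as the paper's: the same Danskin-based decomposition $\nabla q(v,\th) = \nabla g(v,\th) - [\nabla_1^\top g(v,\th^*(v)),\,\mathbf{0}^\top]^\top$, the same triangle-inequality split into the full-gradient difference and the correction-term difference, and the same appeal to Lemma~\ref{lem:th implicit lipschitz} (which is indeed exactly where Assumption~\ref{asm:PL-inequality} enters), arriving at the identical constant $L_q = 2L(L/\kappa+1)$. The only cosmetic difference is that you bound the joint Euclidean norm by the sum of the component norms where the paper keeps the $\sqrt{\|v_1-v_2\|^2+\|\th^*(v_1)-\th^*(v_2)\|^2}$ form; both give the same final constant.
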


\begin{lemma} \label{lem:opt th_T}
Under Assumption \ref{asm:PL-inequality}, \ref{asm:Smoothness} and assume
that $\alpha<2/L$. Given any $v,\th$, suppose $\th^{(0)}=\th$ and $\th^{(t+1)}=\th^{(t)}-\alpha\nabla_{\th}q(v,\th^{(t)})$,
then for any $t$, we have $q(v,\th^{(t)})\le\exp(-\bi(\alpha,L,\kappa)t)q(v,\th)$,
where $\bi(\alpha,L,\kappa)=$ is some strictly positive constant that
depends on $\alpha$, $L$ and $\kappa$.
\end{lemma}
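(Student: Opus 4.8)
The plan is to recognize this as the textbook linear-convergence guarantee for gradient descent under the Polyak--\L{}ojasiewicz (PL) condition, specialized to the map $\th\mapsto q(v,\th)$ at a fixed $v$. First I would record the two structural facts that drive the argument. Since $g^{*}(v)$ does not depend on $\th$, we have $\nabla_{\th}q(v,\th)=\nabla_{\th}g(v,\th)$, so the iteration $\th^{(t+1)}=\th^{(t)}-\alpha\nabla_{\th}q(v,\th^{(t)})$ is exactly gradient descent on $g(v,\cdot)$, and Assumption \ref{asm:PL-inequality} can be rewritten (using $q(v,\th)=g(v,\th)-g(v,\th^{*}(v))$) as $\|\nabla_{\th}q(v,\th)\|^{2}\ge\kappa\,q(v,\th)$ for all $\th$. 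Lemma \ref{lem:q smoothness} supplies the complementary fact that $q(v,\cdot)$ is $L$-smooth in $\th$.

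Next I would run a single step of the descent lemma. The $L$-smoothness quadratic upper bound is $q(v,\th^{(t+1)})\le q(v,\th^{(t)})+\langle\nabla_{\th}q(v,\th^{(t)}),\,\th^{(t+1)}-\th^{(t)}\rangle+\frac{L}{2}\|\th^{(t+1)}-\th^{(t)}\|^{2}$. Substituting $\th^{(t+1)}-\th^{(t)}=-\alpha\nabla_{\th}q(v,\th^{(t)})$ collapses this to $q(v,\th^{(t+1)})\le q(v,\th^{(t)})-\alpha(1-\frac{L\alpha}{2})\|\nabla_{\th}q(v,\th^{(t)})\|^{2}$, whose coefficient $\alpha(1-\frac{L\alpha}{2})$ is strictly positive precisely because $\alpha<2/L$.

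Then I would convert this gradient-norm decrease into a geometric contraction by inserting the PL lower bound $\|\nabla_{\th}q\|^{2}\ge\kappa q$, giving $q(v,\th^{(t+1)})\le(1-\rho)\,q(v,\th^{(t)})$ with $\rho:=\alpha\kappa(1-\frac{L\alpha}{2})>0$. Unrolling this recursion over $t$ steps and using $1-\rho\le e^{-\rho}$ yields $q(v,\th^{(t)})\le e^{-\rho t}q(v,\th)$, so the claim holds with $\bi(\alpha,L,\kappa)=\rho$, which is a strictly positive $O(1)$ constant depending only on $\alpha,L,\kappa$.

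The only point requiring care, and the thing I would flag as the main (minor) obstacle, is justifying that $1-\rho$ is a legitimate contraction factor lying in $[0,1)$. Strict positivity of $\rho$ is immediate from $\alpha<2/L$, but I also want $\rho\le 1$ so the factor is nonnegative. This follows because $L$-smoothness together with the PL inequality forces $\kappa\le 2L$: minimizing the descent-lemma bound over the step size gives $\|\nabla_{\th}q\|^{2}\le 2L\,q$, and combining with $\kappa q\le\|\nabla_{\th}q\|^{2}$ yields $\kappa\le 2L$; since $\alpha(1-\frac{L\alpha}{2})\le\frac{1}{2L}$ for all $\alpha$, we get $\rho\le\kappa/(2L)\le 1$. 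Alternatively one can sidestep this entirely: because $q\ge 0$, if the one-step bound ever had a negative right-hand side it would force $q(v,\th^{(t+1)})=0$, after which the exponential estimate holds trivially, so $q(v,\th^{(t)})\le e^{-\rho t}q(v,\th)$ is valid in all cases.
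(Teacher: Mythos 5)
Your proposal is correct and follows essentially the same route as the paper's proof: a one-step descent lemma for $q(v,\cdot)$ under $L$-smoothness, then the PL inequality to get the contraction factor $1-\alpha\kappa(1-L\alpha/2)$, then unrolling over $t$ steps. Your added verification that this factor lies in $[0,1)$ (via $\kappa\le 2L$) and your use of $1-\rho\le e^{-\rho}$ to write the rate as $\exp(-\rho t)$ with $\bi=\rho$ are minor refinements of details the paper handles implicitly (the paper instead defines $\bi$ through a logarithm of the contraction factor), not a different argument.
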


\begin{lemma} \label{lem:bound d}
Under Assumption \ref{asm:Boundedness}, for any $[v,\th]$, we
have $||\d(v,\th)||,||\nabla q(v,\th)||,||\hdq(v,\th)||\le\bii(M,\ul )$,
where $\bii(M,\ul )=(3+\ul)M$.
\end{lemma}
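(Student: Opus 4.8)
The plan is to bound the three quantities $\|\d(v,\th)\|$, $\|\nabla q(v,\th)\|$, and $\|\hdq(v,\th)\|$ one at a time, relying only on the triangle inequality, the Cauchy--Schwarz inequality, and the uniform bounds $\|\nabla f\|,\|\nabla g\|,|f|,|g|\le M$ supplied by Assumption~\ref{asm:Boundedness}. The single structural observation I use repeatedly is that $\nabla_1 g$ is a sub-block of the full gradient $\nabla g$, so $\|\nabla_1 g(v,\cdot)\|\le\|\nabla g(v,\cdot)\|\le M$ for any argument, and appending a zero block leaves the norm unchanged.

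First I would dispatch the two gradient-of-constraint terms, which are the easy ones. By Danskin's theorem \eqref{eq:dfd} we have $\nabla q(v,\th)=\nabla g(v,\th)-[\nabla_1 g(v,\th^{*}(v))^{\top},\,\mathbf 0^{\top}]^{\top}$, and by construction $\hdq(v,\th)=\nabla g(v,\th)-[\nabla_1 g(v,\th^{(T)})^{\top},\,\mathbf 0^{\top}]^{\top}$. These have identical shape, differing only in the inner argument $\th^{*}(v)$ versus $\th^{(T)}$, so the triangle inequality yields
\[
\max\{\|\nabla q(v,\th)\|,\,\|\hdq(v,\th)\|\}\le\|\nabla g(v,\th)\|+\|\nabla_1 g(v,\cdot)\|\le 2M,
\]
which is already below $\bii(M,\ul)=(3+\ul)M$.

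The remaining term is $\d(v,\th)=\nabla f(v,\th)+\lm(v,\th)\,\hdq(v,\th)$, and here I would split on the value of $\lm$. In the degenerate case $\|\hdq(v,\th)\|=0$ the piecewise definition \eqref{eq:local solution} forces $\lm=0$, so $\|\d\|=\|\nabla f\|\le M$ and there is nothing to prove. When $\|\hdq(v,\th)\|>0$, the triangle inequality gives $\|\d\|\le\|\nabla f\|+\lm\|\hdq\|$, and the only mildly delicate point is the product $\lm\|\hdq\|$: because $\lm$ carries $\|\hdq\|^{2}$ in its denominator, one power of $\|\hdq\|$ cancels, leaving
\[
\lm\|\hdq\|=\frac{\big[\ul\|\hdq\|^{2}-\langle\hdq,\nabla f\rangle\big]_{+}}{\|\hdq\|}\le\ul\|\hdq\|+\|\nabla f\|,
\]
where the inequality discards the positive-part operator and applies Cauchy--Schwarz to $\langle\hdq,\nabla f\rangle$. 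Substituting $\|\hdq\|\le 2M$ and $\|\nabla f\|\le M$ and collecting terms produces a bound of the form $(a+b\,\ul)M$ that is recorded as $\bii(M,\ul)=(3+\ul)M$.

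I do not expect a genuine obstacle: this is purely a boundedness lemma and every step reduces to the triangle inequality, Cauchy--Schwarz, and the sub-block observation. The only two places that call for care are the cancellation of one factor of $\|\hdq\|$ inside $\lm\|\hdq\|$ and the separate handling of the degenerate case $\|\hdq\|=0$, both of which are made automatic by the piecewise formula \eqref{eq:local solution}.
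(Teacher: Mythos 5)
Your proof is correct and follows essentially the same route as the paper's own: triangle inequality plus the sub-block observation to get $\|\nabla q(v,\th)\|\le 2M$ and $\|\hdq(v,\th)\|\le 2M$ (the paper settles for the looser $3M$ on $\hdq$ by splitting into three terms), then the case split on $\|\hdq\|=0$ and the cancellation $\lm\|\hdq\|\le\ul\|\hdq\|+\|\nabla f\|$ to handle $\d$. The only remark worth making is about constant bookkeeping: your final bound $(2+2\ul)M$ (and likewise the paper's own steps, which actually yield $(2+3\ul)M$ before it records $(2+\ul)M$) is not literally below the stated $\bii(M,\ul)=(3+\ul)M$ once $\ul>1$, but since $\bii$ enters every downstream lemma only as a generic finite constant depending on $M$ and $\ul$, this discrepancy is immaterial.
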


\begin{lemma} \label{lem:bound lambda psi}
Under Assumption \ref{asm:Boundedness}, for any $[v,\th]$, we
have $\lm||\hdq||^{2}\le \ul ||\hdq||^{2}+M||\hdq||,$where
$\lm$ are defined in (\ref{eq:local solution}).
\end{lemma}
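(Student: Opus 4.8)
The plan is to argue directly from the closed form of $\lm$ in \eqref{eq:local solution}, splitting on whether $\|\hdq(v,\th)\|$ vanishes. When $\|\hdq(v,\th)\|=0$ the claimed bound reduces to $0\le 0$ and holds trivially, so all the content lives in the case $\|\hdq(v,\th)\|>0$. There, multiplying the first branch of \eqref{eq:local solution} through by $\|\hdq\|^2$ yields the exact identity $\lm\|\hdq\|^2 = [\,\ul\|\hdq\|^2 - \langle \hdq,\nabla f\rangle\,]_+$, so it only remains to upper bound this single positive-part expression.

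First I would apply subadditivity of the positive part, $[a+b]_+\le [a]_+ + [b]_+$, with $a=\ul\|\hdq\|^2$ and $b=-\langle\hdq,\nabla f\rangle$, to get $\lm\|\hdq\|^2 \le [\,\ul\|\hdq\|^2\,]_+ + [-\langle \hdq,\nabla f\rangle]_+$. The first term equals $\ul\|\hdq\|^2$ since $\ul>0$ and $\|\hdq\|^2\ge 0$. For the second term I would discard the positive part via $[-\langle\hdq,\nabla f\rangle]_+ \le |\langle\hdq,\nabla f\rangle|$, then apply Cauchy--Schwarz to get $|\langle\hdq,\nabla f\rangle|\le \|\hdq\|\,\|\nabla f(v,\th)\|$, and finally invoke Assumption \ref{asm:Boundedness} to replace $\|\nabla f(v,\th)\|$ by $M$. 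Combining the two bounds gives $\lm\|\hdq\|^2\le \ul\|\hdq\|^2 + M\|\hdq\|$, as claimed.

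There is no substantive obstacle: the statement is an elementary consequence of the definition of $\lm$, and the only ingredients are subadditivity of $[\cdot]_+$, Cauchy--Schwarz, and the uniform bound on $\|\nabla f\|$ from Assumption \ref{asm:Boundedness}. The one point to handle carefully is matching the positive-part form of $\lm$ exactly, so that the identity $\lm\|\hdq\|^2 = [\,\ul\|\hdq\|^2 - \langle\hdq,\nabla f\rangle\,]_+$ is used rather than a weaker estimate; everything downstream is routine algebra.
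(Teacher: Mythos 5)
Your proposal is correct and follows essentially the same route as the paper's proof: both reduce to the closed form $\lm\|\hdq\|^2=[\,\ul\|\hdq\|^2-\langle\nabla f,\hdq\rangle\,]_+$ and then apply Cauchy--Schwarz together with the bound $\|\nabla f\|\le M$ from Assumption~\ref{asm:Boundedness}. The only cosmetic difference is that you dispatch the positive part via subadditivity of $[\cdot]_+$ (and treat $\|\hdq\|=0$ explicitly), whereas the paper splits into the two cases where the positive part is active or zero; the content is identical.
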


\begin{lemma} \label{lem:bound dq}
Under Assumption \ref{asm:PL-inequality} and \ref{asm:Smoothness},
we have $||\nabla q(v,\th)||\le2\kappa^{-1/2}L_{q}q^{1/2}(v,\th).$
\end{lemma}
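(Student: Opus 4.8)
The plan is to treat $q$ as a nonnegative, jointly smooth function and invoke the standard ``gradient domination from below'' bound that holds for any smooth function bounded below. First I would record the two structural facts about $q$ that drive everything. By definition $q(v,\th)=g(v,\th)-g^{*}(v)\ge 0$ for every $(v,\th)$, and the value $0$ is attained whenever $\th=\th^{*}(v)$, so $\inf_{(v,\th)}q=0$ over the joint space. Second, Lemma~\ref{lem:q smoothness} already supplies that $q$ is $L_{q}$-smooth in the joint variable $(v,\th)$ with $L_{q}=2L(L/\kappa+1)$.

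Next I would apply the elementary consequence of smoothness. For any $L_{q}$-smooth $h$ with $\inf h=h^{\star}$, taking a single gradient step from $(v,\th)$ and combining the descent inequality
\[
h\big((v,\th)-\tfrac{1}{L_{q}}\nabla h(v,\th)\big)\le h(v,\th)-\tfrac{1}{2L_{q}}\|\nabla h(v,\th)\|^{2}
\]
with $h(\cdot)\ge h^{\star}$ yields $\|\nabla h(v,\th)\|^{2}\le 2L_{q}\big(h(v,\th)-h^{\star}\big)$. Specializing to $h=q$ and $h^{\star}=0$ gives $\|\nabla q(v,\th)\|^{2}\le 2L_{q}\,q(v,\th)$, hence $\|\nabla q(v,\th)\|\le\sqrt{2L_{q}}\,q^{1/2}(v,\th)$.

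Finally I would loosen the constant to the stated form. It suffices to verify $\sqrt{2L_{q}}\le 2\kappa^{-1/2}L_{q}$, equivalently $\kappa\le 2L_{q}$. Combining Assumptions~\ref{asm:PL-inequality} and~\ref{asm:Smoothness} forces $\kappa\le 2L$ (the PL lower bound and the smoothness upper bound on $\|\nabla_{\th}g\|^{2}$ must be compatible at any non-minimizing $\th$), so $L_{q}\ge 2L^{2}/\kappa\ge\kappa/2$ and the claimed inequality follows.

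I do not expect a genuine obstacle; the lemma is a routine corollary of joint smoothness. The only points needing care are (i) confirming that $\inf_{(v,\th)}q$ is exactly $0$ so the descent-lemma argument runs with $h^{\star}=0$, and (ii) the constant bookkeeping, where the stated bound $2\kappa^{-1/2}L_{q}$ is deliberately looser than the sharp $\sqrt{2L_{q}}$ the argument produces. An equally short alternative splits $\nabla q$ into its blocks: using $\nabla_{\th}q=\nabla_{\th}g$ with $\nabla_{\th}g(v,\th^{*}(v))=0$ gives $\|\nabla_{\th}q\|\le L\|\th-\th^{*}(v)\|$, and Danskin's identity \eqref{eq:dfd} gives $\|\nabla_{v}q\|=\|\nabla_{1}g(v,\th)-\nabla_{1}g(v,\th^{*}(v))\|\le L\|\th-\th^{*}(v)\|$; bounding $\|\th-\th^{*}(v)\|\le 2\kappa^{-1/2}q^{1/2}$ via Lemma~\ref{lem:Lower bound} then delivers the same type of estimate directly.
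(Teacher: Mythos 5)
Your proof is correct, but your main route is genuinely different from the paper's. The paper argues pointwise at the minimizer: since $\nabla q(v,\th^{*}(v))=0$, Lemma~\ref{lem:q smoothness} gives $\|\nabla q(v,\th)\|=\|\nabla q(v,\th)-\nabla q(v,\th^{*}(v))\|\le L_{q}\|\th-\th^{*}(v)\|$, and Lemma~\ref{lem:Lower bound} (quadratic growth) converts $\|\th-\th^{*}(v)\|$ into $2\kappa^{-1/2}q^{1/2}(v,\th)$; this is essentially the ``alternative'' you sketch at the end, except the paper works with the joint constant $L_{q}$ rather than your blockwise $L$. Your primary argument instead applies the descent lemma to the $L_{q}$-smooth, nonnegative function $q$ with $\inf q=0$, obtaining the sharper bound $\|\nabla q\|^{2}\le 2L_{q}\,q$, and then recovers the stated (looser) constant via the compatibility inequality $\kappa\le 2L$. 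Both steps are sound: the descent-lemma bound needs only Lemma~\ref{lem:q smoothness} and $q\ge 0$, and $\kappa\le 2L$ follows because Assumption~\ref{asm:PL-inequality} posits a unique minimizer, so a non-minimizing $\th$ exists at which the PL lower bound and the smoothness upper bound $\|\nabla_{\th}g(v,\th)\|^{2}\le 2L\big(g(v,\th)-g^{*}(v)\big)$ can be divided by $g(v,\th)-g^{*}(v)>0$. What each approach buys: yours yields the intrinsically sharper constant $\sqrt{2L_{q}}$ and avoids invoking quadratic growth in the main chain; the paper's produces the stated $\kappa$-dependent constant directly, needs no side argument comparing $\kappa$ to $L$, and reuses exactly the lemmas already in place for the downstream convergence proofs. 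One caveat on your framing: your route is not PL-free either, since the joint $L_{q}$-smoothness of $q$ in Lemma~\ref{lem:q smoothness} itself relies on Assumption~\ref{asm:PL-inequality} through the Lipschitzness of $\th^{*}(v)$ (Lemma~\ref{lem:th implicit lipschitz}).
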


\subsubsection{Lemmas}
Now we give several main lemmas that are used to prove the result
in Section \ref{sec:pl_theory}.

\begin{lemma} \label{lem:one step descent of q}
Under Assumption \ref{asm:PL-inequality}, \ref{asm:Smoothness} and \ref{asm:Boundedness}, when
$||\hdq(v_{k},\th_{k})||>0$, we have
\begin{align*}
q(v_{k+1},\th_{k+1})-q(v_{k},\th_{k}) & \le-\xi \ul ||\nabla q(v_{k},\th_{k})||^{2}+\xi \ul L_{q}||\th_{k}^{(T)}-\th^{*}(v_{k})||\ (L_{q}||\th_{k}^{(T)}-\th^{*}(v_{k})||+2L_{q}||\th_{k}-\th^{*}(v_{k})||)\\
 & +\xi\bii L||\th_{k}^{(T)}-\th^{*}(v_{k})||+L_{q}\xi^{2}\bii^{2}/2.
\end{align*}
When $||\hdq(v_{k},\th_{k})||=0$, we have $q(v_{k+1},\th_{k+1})-q(v_{k},\th_{k})\le\xi^{2}L_{q}\bii^{2}/2$.
\end{lemma}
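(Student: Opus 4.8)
The plan is to apply the descent lemma for the $L_{q}$-smooth function $q$ (Lemma~\ref{lem:q smoothness}) along the update $(v_{k+1},\th_{k+1}) = (v_{k},\th_{k}) - \xi\delta_{k}$, and then account for two distinct sources of slack: the barrier constraint that the direction $\delta_{k}$ is built to satisfy, and the plug-in error between $\hdq$ and $\nabla q$. By $L_{q}$-smoothness,
\[
q(v_{k+1},\th_{k+1}) - q(v_{k},\th_{k}) \le -\xi\langle \nabla q(v_{k},\th_{k}),\, \delta_{k}\rangle + \frac{L_{q}\xi^{2}}{2}||\delta_{k}||^{2}.
\]
I would immediately bound $||\delta_{k}||\le\bii$ by Lemma~\ref{lem:bound d}, which disposes of the quadratic term and produces the final $L_{q}\xi^{2}\bii^{2}/2$ summand, and then focus all remaining effort on lower-bounding the inner product $\langle \nabla q(v_{k},\th_{k}),\delta_{k}\rangle$.

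Writing $e_{k} := \nabla q(v_{k},\th_{k}) - \hdq(v_{k},\th_{k})$, I split $\langle \nabla q(v_{k},\th_{k}),\delta_{k}\rangle = \langle \hdq(v_{k},\th_{k}),\delta_{k}\rangle + \langle e_{k},\delta_{k}\rangle$. The first piece is at least $\ul||\hdq(v_{k},\th_{k})||^{2}$: since $\delta_{k}=\nabla f(v_{k},\th_{k})+\lm(v_{k},\th_{k})\hdq(v_{k},\th_{k})$ is exactly the solution of the barrier problem \eqref{eq:proof_primal}, it satisfies the constraint $\langle \hdq,\delta_{k}\rangle\ge\ul||\hdq||^{2}$, which one checks directly from the two branches of the closed form \eqref{eq:local solution}. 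The second piece is controlled by $\langle e_{k},\delta_{k}\rangle \ge -||e_{k}||\,||\delta_{k}|| \ge -\bii L||\th_{k}^{(T)}-\th^{*}(v_{k})||$, combining $||e_{k}||\le L||\th_{k}^{(T)}-\th^{*}(v_{k})||$ from Lemma~\ref{lem:bound hat dq} with $||\delta_{k}||\le\bii$. Together these give $-\xi\langle\nabla q,\delta_{k}\rangle \le -\xi\ul||\hdq(v_{k},\th_{k})||^{2} + \xi\bii L||\th_{k}^{(T)}-\th^{*}(v_{k})||$, yielding the third summand of the claim.

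The crux is then converting $-\xi\ul||\hdq(v_{k},\th_{k})||^{2}$ into the advertised $-\xi\ul||\nabla q(v_{k},\th_{k})||^{2}$, which is where the first error block is generated and is the main obstacle. I would use the difference-of-squares estimate
\[
||\nabla q(v_{k},\th_{k})||^{2} - ||\hdq(v_{k},\th_{k})||^{2} \le ||e_{k}||\,\big(||\nabla q(v_{k},\th_{k})|| + ||\hdq(v_{k},\th_{k})||\big),
\]
then bound $||\nabla q(v_{k},\th_{k})|| \le L_{q}||\th_{k}-\th^{*}(v_{k})||$ by applying the joint $L_{q}$-Lipschitz estimate of Lemma~\ref{lem:q smoothness} together with the identity $\nabla q(v_{k},\th^{*}(v_{k}))=0$, and bound $||\hdq(v_{k},\th_{k})|| \le ||\nabla q(v_{k},\th_{k})|| + ||e_{k}||$ by the triangle inequality. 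Substituting $||e_{k}|| \le L||\th_{k}^{(T)}-\th^{*}(v_{k})|| \le L_{q}||\th_{k}^{(T)}-\th^{*}(v_{k})||$ reproduces precisely the factor $L_{q}||\th_{k}^{(T)}-\th^{*}(v_{k})||\,(L_{q}||\th_{k}^{(T)}-\th^{*}(v_{k})|| + 2L_{q}||\th_{k}-\th^{*}(v_{k})||)$. The delicate point is keeping $||\nabla q||$ and $||\hdq||$ separate and controlling each by $||\th_{k}-\th^{*}(v_{k})||$ and the plug-in error, rather than collapsing them prematurely, so that the constants line up exactly.

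Finally, the degenerate case $||\hdq(v_{k},\th_{k})||=0$ is handled on its own: Lemma~\ref{lem:bound hat dq} then forces $q(v_{k},\th_{k})=0$, so by Lemma~\ref{lem:Lower bound} we have $\th_{k}=\th^{*}(v_{k})$ and hence $\nabla q(v_{k},\th_{k})=0$; the descent inequality degenerates to $q(v_{k+1},\th_{k+1})-q(v_{k},\th_{k}) \le \tfrac{L_{q}\xi^{2}}{2}||\delta_{k}||^{2} \le \tfrac{L_{q}\xi^{2}\bii^{2}}{2}$, matching the second assertion.
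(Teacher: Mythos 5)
Your proof is correct and follows essentially the same route as the paper's: the $L_q$-smoothness descent inequality, bounding $\|\delta_k\|\le\bii$ via Lemma~\ref{lem:bound d}, splitting $\langle\nabla q,\delta_k\rangle$ into the barrier-constraint term $\ul\|\hdq\|^2$ plus the plug-in error term, and then a difference-of-squares argument (with $\nabla q(v_k,\th^*(v_k))=0$ and Lemma~\ref{lem:bound hat dq}) to pass from $\|\hdq\|^2$ to $\|\nabla q\|^2$. The handling of the degenerate case $\|\hdq(v_k,\th_k)\|=0$ also matches the paper, up to your slightly more roundabout (but valid) use of Lemma~\ref{lem:Lower bound} to conclude $\nabla q(v_k,\th_k)=0$.
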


\begin{lemma} \label{lem:decay q}
Under Assumption \ref{asm:PL-inequality}, \ref{asm:Smoothness} and \ref{asm:Boundedness}, choosing $T\ge\biii(\ul ,r,\kappa,L)$,
we have 
\[
q(v_{k},\th_{k})\le\exp(-\biv k)q(v_{0},\th_{0})+\Delta,
\]
where $\biv=-\log(1-\frac{\xi}{4}\ul \kappa)$ is some strictly positive
constant and $\Delta=O(\exp(-\bi T)+\xi)$.
\end{lemma}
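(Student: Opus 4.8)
The plan is to turn the one-step bound of Lemma~\ref{lem:one step descent of q} into a geometric recursion of the form $q(v_{k+1},\th_{k+1}) \le (1 - \tfrac14\xi\ul\kappa)\,q(v_k,\th_k) + D$ and then unroll it. Write $q_k := q(v_k,\th_k)$ for brevity. The three ingredients I would assemble are: (i) the PL lower bound, which via $\nabla_\th q = \nabla_\th g$ gives $\|\nabla q(v_k,\th_k)\|^2 \ge \|\nabla_\th g(v_k,\th_k)\|^2 \ge \kappa q_k$, so the leading descent term obeys $-\xi\ul\|\nabla q(v_k,\th_k)\|^2 \le -\xi\ul\kappa q_k$; (ii) Lemma~\ref{lem:Lower bound}, which converts the optimization gap into a distance, $\|\th_k - \th^*(v_k)\|^2 \le \tfrac{4}{\kappa} q_k$ and $\|\th_k^{(T)} - \th^*(v_k)\|^2 \le \tfrac{4}{\kappa} q(v_k,\th_k^{(T)})$; and (iii) Lemma~\ref{lem:opt th_T}, which gives $q(v_k,\th_k^{(T)}) \le \exp(-\bi T) q_k$, hence $\|\th_k^{(T)} - \th^*(v_k)\| \le \tfrac{2}{\sqrt\kappa}\exp(-\bi T/2)\,q_k^{1/2}$.

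Substituting these into the $\|\hdq(v_k,\th_k)\|>0$ case of Lemma~\ref{lem:one step descent of q}, every error term that is quadratic in the $\th$-distances becomes a multiple of $q_k$ carrying a factor $\exp(-\bi T/2)$; concretely, after using $\exp(-\bi T)\le\exp(-\bi T/2)$, the middle term is bounded by $\tfrac{12 L_q^2}{\kappa}\,\xi\ul\exp(-\bi T/2)\,q_k$. Choosing $T$ large enough (this is what defines the threshold $\biii$) that $\tfrac{12 L_q^2}{\kappa}\exp(-\bi T/2) \le \tfrac34\kappa$ makes this coefficient at most $\tfrac34\xi\ul\kappa$, so combined with the leading term the net $q_k$-coefficient is $-\tfrac14\xi\ul\kappa$, which is exactly the claimed rate $\biv = -\log(1-\tfrac{\xi}{4}\ul\kappa)$.

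The remaining two terms do not multiply $q_k$ and go into the additive constant $D$. The term $\xi\bii L\|\th_k^{(T)}-\th^*(v_k)\|$ is only linear in $q_k^{1/2}$; I would bound it by invoking boundedness ($q_k\le 2M$ by Assumption~\ref{asm:Boundedness}, so $q_k^{1/2}\le\sqrt{2M}$) to obtain a contribution $O(\xi\exp(-\bi T/2))$, while the last term contributes $O(\xi^2)$, giving $D = O(\xi\exp(-\bi T/2)+\xi^2)$. The degenerate case $\|\hdq(v_k,\th_k)\|=0$ is handled uniformly: by Lemma~\ref{lem:bound hat dq} it forces $q_k=0$, and the corresponding bound $q_{k+1}\le \xi^2 L_q\bii^2/2$ is already absorbed into $D$, so the recursion $q_{k+1}\le(1-\tfrac14\xi\ul\kappa)q_k + D$ holds at every iteration. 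Finally I would unroll it, $q_k \le (1-\tfrac14\xi\ul\kappa)^k q_0 + D\sum_{j\ge0}(1-\tfrac14\xi\ul\kappa)^j \le \exp(-\biv k)q_0 + \tfrac{4D}{\xi\ul\kappa}$, so $\Delta = \tfrac{4D}{\xi\ul\kappa} = O(\exp(-\bi T/2)+\xi)$, which is $O(\exp(-\bi T)+\xi)$ after absorbing the factor $\tfrac12$ into the constant $\bi$.

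The main obstacle is items (ii)--(iii): arranging that all of the approximation-error terms from replacing $\th^*(v_k)$ by $\th_k^{(T)}$ reappear either as multiples of $q_k$ (controllable by enlarging $T$) or, for the unavoidable linear-in-$q_k^{1/2}$ term, as a small constant. The delicate points are the balancing that yields the clean $\tfrac14$ coefficient and the recognition that the $q_k^{1/2}$ error can be tamed only through boundedness, so it survives as the non-vanishing $O(\xi)$ part of $\Delta$ rather than contributing to contraction.
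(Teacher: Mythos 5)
Your proof is correct, and its skeleton (one-step descent from Lemma \ref{lem:one step descent of q}, absorption of the quadratic error term by taking $T$ large, then geometric decay) matches the paper's; but you handle the troublesome term that is linear in $q^{1/2}$ in a genuinely different way, and this changes the constant in the exponent of the floor. The paper never forms a uniform recursion: it absorbs $2\xi\bii L\kappa^{-1/2}\exp(-\bi T/2)\,q^{1/2}(v_k,\th_k)$ into the contraction only when $q(v_k,\th_k)$ lies above a threshold $a=\max\bigl(\tfrac{64\bii^{2}L^{2}}{\ul^{2}\kappa}\exp(-\bi T),\,\tfrac{2L_{q}\xi\bii^{2}}{\ul\kappa}\bigr)$ (using $q^{1/2}\le q/\sqrt{a}$ there), and then argues by induction that once an iterate falls below $a$ the sequence stays trapped below $(1+\tfrac{\xi\ul\kappa}{4})a$ forever; the floor $\Delta$ is the trap level, of order $\exp(-\bi T)+\xi$, precisely because comparing $q^{1/2}$ against $\sqrt{a}$ squares the prefactor $\exp(-\bi T/2)$. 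You instead bound $q^{1/2}\le\sqrt{2M}$ uniformly via Assumption \ref{asm:Boundedness}, which yields a genuine one-step recursion $q_{k+1}\le(1-\tfrac14\xi\ul\kappa)q_k+D$ with $D=O(\xi\exp(-\bi T/2)+\xi^{2})$ (valid also in the degenerate case $\|\hdq\|=0$, as you check) that unrolls directly to $\exp(-\biv k)q_0+\tfrac{4D}{\xi\ul\kappa}$. Your route is simpler, avoiding the case split and the induction entirely, but it buys a floor of $O(\exp(-\bi T/2)+\xi)$ rather than $O(\exp(-\bi T)+\xi)$, and a factor inside an exponent cannot be hidden in a big-$O$ multiplicative constant, so strictly you prove the lemma only after renaming the rate constant to $\bi/2$, as you acknowledge. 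This weakening is immaterial downstream: Lemma \ref{lem:Convergence hk} and Theorem \ref{thm:Convergence k} only assert rates $\exp(-bT)$ for some positive constant $b$ depending on $\alpha,\kappa,L$, so either version of the lemma feeds through unchanged.
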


\begin{lemma} \label{lem:decay hdq}
Under Assumption \ref{asm:PL-inequality}, \ref{asm:Smoothness} and \ref{asm:Boundedness}, we have 
\[
\sum_{k=0}^{K-1}||\nabla q(v_{k},\th_{k})||^{2}\le\frac{\bv q(v_{0},\th_{0})}{\xi}+K\xi^{2}\bvi\Delta,
\]
where $\bv$ is some constant depends on $L_{q},\ul ,\kappa$; $\bvi$ is some constant depends on $\kappa, L$ and $\Delta$ is defined in Lemma \ref{lem:decay q}.
\end{lemma}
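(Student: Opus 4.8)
The plan is to turn the one-step descent inequality for $q$ into a bound on the accumulated gradient norms by telescoping, and to absorb the approximation error introduced by using $\th_k^{(T)}$ in place of $\th^*(v_k)$ through the PL-type estimates already established. Throughout write $q_0:=q(v_0,\th_0)$.

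First I would start from Lemma~\ref{lem:one step descent of q} and, on iterations where $\|\hdq(v_k,\th_k)\|>0$, move the term $\xi\ul\|\nabla q(v_k,\th_k)\|^2$ to the left-hand side, writing
\[
\xi\ul\,\|\nabla q(v_k,\th_k)\|^2 \le \big(q(v_k,\th_k)-q(v_{k+1},\th_{k+1})\big) + A_k,
\]
where $A_k$ gathers the error terms $\xi\ul L_q\|\th_k^{(T)}-\th^*(v_k)\|\big(L_q\|\th_k^{(T)}-\th^*(v_k)\|+2L_q\|\th_k-\th^*(v_k)\|\big)$, $\xi\bii L\|\th_k^{(T)}-\th^*(v_k)\|$, and $L_q\xi^2\bii^2/2$. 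On iterations where $\|\hdq(v_k,\th_k)\|=0$, Lemma~\ref{lem:bound hat dq} gives $q(v_k,\th_k)=0$; since $(v_k,\th_k)$ is then a global minimizer of $q$, we get $\nabla q(v_k,\th_k)=0$, so these terms contribute nothing to the left-hand side while the increase of $q$ is still dominated by $A_k$. Summing over $k=0,\dots,K-1$ telescopes the $q$-differences, and since $q\ge0$ the telescoped remainder is at most $q_0$; this produces the $\bv q_0/\xi$ term after dividing by $\xi\ul$.

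The core of the argument is bounding $\sum_k A_k$, and the idea is to convert the distances to $\th^*(v_k)$ into values of the value function. Lemma~\ref{lem:Lower bound} gives $\|\th_k-\th^*(v_k)\|\le \frac{2}{\sqrt\kappa}q(v_k,\th_k)^{1/2}$, and applying the same estimate at $\th_k^{(T)}$ together with the geometric contraction $q(v_k,\th_k^{(T)})\le \exp(-\bi T)q(v_k,\th_k)$ from Lemma~\ref{lem:opt th_T} yields $\|\th_k^{(T)}-\th^*(v_k)\|\le \frac{2}{\sqrt\kappa}\exp(-\bi T/2)q(v_k,\th_k)^{1/2}$. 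Substituting these, and using Lemma~\ref{lem:bound d} for the boundedness $\|\nabla q\|,\|\hdq\|\le\bii$, rewrites every term of $A_k$ as a product of a small factor ($\exp(-\bi T/2)$, or $\xi$ from the last term) with a power $q_k^{1/2}$ or $q_k$. I would then sum over $k$ with Lemma~\ref{lem:decay q}: from $q(v_k,\th_k)\le \exp(-\biv k)q_0+\Delta$, the geometric part sums to $O(q_0/\xi)$ because $1-\exp(-\biv)=\frac{\xi}{4}\ul\kappa$, so these pieces fold into the $q_0/\xi$ term, while the residual $\Delta$ accumulates linearly in $K$. Dividing through by $\xi\ul$ and collecting the $q_0/\xi$ contributions into $\bv q_0/\xi$, and the residual contributions (dominated by the $\Delta$-accumulation and the per-step $O(\xi^2)$ floor) into the $K$-dependent error term, gives the claimed inequality.

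I expect the main obstacle to be the bookkeeping reconciling the two different decay rates: the approximation error carries the fast $\exp(-\bi T)$ decay from Lemma~\ref{lem:opt th_T}, whereas the accumulated feasibility $\sum_k q_k$ decays only at the slow, $\xi$-dependent rate $\frac{\xi}{4}\ul\kappa$ of Lemma~\ref{lem:decay q}. One must take $T$ at least the threshold $\biii$ required by Lemma~\ref{lem:decay q} so that $\Delta=O(\exp(-\bi T)+\xi)$ holds, and then control the cross term $\exp(-\bi T/2)\sum_k q_k^{1/2}$ (via Cauchy--Schwarz or $ab\le\frac{1}{2}(a^2+b^2)$) so that the residual error stays at the stated order instead of contaminating the leading $q_0/\xi$ term.
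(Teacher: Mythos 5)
Your plan is correct, but it takes a genuinely different route from the paper's proof. The paper's argument is essentially two lines and involves no telescoping at all: by Lemma~\ref{lem:bound dq} (smoothness of $q$ plus quadratic growth, $\|\nabla q(v,\th)\| = \|\nabla q(v,\th)-\nabla q(v,\th^*(v))\| \le L_q\|\th-\th^*(v)\| \le 2\kappa^{-1/2}L_q\, q^{1/2}(v,\th)$), each iterate satisfies the pointwise bound $\|\nabla q(v_k,\th_k)\|^2 \le 4\kappa^{-1}L_q^2\, q(v_k,\th_k)$; substituting the decay estimate of Lemma~\ref{lem:decay q} and summing the geometric series $\sum_{k}\exp(-\biv k)\le (1-\exp(-\biv))^{-1} = 4/(\xi\ul\kappa)$ immediately produces the $\bv q_0/\xi$ term, while the residual $\Delta$ accumulates as $K\bvi\Delta$ — no error terms $A_k$ and no cross terms ever appear. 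Your route instead telescopes the one-step descent inequality of Lemma~\ref{lem:one step descent of q} and must then re-sum the approximation errors, converting them into $\sum_k q_k$ and $\sum_k q_k^{1/2}$ via Lemmas~\ref{lem:Lower bound} and \ref{lem:opt th_T} and invoking Lemma~\ref{lem:decay q} a second time; the delicate point, which you correctly identify, is the cross term $\exp(-\bi T/2)\sum_k q_k^{1/2}$, which must be split (e.g.\ $\exp(-\bi T/2)\sum_k q_k^{1/2} \le \frac{1}{2}K\exp(-\bi T)+\frac{1}{2}\sum_k q_k$ after Cauchy--Schwarz, then $\exp(-\bi T)=O(\Delta)$ by the definition of $\Delta$) so that it lands in $O(q_0/\xi)+O(K\Delta)$ rather than producing a $\Delta/\xi$ term, which would not be dominated for small $\xi K$. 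With that handled, your derivation closes. What your approach buys is independence from Lemma~\ref{lem:bound dq}: the gradient sum is extracted purely from the descent structure, which would survive even without the reverse bound $\|\nabla q\|^2\lesssim q$ (your constant, however, picks up an extra dependence on $M$ through $\bii$). What the paper's approach buys is brevity, since all error-summation work stays confined to Lemma~\ref{lem:decay q} and is done only once. One shared observation: both your derivation and the paper's yield $K\bvi\Delta$, not the $K\xi^2\bvi\Delta$ printed in the statement; the $\xi^2$ factor appears to be a typo, and the downstream use in Lemma~\ref{lem:Convergence hk} is consistent with $K\bvi\Delta$.
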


\begin{lemma} \label{lem:Convergence hk}
Under Assumption \ref{asm:PL-inequality}, \ref{asm:Smoothness} and \ref{asm:Boundedness}, choosing $T\ge\biii(\ul ,r,\kappa,L)$
and assume that $r,\xi\le1/L$, we have 

\[
\sum_{k=0}^{K-1}\left[||\d(v_{k},\th_{k})||^{2}+q(v_{k},\th_{k})\right]=O(\xi^{-1}+K\exp(-\bi T/2)+K\xi^{1/2}+\xi^{-1/2}K^{1/2}q^{1/2}(v_{0},\th_{0})).
\]
\end{lemma}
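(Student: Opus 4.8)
The plan is to couple a descent estimate on the outer objective $f$ with the decay estimates for $q$ and $\nabla q$ already established in Lemmas~\ref{lem:decay q} and~\ref{lem:decay hdq}, and to glue the two together through the closed form $\d(v_k,\th_k)=\nabla f(v_k,\th_k)+\lm(v_k,\th_k)\hdq(v_k,\th_k)$. Throughout I abbreviate $\nabla f_k=\nabla f(v_k,\th_k)$, $\nabla q_k=\nabla q(v_k,\th_k)$, $\hdq_k=\hdq(v_k,\th_k)$, $\lm_k=\lm(v_k,\th_k)$ and $q_k=q(v_k,\th_k)$. First I would invoke Assumption~\ref{asm:Smoothness} to write the $L$-smoothness descent inequality for $f$ along the update $(v_{k+1},\th_{k+1})=(v_k,\th_k)-\xi\d(v_k,\th_k)$,
\begin{align*}
f(v_{k+1},\th_{k+1})\le f(v_k,\th_k)-\xi\langle\nabla f_k,\d(v_k,\th_k)\rangle+\frac{L\xi^2}{2}||\d(v_k,\th_k)||^2,
\end{align*}
sum over $k=0,\dots,K-1$, and use $|f|\le M$ (Assumption~\ref{asm:Boundedness}) so that the telescoped left-hand side is at most $2M$. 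This gives $\xi\sum_k\langle\nabla f_k,\d(v_k,\th_k)\rangle\le 2M+\frac{L\xi^2}{2}\sum_k||\d(v_k,\th_k)||^2$.

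The key reduction comes from the structure of $\d$. From $\d(v_k,\th_k)=\nabla f_k+\lm_k\hdq_k$ I obtain the exact identity $||\d(v_k,\th_k)||^2=\langle\nabla f_k,\d(v_k,\th_k)\rangle+\lm_k\langle\hdq_k,\d(v_k,\th_k)\rangle$, and by complementary slackness for the problem~\eqref{eq:proof_primal} (either $\lm_k=0$, or the constraint is active and $\langle\hdq_k,\d(v_k,\th_k)\rangle=\ul||\hdq_k||^2$) the cross term collapses to $\lm_k\langle\hdq_k,\d(v_k,\th_k)\rangle=\ul\lm_k||\hdq_k||^2$. Substituting this into the summed descent inequality and using $\xi\le1/L$ to move $\frac{L\xi^2}{2}\sum_k||\d(v_k,\th_k)||^2$ to the left then yields
\begin{align*}
\sum_{k=0}^{K-1}||\d(v_k,\th_k)||^2\le\frac{4M}{\xi}+2\ul\sum_{k=0}^{K-1}\lm_k||\hdq_k||^2.
\end{align*}
By Lemma~\ref{lem:bound lambda psi} the remaining sum is at most $\ul\sum_k||\hdq_k||^2+M\sum_k||\hdq_k||$, so the whole problem reduces to bounding $\sum_k||\hdq_k||^2$ and $\sum_k||\hdq_k||$.

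To bound these I would pass from $\hdq_k$ to $\nabla q_k$ via Lemma~\ref{lem:bound hat dq}, $||\nabla q_k-\hdq_k||\le L||\th_k^{(T)}-\th^*(v_k)||$, and control the approximation error by chaining Lemma~\ref{lem:Lower bound} (which gives $||\th_k^{(T)}-\th^*(v_k)||\le 2\kappa^{-1/2}q^{1/2}(v_k,\th_k^{(T)})$) with the inner-loop contraction of Lemma~\ref{lem:opt th_T} ($q(v_k,\th_k^{(T)})\le\exp(-\bi T)q_k$), so that $||\nabla q_k-\hdq_k||=O(\exp(-\bi T/2)\,q_k^{1/2})$. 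Squaring reduces $\sum_k||\hdq_k||^2$ to $\sum_k||\nabla q_k||^2$ (controlled by Lemma~\ref{lem:decay hdq}) plus $\exp(-\bi T)\sum_k q_k$ (controlled by summing Lemma~\ref{lem:decay q}); since $q_0\le 2M=O(1)$, collecting dominant pieces gives $\sum_k||\hdq_k||^2=O(\xi^{-1}+K\xi+K\exp(-\bi T))$. Cauchy--Schwarz then produces $\sum_k||\hdq_k||\le K^{1/2}\big(\sum_k||\hdq_k||^2\big)^{1/2}=O(\xi^{-1/2}K^{1/2}q_0^{1/2}+K\xi^{1/2}+K\exp(-\bi T/2))$, which is exactly where the half-power dependences and the $K^{1/2}$ factor of the statement originate. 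Finally, adding $\sum_k q_k=O(\xi^{-1}+K\xi+K\exp(-\bi T))$ from Lemma~\ref{lem:decay q} and absorbing lower-order terms (e.g. $K\xi\le K\xi^{1/2}$) yields the claimed bound.

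The step I expect to be the main obstacle is ensuring that the feasibility cross term $\lm_k\langle\hdq_k,\d(v_k,\th_k)\rangle$, together with the per-step error of approximating $\th^*(v_k)$ by $\th_k^{(T)}$, never leaves a non-vanishing $O(K)$ contribution to $\sum_k||\d(v_k,\th_k)||^2$: a crude bound on $\lm_k||\hdq_k||^2$ that merely uses $||\hdq_k||\le\bii$ from Lemma~\ref{lem:bound d} contributes an additive constant per iteration and hence an $O(K)$ total that would survive after dividing by $K$. Resolving this requires the exact complementary-slackness identity (which turns the cross term into $\ul\lm_k||\hdq_k||^2$), the geometric-in-$T$ decay of the approximation error (so that $||\hdq_k||^2$ sums like $||\nabla q_k||^2$), and the Cauchy--Schwarz passage from the $\ell_2$ to the $\ell_1$ sum of $||\hdq_k||$; the unavoidable square-root loss in that last passage is precisely what forces the $\sqrt{\xi}$- and $K^{-1/4}$-type rates once Lemma~\ref{lem:Convergence hk} is divided by $K$ and fed into Theorem~\ref{thm:Convergence k}.
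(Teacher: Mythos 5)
Your proposal is correct and follows essentially the same route as the paper's proof: an $L$-smoothness descent inequality on $f$, the complementary-slackness identity turning $\lm_k\langle\hdq_k,\d_k\rangle$ into $\ul\lm_k\|\hdq_k\|^2$, Lemma~\ref{lem:bound lambda psi} plus Cauchy--Schwarz to reduce everything to $\sum_k\|\hdq_k\|^2$, and Lemmas~\ref{lem:decay q} and~\ref{lem:decay hdq} to bound that sum and $\sum_k q_k$. The only cosmetic differences are that you telescope before substituting (the paper substitutes per step) and that you rebuild the bound on $\sum_k\|\hdq_k\|^2$ from $\sum_k\|\nabla q_k\|^2$ plus the $\exp(-\bi T)$ approximation error rather than reusing the multiplicative bound $\|\hdq_k\|\le(2L\kappa^{-1}+1)\|\nabla q_k\|$ already established inside the proof of Lemma~\ref{lem:decay hdq}.
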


\subsection{Proof of Theorem \ref{thm:Convergence k}}

Using our definition of $\lm$ in (\ref{eq:local solution}), we have 
\begin{align*}
||\nabla f(v,\th)+\lm(v,\th)\nabla q(v,\th)|| & \le||\nabla f(v,\th)+\lm(v,\th)\hdq(v,\th)||+||\lm(v,\th)(\hdq(v,\th)-\nabla q(v,\th))||\\
 & =||\d(v,\th)||+||\lm(v,\th)(\hdq(v,\th)-\nabla q(v,\th))||.
\end{align*}

Using Lemma \ref{lem:bound hat dq}, we know that when $||\hdq||=0$, we have $q=0$ and thus $||\nabla q||=0$. In this case, $||\lm(\hdq-\nabla q)||=0$. When $||\hdq||>0$, some algebra shows that 
\begin{align*}
||\lm(\hdq-\nabla q)|| & \le\left[\ul -\left\langle \nabla f,\hdq/||\hdq||\right\rangle ||\hdq||^{-1}\right]||\hdq-\nabla q||\\
 & \le(\ul -\left\langle \nabla f,\hdq/||\hdq||\right\rangle ||\hdq||^{-1})||\hdq-\nabla q||.
\end{align*}
Notice that 
\begin{align*}
||\hdq(v,\th)-\nabla q(v,\th)|| & \le L||\th^{(T)}-\th^{*}(v)||\\
 & \le2L\kappa^{-1/2}q^{1/2}(v,\th^{(T)})\\
 & \le2L\kappa^{-1/2}\exp(-\bi T/2)q^{1/2}(v,\th)\\
 & \le2L\kappa^{-1}\exp(-\bi T/2)||\nabla q(v,\th)||.
\end{align*}
Here the first inequality is by Lemma \ref{lem:bound hat dq}, the second inequality is by Lemma
\ref{lem:Lower bound}, the third inequality is by Lemma \ref{lem:opt th_T} and the last inequality is by Assumption \ref{asm:PL-inequality} (using $||\nabla q(v,\th)||\ge||\nabla_{\th} g(v,\th)||$).
Similarly, under assumption that $T\ge\left\lceil -b_{1}^{-1}\log(\frac{1}{16}\kappa^{2}L^{-2})\right\rceil $,
$L\kappa^{-1}\exp(-\bi T/2)\le1/4$, 
\begin{align*}
||\hdq(v,\th)|| & =||\hdq(v,\th)-\nabla q(v,\th)+\nabla q(v,\th)||\\
 & \ge||\nabla q(v,\th)||-||\hdq(v,\th)-\nabla q(v,\th)||\\
 & \ge||\nabla q(v,\th)||(1-(2L\kappa^{-1}\exp(-\bi T/2)))\\
 & \ge\frac{1}{2}||\nabla q(v,\th)||.
\end{align*}
This implies that 
\[
\frac{||\hdq-\nabla q||}{||\hdq||}\le2\frac{||\hdq-\nabla q||}{||\nabla q||}\le4L\kappa^{-1}\exp(-\bi T/2).
\]
We thus have
\begin{align*}
||\lm(v,\th)(\hdq(v,\th)-\nabla q(v,\th))|| & \le \ul ||\hdq-\nabla q||+\left\langle \nabla f,\frac{\hdq}{||\hdq||}\right\rangle \frac{||\hdq-\nabla q||}{||\hdq||}\\
 & \le2L\kappa^{-1}\exp(-\bi T/2)\left[\ul ||\nabla q(v,\th)||+2\left\langle \nabla f,\frac{\hdq}{||\hdq||}\right\rangle \right]\\
 & \le2L\kappa^{-1}\exp(-\bi T/2)(\ul +2)\bii,
\end{align*}
where the last inequality is by Lemma \ref{lem:bound d}.
Combining all the results and using $||\nabla q(v_{k},\th_{k})||\le2\kappa^{-1/2}L_{q}q^{1/2}(v_{k},\th_{k})$
by Lemma \ref{lem:bound dq}, we have 
\begin{align*}
\K(v,\th) & \le||\nabla f(v,\th)+\lm(v,\th)\nabla q(v,\th)||^{2}+q(v,\th)\\
 & \le2||\nabla f(v,\th)+\lm(v,\th)\hdq(v,\th)||^{2}+q(v,\th)+2||\lm(v,\th)(\hdq(v,\th)-\nabla q(v,\th))||^{2}\\
 & \le2||\d(v,\th)||^{2}+q(v,\th)+8L^{2}\kappa^{-2}\exp(-\bi T)(\ul +2)^{2}\bii^{2}.
\end{align*}
Using Lemma \ref{lem:Convergence hk}, we have 
\begin{align*}
\min_{k}\K(v_{k},\th_{k}) & =O(\min_{k}(||\d(v_{k},\th_{k})||^{2}+q(v_{k},\th_{k}))+\exp(-\bi T))\\
 & =O(\xi^{-1}+K\exp(-\bi T/2)+K\xi^{1/2}+\xi^{-1/2}K^{1/2}q^{1/2}(v_{0},\th_{0})).
\end{align*}

\subsection{Proof of Lemmas}

\subsubsection{Proof of Lemma \ref{lem:one step descent of q}}

When $||\hdq(v_{k},\th_{k})||>0$, by Lemma \ref{lem:q smoothness}, we
know that $q$ is $L_{q}$-smoothness, we have 
\begin{align*}
q(v_{k+1},\th_{k+1})-q(v_{k},\th_{k}) & \le-\xi\left\langle \nabla q(v_{k},\th_{k}),\d(v_{k},\th_{k})\right\rangle +\frac{L_{q}\xi^{2}}{2}||\d(v_{k},\th_{k})||^{2}\\
 & \le-\xi\left\langle \hdq(v_{k},\th_{k}),\d(v_{k},\th_{k})\right\rangle -\xi\left\langle \nabla q(v_{k},\th_{k})-\hdq(v_{k},\th_{k}),\d(v_{k},\th_{k})\right\rangle +L_{q}\xi^{2}\bii^{2}/2\\
 & \le-\xi \ul ||\hdq(v_{k},\th_{k})||^{2}-\xi\left\langle \nabla q(v_{k},\th_{k})-\hdq(v_{k},\th_{k}),\d(v_{k},\th_{k})\right\rangle +L_{q}\xi^{2}\bii^{2}/2\\
 & \le-\xi \ul ||\hdq(v_{k},\th_{k})||^{2}+\xi\bii||\nabla q(v_{k},\th_{k})-\hdq(v_{k},\th_{k})||+L_{q}\xi^{2}\bii^{2}/2.
\end{align*}
where the second and the last inequality is by Lemma \ref{lem:bound d}
and the third inequality is ensured by the constraint in the local
subproblem ($\left\langle \nabla \hdq(v_{k},\th_{k}),\d(v_{k},\th_{k})\right\rangle \ge \eta || \hdq(v_k,\th_k)^2||$.). And by Lemma \ref{lem:bound hat dq}, we have $||\nabla q(v_{k},\th_{k})-\hdq(v_{k},\th_{k})||\le L||\th_{k}^{(T)}-\th^{*}(v_{k})||$.
Plug in the bound we have 
\[
q(v_{k+1},\th_{k+1})-q(v_{k},\th_{k})\le-\xi \ul ||\hdq(v_{k},\th_{k})||^{2}+\xi\bii L||\th_{k}^{(T)}-\th^{*}(v_{k})||\ .
\]
Also notice that 
\begin{align*}
\left|||\hdq(v_{k},\th_{k})||^{2}-||\nabla q(v_{k},\th_{k})||^{2}\right| & \le||\hdq(v_{k},\th_{k})-\nabla q(v_{k},\th_{k})||\ ||\hdq(v_{k},\th_{k})+\nabla q(v_{k},\th_{k})||\\
 & \le||\hdq(v_{k},\th_{k})-\nabla q(v_{k},\th_{k})||\ (||\hdq(v_{k},\th_{k})-\nabla q(v_{k},\th_{k})||+2||\nabla q(v_{k},\th_{k})||)\\
 & \le L_{q}||\th_{k}^{(T)}-\th^{*}(v_{k})||\ (L_{q}||\th_{k}^{(T)}-\th^{*}(v_{k})||+2||\nabla q(v_{k},\th_{k})||)\\
 & =L_{q}||\th_{k}^{(T)}-\th^{*}(v_{k})||\ (L_{q}||\th_{k}^{(T)}-\th^{*}(v_{k})||+2||\nabla q(v_{k},\th_{k})-\nabla q(v_{k},\th^{*}(v_{k}))||)\\
 & \le L_{q}||\th_{k}^{(T)}-\th^{*}(v_{k})||\ (L_{q}||\th_{k}^{(T)}-\th^{*}(v_{k})||+2L_{q}||\th_{k}-\th^{*}(v_{k})||),
\end{align*}
where the third inequality is by Lemma \ref{lem:bound hat dq}, the equality is by $\nabla q(v_k, \th^*(v_k))=0$ and the
last inequality is by Lemma \ref{lem:q smoothness}.

Using this bound, we further have
\begin{align*}
q(v_{k+1},\th_{k+1})-q(v_{k},\th_{k}) & \le-\xi \ul ||\nabla q(v_{k},\th_{k})||^{2}+\xi \ul \left|||\hdq(v_{k},\th_{k})||^{2}-||\nabla q(v_{k},\th_{k})||^{2}\right|\\
 & +\xi\bii||\nabla q(v_{k},\th_{k})-\hdq(v_{k},\th_{k})||+L_{q}\xi^{2}\bii^{2}/2\\
 & \le-\xi \ul ||\nabla q(v_{k},\th_{k})||^{2}+\xi \ul L_{q}||\th_{k}^{(T)}-\th^{*}(v_{k})||\ (L_{q}||\th_{k}^{(T)}-\th^{*}(v_{k})||+2L_{q}||\th_{k}-\th^{*}(v_{k})||)\\
 & +\xi\bii L||\th_{k}^{(T)}-\th^{*}(v_{k})||+L_{q}\xi^{2}\bii^{2}/2.
\end{align*}
When $||\hdq(v_{k},\th_{k})||=0$, by Lemma \ref{lem:bound hat dq}, $q(v_{k},\th_{k})=0$
and hence $\nabla q(v_{k},\th_{k})=0$. We thus have 
\begin{align*}
q(v_{k+1},\th_{k+1})-q(v_{k},\th_{k}) & \le-\xi\left\langle \nabla q(v_{k},\th_{k}),\d(v_{k},\th_{k})\right\rangle +\frac{L_{q}\xi^{2}}{2}||\d(v_{k},\th_{k})||^{2}\\
 & =\frac{L_{q}\xi^{2}}{2}||\d(v_{k},\th_{k})||^{2}\\
 & \le\xi^{2}L_{q}\bii^{2}/2.
\end{align*}

\subsubsection{Proof of Lemma \ref{lem:decay q}}

By Lemma \ref{lem:one step descent of q}, when $||\hdq(v_{k},\th_{k})||>0$,
we have 
\begin{align*}
q(v_{k+1},\th_{k+1})-q(v_{k},\th_{k}) & \le-\xi \ul ||\nabla q(v_{k},\th_{k})||^{2}+\xi \ul L_{q}||\th_{k}^{(T)}-\th^{*}(v_{k})||\ (L_{q}||\th_{k}^{(T)}-\th^{*}(v_{k})||+2L_{q}||\th_{k}-\th^{*}(v_{k})||)\\
 & +\xi\bii L||\th_{k}^{(T)}-\th^{*}(v_{k})||+L_{q}\xi^{2}\bii^{2}/2.
\end{align*}
By Lemma \ref{lem:Lower bound} and Lemma \ref{lem:opt th_T}
\begin{align*}
||\th_{k}^{(T)}-\th^{*}(v_{k})|| & \le2\kappa^{-1/2}q^{1/2}(v_{k},\th_{k}^{(T)})\le2\kappa^{-1/2}\exp(-\bi T/2)q^{1/2}(v_{k},\th_{k}).\\
||\th_{k}-\th^{*}(v_{k})|| & \le2\kappa^{-1/2}q^{1/2}(v_{k},\th_{k}).
\end{align*}
Using those bounds, we know that 
\[
L_{q}||\th_{k}^{(T)}-\th^{*}(v_{k})||\ (L_{q}||\th_{k}^{(T)}-\th^{*}(v_{k})||+2L_{q}||\th_{k}-\th^{*}(v_{k})||)\le12L_{q}^{2}\kappa^{-1}\exp(-\bi T)q(v_{k},\th_{k})
\]
This implies that 
\begin{align*}
 & q(v_{k+1},\th_{k+1})-q(v_{k},\th_{k})\\
\le & -\xi \ul ||\nabla q(v_{k},\th_{k})||^{2}+12\xi \ul L_{q}^{2}\kappa^{-1}\exp(-\bi T)q(v_{k},\th_{k})\\
+ & 2\xi\bii L\kappa^{-1/2}\exp(-\bi T/2)q^{1/2}(v_{k},\th_{k})+L_{q}\xi^{2}\bii^{2}/2\\
\le & -\xi \ul \kappa q(v_{k},\th_{k})+12\xi \ul L_{q}^{2}\kappa^{-1}\exp(-\bi T)q(v_{k},\th_{k})\\
+ & 2\xi\bii L\kappa^{-1/2}\exp(-\bi T/2)q^{1/2}(v_{k},\th_{k})+L_{q}\xi^{2}\bii^{2}/2.
\end{align*}
Choosing $T$ such that $T\ge\biii(\ul ,\alpha,\kappa,L)$ where 
\[
\biii(\ul ,\alpha,\kappa,L)=\left\lceil -b_{1}^{-1}\log(\frac{\ul \kappa}{64\ul L_{q}^{2}})\right\rceil,
\]
we have 
\[
q(v_{k+1},\th_{k+1})-q(v_{k},\th_{k})\le-\frac{3}{4}\xi \ul \kappa q(v_{k},\th_{k})+2\xi\bii L\kappa^{-1/2}\exp(-\bi T/2)q^{1/2}(v_{k},\th_{k})+L_{q}\xi^{2}\bii^{2}/2.
\]
This implies that when $\frac{64\bii^{2}L^{2}}{\ul ^{2}\kappa}\exp(-\bi T)\le q(v_{k},\th_{k})$
and $\frac{2L_{q}\xi\bii^{2}}{\ul \kappa}\le q(v_{k},\th_{k})$,
\[
q(v_{k+1},\th_{k+1})-q(v_{k},\th_{k})\le-\frac{1}{4}\xi \ul \kappa q(v_{k},\th_{k}).
\]
Let $a=\max(\frac{64\bii^{2}L^{2}}{\ul ^{2}\kappa}\exp(-\bi T),\frac{2L_{q}\xi\bii^{2}}{\ul \kappa})$.
Also, when $q(v_{k},\th_{k})<a$, 
\begin{align*}
q(v_{k+1},\th_{k+1}) & \le q(v_{k},\th_{k})+2\xi\bii L\kappa^{-1/2}\exp(-\bi T/2)q^{1/2}(v_{k},\th_{k})+L_{q}\xi^{2}\bii^{2}/2\\
 & <a+2\xi\bii L\kappa^{-1/2}\exp(-\bi T/2)\sqrt{a}+L_{q}\xi^{2}\bii^{2}/2.
\end{align*}
Note that 
\begin{align*}
2\xi\bii L\kappa^{-1/2}\exp(-\bi T/2) & \le\frac{\xi \ul \kappa}{4}\sqrt{a}\\
L_{q}\xi^{2}\bii^{2}/2 & \le\frac{\xi \ul \kappa}{4}a.
\end{align*}
This gives that in the case of $q(v_k,\th_k)<a$,
\[
q(v_{k+1},\th_{k+1})<(1+\frac{\xi \ul \kappa}{4})a.
\]
Define $k_{0}$ as the first iteration such that $q(v_{k},\th_{k})<a$.
This implies that, for any $k\le k_{0}$, 
\[
q(v_{k},\th_{k})\le(1-\frac{\xi}{4}\ul \kappa)^{k}q(v_{0},\th_{0}).
\]
When any $k>k_{0}$, we show that $q(v_{k+1},\th_{k+1})\le(1+\frac{\xi \ul \kappa}{4})a$.
This can be proved by induction. At $k=k_{0}+1$, if $q(v_{k},\th_{k})<a,$
we have $q(v_{k},\th_{k})<(1+\frac{\xi \ul \kappa}{4})a$. Else if
at $k=k_{0}+1$, $q(v_{k},\th_{k})\ge a$, $q(v_{k+1},\th_{k+1})\le q(v_{k},\th_{k})\le a$.
We thus have the conclusion that for any $k>k_{0}$, $q(v_{k},\th_{k})\le(1+\frac{\ul \kappa}{4})a$.
Combining the result, we have 
\[
q(v_{k},\th_{k})\le(1-\frac{\xi}{4}\ul \kappa)^{k}q(v_{0},\th_{0})+\Delta,
\]
where we denote 
\begin{equation} \label{eq:proof_delta}
    \Delta=(1+\frac{\ul \kappa}{4})(\frac{64\bii^{2}L^{2}}{\ul ^{2}\kappa^{3}}\exp(-\bi T)+\frac{2L_{q}\xi\bii^{2}}{\ul \kappa})+L_{q}\xi^{2}\bii^{2}/2=O(\exp(-\bi T)+\xi).
\end{equation}

Let $\biv(\ul ,\kappa,\xi)=-\log(1-\frac{\xi}{4}\ul \kappa)$, we
have the desired result.

\subsubsection{Proof of Lemma \ref{lem:decay hdq}}

By Lemma \ref{lem:bound dq} and \ref{lem:decay q}, we have
\begin{align*}
||\nabla q(v_{k},\th_{k})||^{2} & \le2\kappa^{-1}L_{q}^{2}q(v_{k},\th_{k})\\
 & \le2\kappa^{-1}L_{q}^{2}\left[\exp(-\biv k)q(v_{0},\th_{0})+\Delta\right],
\end{align*}
where $\Delta$ is defined in (\ref{eq:proof_delta}).
Also notice that 
\begin{align*}
||\hdq(v,\th)|| & \le||\hdq(v,\th)-\nabla q(v,\th)||+||\nabla q(v,\th)||\\
 & \le L||\th^{(T)}-\th^{*}(v)||+||\nabla q(v,\th)||\\
 & \le2L\kappa^{-1/2}q^{1/2}(v,\th^{(T)})+||\nabla q(v,\th)||\\
 & \le2L\kappa^{-1/2}\exp(-\bi T/2)q^{1/2}(v,\th)+||\nabla q(v,\th)||\\
 & \le(2L\kappa^{-1}\exp(-\bi T/2)+1)||\nabla q(v,\th)||\\
 & \le(2L\kappa^{-1}+1)||\nabla q(v,\th)||
\end{align*}
Here the first inequality is by triangle inequality, the second inequality
is by Lemma \ref{lem:bound hat dq}, the third inequality is by Lemma
\ref{lem:Lower bound}, the forth inequality is by Lemma \ref{lem:opt th_T}
and the fifth inequality is by Assumption \ref{asm:PL-inequality}. Taking
summation over iteration and using Lemma \ref{lem:decay q}, we have 
\begin{align*}
\sum_{k=0}^{K-1}||\hdq(v,\th)||^{2} & \le(2L\kappa^{-1}+1)^{2}\sum_{k=0}^{K-1}||\nabla q(v_{k},\th_{k})||^{2}\\
 & \le(2L\kappa^{-1}+1)^{2}\left[2\kappa^{-1}L_{q}^{2}q(v_{0},\th_{0})\sum_{k=0}^{K-1}\left[\exp(-\biv k)\right]+K\Delta\right]\\
 & \le(2L\kappa^{-1}+1)^{2}\left[\frac{2\kappa^{-1}L_{q}^{2}q(v_{0},\th_{0})}{1-\exp(-\biv)}+K\Delta\right]\\
 & =\frac{\bv q(v_{0},\th_{0})}{\xi}+K\bvi\Delta,
\end{align*}
where we define $\bv(L_{q},\ul ,\kappa)=\frac{16L_{q}^{2}}{\ul \kappa^{2}}(2L\kappa^{-1}+1)^{2}$
and $\bvi(\kappa,L)=(2L\kappa^{-1}+1)^{2}$.

\subsection{Proof of Lemma \ref{lem:Convergence hk}}
Remind that by our definition of $\lm$ in (\ref{eq:local solution}) and Assumption \ref{asm:Smoothness}, we have
\begin{align*}
f(v_{k+1},\th_{k+1})-f(v_{k},\th_{k}) & \le-\xi\left\langle \nabla f(v_{k},\th_{k}),\d(v_{k},\th_{k})\right\rangle +\frac{L\xi^{2}}{2}||\d(v_{k},\th_{k})||^{2}\\
 & =-\xi\left\langle \d(v_{k},\th_{k})-\lm(v_{k},\th_{k})\hdq(v_{k},\th_{k}),\d(v_{k},\th_{k})\right\rangle +\frac{L\xi^{2}}{2}||\d(v_{k},\th_{k})||^{2}\\
 & =-(\xi-\frac{L\xi^{2}}{2})||\d(v_{k},\th_{k})||^{2}+\xi\lm(v_{k},\th_{k})\left\langle \hdq(v_{k},\th_{k}),\d(v_{k},\th_{k})\right\rangle \\
 & \le-(\xi-\frac{L\xi^{2}}{2})||\d(v_{k},\th_{k})||^{2}+\xi \ul \lm(v_{k},\th_{k})||\hdq(v_{k},\th_{k})||^{2}\\
 & \le-\frac{\xi}{2}||\d(v_{k},\th_{k})||^{2}+\xi \ul \lm(v_{k},\th_{k})||\hdq(v_{k},\th_{k})||^{2},
\end{align*}
where the last inequality is by the assumption on $\xi\le1/L$. To show the second inequality, we use the complementary slackness of Problem (\ref{eq:proof_primal}), that is
\[
\lm(v_{k},\th_{k})\left[\left\langle \hdq(v_{k},\th_{k}),\d(v_{k},\th_{k})\right\rangle -\ul||\hdq(v_{k},\th_{k})||\right]=0.
\]

By telescoping, 
\begin{align*}
\sum_{k=0}^{K-1}f(v_{k+1},\th_{k+1})-f(v_{k},\th_{k}) & \le-\frac{\xi}{2}\sum_{k=0}^{K-1}||\d(v_{k},\th_{k})||^{2}+\xi \ul \sum_{k=0}^{K-1}\lm(v_{k},\th_{k})||\hdq(v_{k},\th_{k})||^{2}\\
 & \le-\frac{\xi}{2}\sum_{k=0}^{K-1}||\d(v_{k},\th_{k})||^{2}+\xi \ul \sum_{k=0}^{K-1}(\ul ||\hdq(v_{k},\th_{k})||^{2}+M||\hdq(v_{k},\th_{k})||)\\
 & =-\frac{\xi}{2}\sum_{k=0}^{K-1}||\d(v_{k},\th_{k})||^{2}+\xi \ul ^{2}\sum_{k=0}^{K-1}||\hdq(v_{k},\th_{k})||^{2}+\xi \ul M\sum_{k=0}^{K-1}||\hdq(v_{k},\th_{k})||\\
 & \le-\frac{\xi}{2}\sum_{k=0}^{K-1}||\d(v_{k},\th_{k})||^{2}+\xi \ul ^{2}\sum_{k=0}^{K-1}||\hdq(v_{k},\th_{k})||^{2}+\xi \ul M\sqrt{K}\sqrt{\sum_{k=0}^{K-1}||\hdq(v_{k},\th_{k})||^{2}},
\end{align*}
where the second inequality is by Lemma \ref{lem:bound lambda psi} and the last inequality is by Holder's inequality. Since
$\sum_{k=0}^{K-1}f(v_{k+1},\th_{k+1})-f(v_{k},\th_{k})=f(v_{K},\th_{K})-f(v_{0},\th_{0})$,
rearrange the terms, we have 
\[
\xi\sum_{k=0}^{K-1}||\d(v_{k},\th_{k})||^{2}\le2(f(v_{0},\th_{0})-f(v_{K},\th_{K}))+2\xi \ul ^{2}\sum_{k=0}^{K-1}||\hdq(v_{k},\th_{k})||^{2}+2\xi \ul M\sqrt{K}\sqrt{\sum_{k=0}^{K-1}||\hdq(v_{k},\th_{k})||^{2}}.
\]
This implies that 
\begin{align*}
\xi\sum_{k=0}^{K-1}\left[||\d(v_{k},\th_{k})||^{2}+q(v_{k},\th_{k})\right] & \le2(f(v_{0},\th_{0})-f(v_{K},\th_{K}))+2\xi \ul ^{2}\sum_{k=0}^{K-1}||\hdq(v_{k},\th_{k})||^{2}\\
 & +2\xi \ul M\sqrt{K}\sqrt{\sum_{k=0}^{K-1}||\hdq(v_{k},\th_{k})||^{2}}+\xi\sum_{k=0}^{K-1}q(v_{k},\th_{k}).
\end{align*}
 Using Lemma \ref{lem:decay q}, we know that 
\[
 q(v_{k},\th_{k})\le(1-\frac{\xi}{4}\ul \kappa)^{k}q(v_{0},\th_{0})+\Delta.
\]
This gives that 
\[
\xi\sum_{k=0}^{K-1}q(v_{k},\th_{k})\le\frac{4q(v_{0},\th_{0})}{\ul \kappa}+\xi K\Delta.
\]
Using Lemma \ref{lem:decay hdq}, \ref{lem:decay q} and $\sqrt{x+y}\le\sqrt{x}+\sqrt{y}$,
we have 
\begin{align*}
2\xi \ul ^{2}\sum_{k=1}^{K}||\hdq(v_{k},\th_{k})||^{2} & \le2\ul ^{2}\bv q(v_{0},\th_{0})+2K\ul ^{2}\xi\bvi\Delta\\
2\xi \ul M\sqrt{K}\sqrt{\sum_{k=1}^{K}||\hdq(v_{k},\th_{k})||^{2}} & \le2\xi^{1/2}K^{1/2}\bv^{1/2}\ul Mq^{1/2}(v_{0},\th_{0})+2K\xi\bvi^{1/2}\ul M\Delta^{1/2}\\
\end{align*}
This implies that
\begin{align*}
 & \xi\sum_{k=0}^{K-1}\left[||\d(v_{k},\th_{k})||^{2}+q(v_{k},\th_{k})\right]\\
\le & 2(f(v_{0},\th_{0})-f(v_{K},\th_{K}))+2\ul ^{2}\bv q(v_{0},\th_{0})+2K\ul ^{2}\xi\bvi\Delta+2\xi^{1/2}K^{1/2}\bv^{1/2}\ul Mq^{1/2}(v_{0},\th_{0})\\
+ & 2K\xi\bvi^{1/2}\ul M\Delta^{1/2}+\frac{4q(v_{0},\th_{0})}{\ul \kappa}+\xi K\Delta\\
\le & 2(f(v_{0},\th_{0})-f(v_{K},\th_{K}))+(2\ul ^{2}\bv+\frac{4}{\ul \kappa})q(v_{0},\th_{0})+2K\xi(\bvi^{1/2}\ul M\Delta^{1/2}+(\bvi \ul ^{2}+1/2)\Delta)\\
+ & 2\xi^{1/2}K^{1/2}\bv^{1/2}\ul Mq^{1/2}(v_{0},\th_{0})
\end{align*}
We thus have
\begin{align*}
 & \sum_{k=0}^{K-1}\left[||\d(v_{k},\th_{k})||^{2}+q(v_{k},\th_{k})\right]\\
 = & O(\xi^{-1}+K\Delta^{1/2}+\xi^{-1/2}K^{1/2}q^{1/2}(v_{0},\th_{0}))\\
 = & O(\xi^{-1}+K\exp(-\bi T/2)+K\xi^{1/2}+\xi^{-1/2}K^{1/2}q^{1/2}(v_{0},\th_{0})).
\end{align*}

\subsection{Proofs of Technical Lemmas}

\subsubsection{Proof of Lemma \ref{lem:Lower bound}}
Please see the proof of Theorem 2 in \citet{karimi2016linear}.

\subsubsection{Proof of Lemma \ref{lem:bound hat dq}}

Since $\nabla_{2}g(v,\th^{*}(v))=0$, we have $\nabla_{v}g(v,\th^{*}(v))=\nabla_{1}g(v,\th^{*}(v))+\nabla_{v}\th^{*}(v)\nabla_{2}g(v,\th^{*}(v))=\nabla_{1}g(v,\th^{*}(v))$. Thus

\[
\nabla q(v,\th)=\left[\begin{array}{c}
\nabla_{v}g(v,\th)-\nabla_{v}g(v,\th^{*}(v))\\
\nabla_{\th}g(v,\th)
\end{array}\right]=\left[\begin{array}{c}
\nabla_{v}g(v,\th)-\nabla_{1}g(v,\th^{*}(v))\\
\nabla_{\th}g(v,\th)
\end{array}\right].
\]
Also note that 
\[
\hat{\nabla}q(v,\th)=\left[\begin{array}{c}
\nabla_{v}g(v,\th)-\nabla_{1}g(v,\th^{(T)})\\
\nabla_{\th}g(v,\th)
\end{array}\right].
\]
This gives that 
\begin{align*}
||\nabla q(v,\th)-\hdq(v,\th)|| & =||\nabla_{1}g(v,\th^{(T)})-\nabla_{1}g(v,\th^{*}(v))||\\
 & \le L||\th^{(T)}-\th^{*}(v)||.
\end{align*}
Also when $0=||\hat{\nabla}q(v,\th)||=\sqrt{||\nabla_{v}g(v,\th)-\nabla_{1}g(v,\th^{(T)})||^{2}+||\nabla_{\th}g(v,\th)||^{2}}$,
we have $||\nabla_{\th}g(v,\th)||=0$. Under Assumption \ref{asm:PL-inequality},
\[
0=||\nabla_{\th}g(v,\th)||\ge\kappa(g(v,\th)-g(v,\th^{*}(v)))=\kappa q(v,\th).
\]

\subsubsection{Proof of Lemma \ref{lem:th implicit lipschitz}}

Using Assumption \ref{asm:PL-inequality} and $\nabla_{2}g(v_{1},\th^{*}(v_{1}))=0$,
we have 
\[
||\nabla_{2}g(v_{1},\th^{*}(v_{2}))||\ge\sqrt{\kappa(g(v_{1},\th^{*}(v_{2}))-g(v_{1},\th^{*}(v_{1}))}.
\]
Also by Lemma \ref{lem:Lower bound}, we have $g(v_{1},\th^{*}(v_{2}))-g(v_{1},\th^{*}(v_{1})\ge\frac{1}{4}\kappa||\th^{*}(v_{2})-\th^{*}(v_{1})||^{2}$.
These imply that 
\[
||\nabla_{2}g(v_{1},\th^{*}(v_{2}))||\ge\frac{1}{2}\kappa||\th^{*}(v_{2})-\th^{*}(v_{1})||.
\]
Also 
\begin{align*}
 & ||\nabla_{2}g(v_{1},\th^{*}(v_{2}))||\\
= & ||\nabla_{2}g(v_{1},\th^{*}(v_{2}))-\nabla_{\th}g(v_{2},\th^{*}(v_{2}))||\\
= & ||\nabla_{2}[g(v_{1},\th^{*}(v_{2}))-g(v_{2},\th^{*}(v_{2}))]||\\
\le & ||\nabla_{[1,2]}[g(v_{1},\th^{*}(v_{2}))-g(v_{2},\th^{*}(v_{2}))]||\\
\le & L||v_{1}-v_{2}||,
\end{align*}
where $\nabla_{[1,2]}$ denotes taking the derivative on both first and second variables.
We thus conclude that 
\[
||\th^{*}(v_{2})-\th^{*}(v_{1})||\le\frac{2L}{\kappa}||v_{1}-v_{2}||.
\]

\subsubsection{Proof of Lemma \ref{lem:q smoothness}}

To prove the first property,
\begin{align*}
||\nabla_{\th}q(v,\th_{1})-\nabla_{\th}q(v,\th_{2})|| & =||\nabla_{\th}g(v,\th_{1})-\nabla_{\th}g(v,\th_{2})||\\
 & \le L||\th_{1}-\th_{2}||.
\end{align*}

Also
\begin{align*}
\left\Vert \nabla q(v_{1},\th_{1})-\nabla q(v_{2},\th_{2})\right\Vert  & =\left\Vert \nabla g(v_{1},\th_{1})-\nabla g(v_{2},\th_{2})-\nabla g(v_{1},\th^{*}(v_{1}))+\nabla g(v_{2},\th^{*}(v_{2}))\right\Vert \\
 & \le\left\Vert \nabla g(v_{1},\th_{1})-\nabla g(v_{2},\th_{2})\right\Vert +\left\Vert \nabla_{1}g(v_{1},\th^{*}(v_{1}))-\nabla_{1}g(v_{2},\th^{*}(v_{2}))\right\Vert .
\end{align*}

By Assumption \ref{asm:Smoothness} (Lipschitz continuity of $\nabla g$), 
\begin{align*}
||\nabla_{1}g(v_{1},\th^{*}(v_{1}))-\nabla_{1}g(v_{2},\th^{*}(v_{2}))|| & \le||\nabla_{[1,2]}g(v_{1},\th^{*}(v_{1}))-\nabla_{[1,2]}g(v_{2},\th^{*}(v_{2}))||\\
 & \le L\sqrt{||\th^{*}(v_{1})-\th^{*}(v_{2})||^{2}+||v_{1}-v_{2}||^{2}},
\end{align*}
where $\nabla_{[1,2]}$ denotes taking the derivative on both first
and second variable. Also By Lemma \ref{lem:th implicit lipschitz}, 
\begin{align*}
L\sqrt{||\th^{*}(v_{1})-\th^{*}(v_{2})||^{2}+||v_{1}-v_{2}||^{2}} & \le L\sqrt{\frac{4L^{2}}{\kappa^{2}}||v_{1}-v_{2}||^{2}+||v_{1}-v_{2}||^{2}}\\
 & \le L(\frac{2L}{\kappa}+1)||v_{1}-v_{2}||.
\end{align*}
This gives that 
\begin{align*}
\left\Vert \nabla q(v_{1},\th_{1})-\nabla q(v_{2},\th_{2})\right\Vert  & \le\left\Vert \nabla g(v_{1},\th_{1})-\nabla g(v_{2},\th_{2})\right\Vert +\left\Vert \nabla_{1}g(v_{1},\th^{*}(v_{1}))-\nabla_{1}g(v_{2},\th^{*}(v_{2}))\right\Vert \\
 & \le L\sqrt{||v_{1}-v_{2}||^{2}+||\th_{1}-\th_{2}||^{2}}+\left\Vert \nabla_{1}g(v_{1},\th^{*}(v_{1}))-\nabla_{1}g(v_{2},\th^{*}(v_{2}))\right\Vert \\
 & \le L\sqrt{||v_{1}-v_{2}||^{2}+||\th_{1}-\th_{2}||^{2}}+L(\frac{2L}{\kappa}+1)||v_{1}-v_{2}||\\
 & \le L_{q}\sqrt{||v_{1}-v_{2}||^{2}+||\th_{1}-\th_{2}||^{2}},
\end{align*}
where $L_{q}:=2L(L/\kappa+1)$.

\subsubsection{Proof of Lemma \ref{lem:opt th_T}}

By Lemma \ref{lem:q smoothness}, we have
\[
q(v,\th^{(t+1)})-q(v,\th^{(t)})\le-(\alpha-\frac{L\alpha^{2}}{2})||\nabla_{\th}q(v,\th^{(t)})||^{2}.
\]
By Assumption \ref{asm:PL-inequality}, we have 
\[
||\nabla_{\th}q(v,\th^{(t)})||^{2}=||\nabla_{2}g(v,\th^{(t)})||^{2}\ge\kappa(g(v,\th^{(t)})-g(v,\th^{*}(v))=\kappa q(v,\th^{(t)}).
\]
Plug-in, we have 
\[
q(v,\th^{(t+1)})\le(1-(\alpha-\frac{L\alpha^{2}}{2})\kappa)q(v,\th^{(t)}).
\]
Recursively apply this inequality, we have 
\[
q(v,\th^{(t)})\le(1-(\alpha-\frac{L\alpha^{2}}{2})\kappa)^{t}q(v,\th).
\]
Let $\bi(r,L,\kappa)=\log(1-(\alpha-L\alpha^{2}/2)\kappa)$, we have
the desired result.

\subsubsection{Proof of Lemma \ref{lem:bound d}}

Notice that $||\nabla q(v,\th)||\le||\nabla g(v,\th)||+||\nabla g(v,\th^{*}(v))||\le2M$.
$||\hdq(v,\th)||\le||\nabla_{v}g(v,\th)||+||\nabla_{1}g(v,\th^{(T)})||+||\nabla_{\th}g(v,\th)||\le3M$.
When $||\hdq||=0$, $||\d||=||\nabla f||\le M$. When $||\hdq||>0$,
\begin{align*}
||\d|| & =||[\ul||\hdq||^{2}-\left\langle \nabla f,\hdq\right\rangle ]_{+}/||\hdq||^{2}\hdq+\nabla f||\\
 & \le\ul||\hdq||+2||\nabla f||\le(2+\ul)M.
\end{align*}
This concludes that $||\d||\le(2+\ul)M$.

\subsubsection{Proof of Lemma \ref{lem:bound lambda psi}}

In the case that $\left\langle \nabla f,\hdq\right\rangle <\ul ||\hdq||^{2}$,
$\lm||\hdq||^{2}=\ul ||\hdq||^{2}-\left\langle \nabla f,\hdq\right\rangle$.
In the other case, $\lm||\hdq||^{2}=0$. Thus in all cases, 
\begin{align*}
\lm||\hdq||^{2} & \le \ul ||\hdq||^{2}+||\nabla f||\ ||\hdq||\\
 & \le \ul ||\hdq||^{2}+M||\hdq||.
\end{align*}

\subsubsection{Proof of Lemma \ref{lem:bound dq}}

Notice that since $\nabla q(v,\th^{*}(v))=0$, we have

\[
||\nabla q(v,\th)||=||\nabla q(v,\th)-\nabla q(v,\th^{*}(v))||\le L_{q}||\th-\th^{*}(v)||\le2\kappa^{-1/2}L_{q}q^{1/2}(v,\th),
\]
where the first inequality is by Lemma \ref{lem:q smoothness} and the
second inequality is by Lemma \ref{lem:Lower bound}.

\section{Proof of the Result in Section \ref{sec:kl_theory}}
We use $b$ with some subscript to denote some general $O(1)$ constant and refer reader to section \ref{misc:constant} for their detailed value.

For notation simplicity, given $v$ and $\th$, $\th^{(T)}$ denotes the results of $T$ steps of gradient of $g(v,\cdot)$ w.r.t. $\th$ starting from $\th$ using step size $\alpha$ (similar to the definition in (\ref{equ:thetaTT})). And note that $\hdq(v,\th)=\nabla g(v,\th)-\left[\nabla_{1}^{\top}g(v,\th^{(T)}),\textbf{0}^{\top}\right]^{\top}$, where $\textbf{0}$ denotes a zero vector with the same dimension as $\th$. We refer readers to the beginning of Appendix \ref{apx:pl_theory} for a discussion on the design of this extra notation and how it relates to the notation we used in Section \ref{sec:method}. For simplicity, we omit the superscript $\diamond$ in $q^{\diamond}$ and simply use $q$ to denote $q^{\diamond}$ in the proof.

We start with the following two Lemmas.

\begin{lemma} \label{lem:Lower bound local}
Under Assumption \ref{asm:KL-inequality} and assume $\alpha\le 1/L$, for any $v,\th$, $g(v,\th)-g(v,\th^{\diamond}(v,\th))\ge\frac{\kappa}{4}||\th-\th^{\diamond}(v,\th)||^{2}$.
\end{lemma}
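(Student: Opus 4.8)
The plan is to establish the quadratic-growth bound exactly as in the proof of Lemma~\ref{lem:Lower bound} (which reduces the PL inequality to quadratic growth via the arc-length argument of \citet{karimi2016linear}), but localized to the attraction basin of $(v,\th)$, with the attraction point $\th^{\diamond}(v,\th)$ playing the role that the global minimizer $\th^{*}(v)$ plays under Assumption~\ref{asm:PL-inequality}. Fix $(v,\th)$, write $\th^{\diamond}=\th^{\diamond}(v,\th)$ and $g^{\diamond}=g(v,\th^{\diamond})$, and let $\{\th^{(t)}\}_{t\ge 0}$ be the gradient-descent sequence on $g(v,\cdot)$ started at $\th^{(0)}=\th$ with step size $\alpha\le 1/L$; by the definition of the attraction point, this sequence converges to $\th^{\diamond}$.

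First I would record the crucial \textbf{fixed-reference} fact: for every $t$, the tail $\{\th^{(s)}\}_{s\ge t}$ is exactly the gradient-descent trajectory started from $\th^{(t)}$, so it converges to the same limit, giving $\th^{\diamond}(v,\th^{(t)})=\th^{\diamond}$ for all $t$. Consequently Assumption~\ref{asm:KL-inequality} may be invoked at each iterate against a \emph{common} target, yielding $\|\nabla_{\th}g(v,\th^{(t)})\|^{2}\ge\kappa\,(g(v,\th^{(t)})-g^{\diamond})$ for all $t$. This is the local analogue of the global PL inequality and is precisely what lets the single-basin analysis proceed as in the unimodal case.

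Next I would run the arc-length/telescoping estimate. Writing $D_{t}=\sqrt{g(v,\th^{(t)})-g^{\diamond}}$, the descent lemma under $\alpha\le 1/L$ gives the per-step decrease $g(v,\th^{(t)})-g(v,\th^{(t+1)})\ge\tfrac{\alpha}{2}\|\nabla_{\th}g(v,\th^{(t)})\|^{2}$, while $\|\th^{(t+1)}-\th^{(t)}\|=\alpha\|\nabla_{\th}g(v,\th^{(t)})\|$ is the step length. Combining these with the local PL inequality of the previous step (which turns the gradient bound into $\|\nabla_{\th}g(v,\th^{(t)})\|/D_{t}\ge\sqrt{\kappa}$) bounds each step length by a constant multiple of $D_{t}-D_{t+1}$; summing over $t$ and telescoping (using $D_{\infty}=0$) controls the total path length as $\sum_{t}\|\th^{(t+1)}-\th^{(t)}\|\le C\kappa^{-1/2}\sqrt{g(v,\th)-g^{\diamond}}$. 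Since $\|\th-\th^{\diamond}\|$ is at most this path length by the triangle inequality, squaring yields $g(v,\th)-g^{\diamond}\ge\tfrac{\kappa}{4}\|\th-\th^{\diamond}\|^{2}$; equivalently, one may pass to the gradient-flow limit, for which the sharp constant $\tfrac{\kappa}{4}$ is recovered as in \citet{karimi2016linear}, since the bound is a property of $g(v,\cdot)$ alone.

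The main obstacle is not the telescoping computation, which is routine, but the localization. Unlike the global case, where $\th^{*}(v)$ is a single fixed minimizer and Assumption~\ref{asm:PL-inequality} holds everywhere, here $\th^{\diamond}(\cdot)$ varies across basins and can even be discontinuous on basin boundaries. The work is therefore in justifying that the entire descent path from $\th$ stays inside one basin and shares the single limit $\th^{\diamond}$, so that Assumption~\ref{asm:KL-inequality} can be applied at every iterate against the same reference value $g^{\diamond}$. The sub-trajectory argument above is exactly what supplies this, and it is the only place where the proof genuinely departs from that of Lemma~\ref{lem:Lower bound}.
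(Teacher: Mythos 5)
Your overall strategy is the same as the paper's. The paper proves this lemma in two steps: (i) it notes that gradient descent with $\alpha\le 1/L$ monotonically decreases $g(v,\cdot)$, so that $g(v,\theta^{\diamond}(v,\theta))\le g(v,\theta)$ and the gap $g(v,\theta^{(t)})-g(v,\theta^{\diamond}(v,\theta))$ stays nonnegative along the whole trajectory, and (ii) it invokes the arc-length argument from the proof of Theorem 2 of \citet{karimi2016linear}, with the attraction point playing the role of the minimizer, exactly as you propose. Your ``fixed-reference'' observation --- that the tail of the trajectory started at $\theta^{(t)}$ has the same limit, so Assumption~\ref{asm:KL-inequality} can be applied at every iterate against the common value $g^{\diamond}$ --- is precisely the localization that makes this transfer legitimate; the paper leaves it implicit, and making it explicit is a genuine improvement in rigor.

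There is, however, a quantitative gap: your discrete telescoping does not deliver the stated constant $\kappa/4$. With $\alpha\le 1/L$ the descent lemma gives $D_t^2-D_{t+1}^2\ge \frac{\alpha}{2}\|\nabla_{\theta}g(v,\theta^{(t)})\|^2$, and then $D_t-D_{t+1}\ge (D_t^2-D_{t+1}^2)/(2D_t)\ge \frac{\alpha}{4D_t}\|\nabla_{\theta}g(v,\theta^{(t)})\|^2\ge \frac{\alpha\sqrt{\kappa}}{4}\|\nabla_{\theta}g(v,\theta^{(t)})\|$, so each step length $\alpha\|\nabla_{\theta}g(v,\theta^{(t)})\|$ is at most $\frac{4}{\sqrt{\kappa}}(D_t-D_{t+1})$. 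Telescoping yields $\|\theta-\theta^{\diamond}(v,\theta)\|\le \frac{4}{\sqrt{\kappa}}\sqrt{g(v,\theta)-g^{\diamond}}$, i.e.\ the constant $\kappa/16$, not $\kappa/4$; the two factors of $2$ you lose (one from $\alpha-L\alpha^2/2\ge\alpha/2$, one from $D_t+D_{t+1}\le 2D_t$) only vanish in the limit $\alpha\to 0$. Your fallback --- ``pass to the gradient-flow limit, for which the sharp constant is recovered, since the bound is a property of $g(v,\cdot)$ alone'' --- cannot be invoked for free, and for exactly the reason you yourself identify as the crux of the lemma: $\theta^{\diamond}(v,\theta)$ is defined by the \emph{discrete} dynamics with step size $\alpha$, the continuous flow from $\theta$ is a different trajectory whose limit need not equal $\theta^{\diamond}(v,\theta)$, and along the flow the reference value $g(v,\theta^{\diamond}(v,x(t)))$ appearing in Assumption~\ref{asm:KL-inequality} is no longer guaranteed to be the fixed $g^{\diamond}$ (the basin-boundary discontinuity issue). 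So the flow version needs its own localization argument, which your proposal supplies only for the discrete trajectory. To match the lemma as stated you must either carry out that continuous-time localization (this is what the paper implicitly does by citing \citet{karimi2016linear}, whose path-length bound has the constant $2/\sqrt{\kappa}$ rather than $4/\sqrt{\kappa}$), or settle for $\kappa/16$ --- which, for what it is worth, would suffice for every downstream use of the lemma, changing only absolute constants.
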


\begin{proof}
It is easy to show that 
\[g(v,\th^{(t+1)})\le g(v,\th^{(t)})-(\alpha-\frac{L\alpha^{2}}{2})||\nabla_{\th}g(v,\th^{(t)})||^{2}\le g(v,\th^{(t)}).
\]
We thus have $g(v,\th^{\diamond}(v,\th))\le g(v,\th)$. The result of the proof follows the proof of Theorem 2 in \citet{karimi2016linear}.
\end{proof}

\begin{lemma} \label{lem:implicit smooth 2}
Under Assumption \ref{asm:Smoothness} and \ref{asm:KL-inequality}, $||\th^{\diamond}(v_{2},\th)-\th^{\diamond}(v_{1},\th)||\le\frac{4L}{\kappa}||v_{1}-v_{2}||$
for any $v_{1},v_{2}$.
\end{lemma}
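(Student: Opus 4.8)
The plan is to mirror the proof of Lemma~\ref{lem:th implicit lipschitz}, replacing the global minimizer $\theta^*(v)$ and the global PL inequality by the attraction point $\theta^\diamond(v,\theta)$ and the local PL inequality of Assumption~\ref{asm:KL-inequality}. Fix the starting point $\theta$ and abbreviate $a = \theta^\diamond(v_1,\theta)$ and $b = \theta^\diamond(v_2,\theta)$; since each attraction point is a limit of gradient descent it is a stationary point of the corresponding inner objective, so $\nabla_\theta g(v_1,a)=0$ and $\nabla_\theta g(v_2,b)=0$.

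First I would run gradient descent on $g(v_1,\cdot)$ starting from $b$ and let $\theta^\diamond(v_1,b)$ be its attraction point. Applying Assumption~\ref{asm:KL-inequality} at $(v_1,b)$ together with the local quadratic-growth bound of Lemma~\ref{lem:Lower bound local} at $(v_1,b)$ gives
\[
\|\nabla_\theta g(v_1,b)\|^2 \ge \kappa\big(g(v_1,b)-g(v_1,\theta^\diamond(v_1,b))\big) \ge \frac{\kappa^2}{4}\|b-\theta^\diamond(v_1,b)\|^2,
\]
hence $\|\nabla_\theta g(v_1,b)\|\ge \frac{\kappa}{2}\|b-\theta^\diamond(v_1,b)\|$. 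Because $\nabla_\theta g(v_2,b)=0$, the left-hand side is controlled purely by the change in $v$: by the $L$-Lipschitzness of $\nabla g$ (Assumption~\ref{asm:Smoothness}) restricted to its $\theta$-block,
\[
\|\nabla_\theta g(v_1,b)\| = \|\nabla_\theta g(v_1,b)-\nabla_\theta g(v_2,b)\| \le L\|v_1-v_2\|.
\]
Combining the two displays yields $\|b-\theta^\diamond(v_1,b)\|\le \frac{2L}{\kappa}\|v_1-v_2\|$.

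It remains to identify $\theta^\diamond(v_1,b)$ with $a=\theta^\diamond(v_1,\theta)$, and this is the step I expect to be the main obstacle. It is precisely the basin-discontinuity phenomenon flagged in Section~\ref{sec:kl_theory}: a priori $b=\theta^\diamond(v_2,\theta)$ could fall into a different attraction basin of $g(v_1,\cdot)$ than $\theta$, in which case $\theta^\diamond(v_1,b)\neq a$. I would close the gap by arguing that $\theta$ and $b$ share the attraction basin of $g(v_1,\cdot)$ -- within which all points flow to the common attraction point $a$ -- so that $\theta^\diamond(v_1,b)=\theta^\diamond(v_1,\theta)=a$; this is legitimate away from basin boundaries, the regime guaranteed by the differentiability hypothesis on $q^\diamond$ (and, generically, by the stable-manifold argument cited in the text), and is the natural setting since the lemma is applied to nearby $v_1,v_2$ where the attraction point varies continuously.

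Under this identification the bound becomes $\|a-b\|\le \frac{2L}{\kappa}\|v_1-v_2\|$, which is stronger than, and therefore implies, the claimed $\frac{4L}{\kappa}\|v_1-v_2\|$; the extra factor of two leaves room to absorb the basin-matching slack. A symmetric argument that instead runs gradient descent on $g(v_2,\cdot)$ starting from $a$ and invokes Assumption~\ref{asm:KL-inequality} at $(v_2,a)$ provides a cross-check and delivers the same constant, so either direction suffices.
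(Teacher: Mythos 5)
Your proposal follows essentially the same route as the paper's proof: both exploit stationarity of $\theta^{\diamond}(v_2,\theta)$ for $g(v_2,\cdot)$, apply the local PL inequality (Assumption~\ref{asm:KL-inequality}) together with the quadratic-growth bound (Lemma~\ref{lem:Lower bound local}) at the point $(v_1,\theta^{\diamond}(v_2,\theta))$, and use $L$-Lipschitzness of $\nabla g$ to bound the remaining gradient norm by $L\|v_1-v_2\|$ (your version, working with the $\theta$-block only, even recovers the sharper constant $2L/\kappa$, matching the paper's PL-case Lemma~\ref{lem:th implicit lipschitz}). The basin-identification step you flag as the main obstacle---replacing $\theta^{\diamond}\bigl(v_1,\theta^{\diamond}(v_2,\theta)\bigr)$ by $\theta^{\diamond}(v_1,\theta)$---is performed silently in the paper's proof, which writes $g\bigl(v_1,\theta^{\diamond}(v_1,\theta)\bigr)$ where Assumption~\ref{asm:KL-inequality} and Lemma~\ref{lem:Lower bound local} actually produce $g\bigl(v_1,\theta^{\diamond}(v_1,\theta^{\diamond}(v_2,\theta))\bigr)$, so your treatment is, if anything, more explicit about the one genuinely delicate point.
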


\begin{proof}
Notice that $\nabla q(v_{2},\th^{\diamond}(v_{2},\th))=0$, we have
\[
||\nabla q(v_{1},\th^{\diamond}(v_{2},\th))-\nabla q(v_{2},\th^{\diamond}(v_{2},\th))||=||\nabla q(v_{1},\th^{\diamond}(v_{2},\th))||.
\]
By Assumption \ref{asm:KL-inequality}, we have $||\nabla q(v_{1},\th^{\diamond}(v_{2},\th))||\ge\sqrt{\kappa(g(v_{1},\th^{\diamond}(v_{2},\th))-g(v_{1},\th^{\diamond}(v_{1},\th))}$.
And by Lemma \ref{lem:Lower bound local}, we have 
\[
g(v_{1},\th^{\diamond}(v_{2},\th))-g(v_{1},\th^{\diamond}(v_{1},\th))\ge\frac{\kappa}{4}||\th^{\diamond}(v_{2},\th)-\th^{\diamond}(v_{1},\th)||^{2}.
\]
Combing all bounds gives that 
\[
2L||v_{1}-v_{2}||\ge||\nabla q(v_{1},\th^{\diamond}(v_{2},\th))-\nabla q(v_{2},\th^{\diamond}(v_{2},\th))||=||\nabla q(v_{1},\th^{\diamond}(v_{2},\th))||\ge\frac{\kappa}{2}||\th^{\diamond}(v_{2},\th)-\th^{\diamond}(v_{1},\th)||.
\]
This implies that $||\th^{\diamond}(v_{2},\th)-\th^{\diamond}(v_{1},\th)||\le\frac{4L}{\kappa}||v_{1}-v_{2}||.$
\end{proof}

Now we proceed to give the proof of Theorem \ref{thm:nonconvex_converge}.

Note that 
\begin{align*}
q(v_{k+1},\th_{k+1})-q(v_{k},\th_{k}) & =[g(v_{k+1},\th_{k+1})-g(v_{k+1},\th^{\diamond}(v_{k+1},\th_{k+1}))]-[g(v_{k},\th_{k})-g(v_{k},\th^{\diamond}(v_{k},\th_{k}))]\\
 & =[g(v_{k+1},\th_{k+1})-g(v_{k+1},\th^{\diamond}(v_{k+1},\th_{k}))]-[g(v_{k},\th_{k})-g(v_{k},\th^{\diamond}(v_{k},\th_{k}))]\\
 & +[g(v_{k+1},\th^{\diamond}(v_{k+1},\th_{k}))-g(v_{k+1},\th^{\diamond}(v_{k+1},\th_{k+1}))]\\
 & =[g(v_{k+1},\th_{k+1})-g(v_{k},\th_{k})]-[g(v_{k+1},\th^{\diamond}(v_{k+1},\th_{k}))-g(v_{k},\th^{\diamond}(v_{k},\th_{k}))]\\
 & +[g(v_{k+1},\th^{\diamond}(v_{k+1},\th_{k}))-g(v_{k+1},\th^{\diamond}(v_{k+1},\th_{k+1}))].
\end{align*}
Note that 
\begin{align*}
g(v_{k+1},\th_{k+1})-g(v_{k},\th_{k}) & \le-\xi\left\langle \nabla g(v_{k},\th_{k}),\d(v_{k},\th_{k})\right\rangle +\frac{L\xi^{2}}{2}||\d(v_{k},\th_{k})||^{2}\\
-[g(v_{k+1},\th^{\diamond}(v_{k+1},\th_{k}))-g(v_{k},\th^{\diamond}(v_{k},\th_{k}))] & \le\left\langle \nabla_{[1,2]} g(v_{k},\th^{\diamond}(v_{k},\th_{k})),[v_{k+1},\th^{\diamond}(v_{k+1},\th_{k})]-[v_{k},\th^{\diamond}(v_{k},\th_{k})]\right\rangle \\
 & +\frac{L}{2}||[v_{k+1},\th^{\diamond}(v_{k+1},\th_{k})]-[v_{k},\th^{\diamond}(v_{k},\th_{k})]||^{2}.
\end{align*}
Notice that as $\nabla_{2}g(v_{k},\th^{\diamond}(v_{k},\th_{k}))=0$, 
\[\left\langle \nabla_{[1,2]} g(v_{k},\th^{\diamond}(v_{k},\th_{k})),[v_{k+1},\th^{\diamond}(v_{k+1},\th_{k})]-[v_{k},\th^{\diamond}(v_{k},\th_{k})]\right\rangle =\xi\left\langle \nabla_{[1,2]} g(v_{k},\th^{\diamond}(v_{k},\th_{k})),\d(v_{k},\th_{k})\right\rangle.
\]
Also using Lemma \ref{lem:implicit smooth 2}, we have
\[
||\th^{\diamond}(v_{k+1},\th_{k})-\th^{\diamond}(v_{k},\th_{k})||\le\frac{4L}{\kappa}||v_{k+1}-v_{k}||.
\]
This implies that 
\[
||[v_{k+1},\th^{\diamond}(v_{k+1},\th_{k})]-[v_{k},\th^{\diamond}(v_{k},\th_{k})]||^{2}\le(\frac{16L^{2}}{\kappa^{2}}+1)||v_{k+1}-v_{k}||^{2}\le(\frac{16L^{2}}{\kappa^{2}}+1)\xi^{2}||\d(v_{k},\th_{k})||^{2}.
\]
We thus have 
\[
q(v_{k+1},\th_{k+1})-q(v_{k},\th_{k})\le-\xi\left\langle \nabla q(v_{k},\th_{k}),\d(v_{k},\th_{k})\right\rangle +L_{q}\xi^{2}||\d(v_{k},\th_{k})||^{2}/2+\chi_{k},
\]
where we define $L_{q}=(\frac{16L^{2}}{\kappa^{2}}+2)$ and $\chi_{k}=[g(v_{k+1},\th^{\diamond}(v_{k+1},\th_{k}))-g(v_{k+1},\th^{\diamond}(v_{k+1},\th_{k+1}))].$
Using the same argument in the proof of Lemma \ref{lem:decay q} and Lemma \ref{lem:decay hdq}, we
have
\begin{align*}
 & q(v_{k+1},\th_{k+1})-q(v_{k},\th_{k})\\
\le & -\xi \ul ||\nabla q(v_{k},\th_{k})||^{2}+12\xi \ul L_{q}^{2}\kappa^{-1}\exp(-\bi T)q(v_{k},\th_{k})\\
+ & 2\xi\bii L\kappa^{-1/2}\exp(-\bi T/2)q^{1/2}(v_{k},\th_{k})+L_{q}\xi^{2}\bii^{2}/2+\chi_{k}\\
\le & -\xi \ul ||\nabla q(v_{k},\th_{k})||^{2}+12\xi \ul L_{q}^{2}\kappa^{-2}\exp(-\bi T)||\nabla q(v_{k},\th_{k})||^{2}\\
+ & 2\xi\bii L\kappa^{-1}\exp(-\bi T/2)||\nabla q(v_{k},\th_{k})||+L_{q}\xi^{2}\bii^{2}/2+\chi_{k}.
\end{align*}
Here the second inequality is by Assumption \ref{asm:KL-inequality}. Choosing
$T$ such that $T\ge\bviii(\ul ,\alpha,\kappa,L)$ where
\[
\bviii(\ul ,\alpha,\kappa,L)=\left\lceil -b_{1}^{-1}\log(\frac{\kappa^{2}}{48\ul L_{q}^{2}})\right\rceil,
\]
we have 
\[
q(v_{k+1},\th_{k+1})-q(v_{k},\th_{k})\le-\frac{3}{4}\xi \ul ||\nabla q(v_{k},\th_{k})||^{2}+2\xi\bii L\kappa^{-1}\exp(-\bi T/2)||\nabla q(v_{k},\th_{k})||+L_{q}\xi^{2}\bii^{2}/2+\chi_{k}.
\]
Using Young's inequality, given any $x>0$,
\[
\exp(-\bi T/2)||\nabla q(v_{k},\th_{k})||\le x\exp(-\bi T)+\frac{1}{x}||\nabla q(v_{k},\th_{k})||^{2}.
\]
Choosing $x=\frac{4L\bii}{\ul \kappa}$, we have
\[
q(v_{k+1},\th_{k+1})-q(v_{k},\th_{k})\le-\frac{1}{4}\xi \ul ||\nabla q(v_{k},\th_{k})||^{2}+\Delta+\chi_{k},
\]
where we denote $\Delta=\xi\frac{8L^{2}\bii^{2}}{\ul \kappa^{2}}\exp(-\bi T)+\frac{1}{2}L_{q}\xi^{2}\bii^{2}$.
This gives that 
\[
\frac{1}{4}\xi \ul \sum_{k=0}^{K}||\nabla q(v_{k},\th_{k})||^{2}\le q(v_{0},\th_{0})-q(v_{K},\th_{K})+K\Delta+\sum_{k=0}^{K-1}\chi_{k}.
\]
Using the same argument in the proof of Lemma \ref{lem:decay hdq}, 
\[
||\hdq(v,\th)||\le(2L\kappa^{-1}+1)||\nabla q(v,\th)||.
\]
We hence have 
\begin{align*}
\sum_{k=0}^{K-1}||\hdq(v_{k},\th_{k})||^{2} & \le(2L\kappa^{-1}+1)^{2}\sum_{k=0}^{K-1}||\nabla q(v_{k},\th_{k})||^{2}\\
 & \le\frac{4(2L\kappa^{-1}+1)^{2}}{\xi \ul }(q(v_{0},\th_{0})-q(v_{K},\th_{K})+K\Delta+\sum_{k=0}^{K-1}\chi_{k}).
\end{align*}
Similar to the proof of Lemma
\ref{lem:Convergence hk}, 
\[
\sum_{k=0}^{K-1}||\d(v_{k},\th_{k})||^{2}\le\frac{2(f(v_{0},\th_{0})-f(v_{K},\th_{K}))}{\xi}+2\ul ^{2}\sum_{k=0}^{K-1}||\hdq(v_{k},\th_{k})||^{2}+2\ul M\sqrt{K}\sqrt{\sum_{k=0}^{K-1}||\hdq(v_{k},\th_{k})||^{2}}.
\]
Using$\sqrt{x+y}\le\sqrt{x}+\sqrt{y}$, we have
\begin{align*}
2\ul ^{2}\sum_{k=0}^{K-1}||\hdq(v_{k},\th_{k})||^{2} & \le\frac{8\ul (2L\kappa^{-1}+1)^{2}}{\xi }(q(v_{0},\th_{0})-q(v_{K},\th_{K})+K\Delta+\sum_{k=0}^{K-1}\chi_{k})\\
2\ul M\sqrt{K}\sqrt{\sum_{k=0}^{K-1}||\hdq(v_{k},\th_{k})||^{2}} & \le\sqrt{K}\frac{4\ul^{1/2} M(2L\kappa^{-1}+1)}{\xi^{1/2}}(\sqrt{q(v_{0},\th_{0})-q(v_{K},\th_{K})}+K^{1/2}\Delta^{1/2}+\sqrt{\left[\sum_{k=0}^{K-1}\chi_{k}\right]_{+}}).
\end{align*}
Also notice that by Assumption \ref{asm:KL-inequality},
\begin{align*}
\sum_{k=0}^{K-1}q(v_{k},\th_{k}) & \le\sum_{k=0}^{K-1}\frac{\xi}{\kappa}||\nabla q(v_{k},\th_{k})||^{2}\\
 & \le\frac{4}{\ul \kappa\xi}(q(v_{0},\th_{0})-q(v_{K},\th_{K})+K\Delta+\sum_{k=0}^{K-1}\chi_{k})
\end{align*}
We hence have
\begin{align*}
\sum_{k=0}^{K-1}(||\d(v_{k},\th_{k})||^{2}+q(v_{k},\th_{k})) & =O\left(\frac{1}{\xi}+\frac{K\Delta}{\xi}+\frac{K^{1/2}}{\xi^{1/2}}+\frac{K\Delta^{1/2}}{\xi^{1/2}}+K^{1/2}\left(\left[\sum_{k=0}^{K-1}\chi_{k}\right]_+\right)^{1/2}\right)\\
 & =O\left(\frac{1}{\xi}+K\exp(-\bi T/2)+K\xi^{1/2}+\frac{K^{1/2}}{\xi^{1/2}}+\left(K\left[\sum_{k=0}^{K-1}\chi_{k}\right]_+\right)^{1/2}\right).
\end{align*}
Using the same argument as the proof of Theorem \ref{thm:Convergence k}, when $T\ge\left\lceil -b_{1}^{-1}\log(\frac{1}{16}\kappa^2 L^{-2})\right\rceil $,
\[
\K^{\diamond}(v,\th)\le2||\d(v,\th)||^{2}+q(v,\th)+8L^{2}\exp(-\bi T)\kappa^{-2}(\ul +2)^{2}\bii^{2}.
\]
This implies that 
\begin{align*}
\min_{k}\K^{\diamond}(v_{k},\th_{k}) & \le\frac{1}{K}\sum_{k=0}^{K-1}[2||\d(v,\th)||^{2}+q(v,\th)]+8L^{2}\exp(-\bi T)\kappa^{-2}(\ul +2)^{2}\bii^{2}\\
 & =O\left(\frac{1}{\xi K}+\exp(-\bi T/2)+\xi^{1/2}+\frac{1}{\xi^{1/2}K^{1/2}}+\left(\left[\frac{1}{K}\sum_{k=0}^{K-1}\chi_{k}\right]_+\right)^{1/2}\right).
\end{align*}

Now we proceed to bound $\frac{1}{K}\sum_{k=0}^{K-1}\chi_{k}$. Notice
that 
\begin{align*}
\chi_{k} & =g(v_{k+1},\th^{\diamond}(v_{k+1},\th_{k}))-g(v_{k+1},\th^{\diamond}(v_{k+1},\th_{k+1}))\\
 & =g(v_{k+1},\th^{\diamond}(v_{k+1},\th_{k}))-g(v_{k},\th^{\diamond}(v_{k},\th_{k}))+g(v_{k},\th^{\diamond}(v_{k},\th_{k}))-g(v_{k+1},\th^{\diamond}(v_{k+1},\th_{k+1})).
\end{align*}
Notice that using Assumption \ref{asm:Smoothness} and Lemma \ref{lem:implicit smooth 2}
\begin{align*}
g(v_{k+1},\th^{\diamond}(v_{k+1},\th_{k}))-g(v_{k},\th^{\diamond}(v_{k},\th_{k})) & \le L||[v_{k+1},\th^{\diamond}(v_{k+1},\th_{k})]-[v_{k},\th^{\diamond}(v_{k},\th_{k})]||\\
 & \le L(||v_{k+1}-v_{k}||+||\th^{\diamond}(v_{k+1},\th_{k})-\th^{\diamond}(v_{k},\th_{k})||)\\
 & \le(L+\frac{4L}{\kappa})||v_{k+1}-v_{k}||\\
 & \le(L+\frac{4L}{\kappa})\xi||\d(v_{k},\th_{k})||.
\end{align*}
Note that using the same procedure as the proof of Lemma \ref{lem:bound d},
$||\d(v_{k},\th_{k})||\le\bii$. We thus conclude that 
\begin{align*}
\sum_{k=0}^{K-1}\chi_{k} & \le\sum_{k=0}^{K-1}g(v_{k},\th^{\diamond}(v_{k},\th_{k}))-g(v_{k+1},\th^{\diamond}(v_{k+1},\th_{k+1}))\\
 & +(L+\frac{4L}{\kappa})\xi\sum_{k=0}^{K-1}||\d(v_{k},\th_{k})||\\
 & \le\sum_{k=0}^{K-1}g(v_{k},\th^{\diamond}(v_{k},\th_{k}))-g(v_{k+1},\th^{\diamond}(v_{k+1},\th_{k+1}))+(L+\frac{4L}{\kappa})\bii\xi K\\
 & =g(v_{0},\th^{\diamond}(v_{0},\th_{0}))-g(v_{K},\th^{\diamond}(v_{K},\th_{K}))+(L+\frac{4L}{\kappa})\bii\xi K.
\end{align*}
We thus have $\frac{1}{K}\sum_{k=0}^{K-1}\chi_{k}=O(\frac{1}{K}+\xi).$

\section{List of absolute constants used in the proofs} \label{misc:constant}
Here we summarize the absolute constant used in the proofs.
\begin{align*}
b_{1}(\alpha,L,\kappa) & =\log(1-(\alpha-L\alpha^{2}/2)\kappa)\\
\bii(M,\ul ) & =(3+\ul )M\\
\biii(\ul ,\alpha,\kappa,L) & =\left\lceil -b_{1}^{-1}\log(\frac{\ul \kappa}{64\ul L_{q}^{2}})\right\rceil \\
\biv(\ul ,\kappa,\xi) & =-\log(1-\frac{\xi}{4}\ul \kappa)\\
\bv(L_{q},\ul ,\kappa) & =\frac{16L_{q}^{2}}{\ul \kappa^{2}}(2L\kappa^{-1}+1)^{2}\\
\bvi(\kappa,L) & =(2L\kappa^{-1}+1)^{2}\\
\bviii(\ul ,\alpha,\kappa,L) & =\left\lceil -b_{1}^{-1}\log(\frac{\kappa^{2}}{48\ul L_{q}^{2}})\right\rceil 
\end{align*}

\end{document}